\documentclass[preprint,3p,times]{elsarticle}
\RequirePackage{multirow,booktabs,subfigure,color,array,hhline,makecell}
\usepackage{amssymb}
\usepackage{amsmath}
\usepackage{graphicx}
\usepackage{amsthm}
\usepackage{mathrsfs}
\usepackage{indentfirst}
\usepackage[colorlinks,citecolor=blue,urlcolor=blue]{hyperref}
\allowdisplaybreaks
\usepackage[table]{xcolor}
\usepackage{tikz-network}
\usepackage{pgf}
\usepackage{tikz}
\usepackage{subfigure}
\usetikzlibrary{arrows, decorations.pathmorphing, backgrounds, positioning, fit, petri, automata}
\usetikzlibrary{shadows,arrows,positioning}
\RequirePackage{algorithm,algpseudocode}
\allowdisplaybreaks
\usepackage{changes}

\newtheorem{thm}{Theorem}
\newtheorem{defin}{Definition}
\newtheorem{lem}{Lemma}
\newtheorem{assum}{Assumption}
\newtheorem{rem}{Remark}

\allowdisplaybreaks[4]

\AtBeginDocument{%
	\providecommand\BibTeX{{%
			\normalfont B\kern-0.5em{\scshape i\kern-0.25em b}\kern-0.8em\TeX}}}
\journal{~}
\begin{document}
\begin{frontmatter}
\title{Goodness-of-Fit Tests for Latent Class Models with Ordinal Categorical Data}

\author[label1]{Huan Qing\corref{cor1}}
\ead{qinghuan@u.nus.edu\&qinghuan@cqut.edu.cn}
\cortext[cor1]{Corresponding author.}
\address[label1]{School of Economics and Finance, Chongqing University of Technology, Chongqing, 400054, China}
\begin{abstract}
Ordinal categorical data are widely collected in psychology, education, and other social sciences, appearing commonly in questionnaires, assessments, and surveys. Latent class models provide a flexible framework for uncovering unobserved heterogeneity by grouping individuals into homogeneous classes based on their response patterns. A fundamental challenge in applying these models is determining the number of latent classes, which is unknown and must be inferred from data. In this paper, we propose one test statistic for this problem. The test statistic centers the largest singular value of a normalized residual matrix by a simple sample-size adjustment. Under the null hypothesis that the candidate number of latent classes is correct, its upper bound converges to zero in probability. Under an under‑fitted alternative, the statistic itself exceeds a fixed positive constant with probability approaching one. This sharp dichotomous behavior of the test statistic yields two sequential testing algorithms that consistently estimate the true number of latent classes. Extensive experimental studies confirm the theoretical findings and demonstrate their accuracy and reliability in determining the number of latent classes.
\end{abstract}
\begin{keyword}
Ordinal categorical data\sep latent class model \sep goodness-of-fit \sep estimation of number of latent classes
\end{keyword}
\end{frontmatter}
\section{Introduction}\label{sec:intro}
Ordinal categorical data are commonly encountered in psychology, education, political science, and many other fields. In psychological surveys, respondents rate their agreement on a Likert scale, with options coded as \(0,1,2,3,4\) representing “strongly disagree,” “disagree,” “neutral,” “agree,” and “strongly agree.” In educational assessments, student performance is often classified into ordered proficiency levels such as \(0\) (below basic), \(1\) (basic), \(2\) (proficient), and \(3\) (advanced). In political polls, individuals indicate their level of support for a policy using a similar ordered scale. Such data can be organized into an \(N\times J\) response matrix \(R\), where \(N\) is the number of subjects (individuals), \(J\) is the number of items, and each entry \(R(i,j)\) records subject \(i\)'s response to item \(j\). Responses take values in \(\{0,1,\ldots,M\}\), with \(0\) denoting the lowest intensity and \(M\) the highest. When \(M=1\), the data are dichotomous (also known as binary); when \(M\geq 2\), they are polytomous. A key feature of ordinal categorical data is that while the categories follow a natural order, the distances between them are not necessarily equal or interpretable. Any valid statistical analysis must respect this ordinal nature without imposing unwarranted assumptions about equal spacing \citep{agresti2012categorical}.

The latent class model (LCM) provides a flexible and interpretable framework for uncovering latent population structure from such data \citep{goodman1974exploratory}. This model is widely used in psychological, behavioral, and social sciences \citep{hagenaars2002applied,wang2011latent,collins2013latent,gu2020partial}. The LCM posits that the population consists of \(K\) distinct latent classes (also known as groups), and that conditional on class membership, an individual's responses to different items are independent. This assumption accounts for the associations observed among items, as any dependence between responses arises solely from their shared latent class membership. For ordinal responses, a natural specification models each item response as a Binomial random variable with \(M\) trials and a class‑specific success probability. When \(M=1\), this reduces to the Bernoulli distribution commonly used for binary data \citep{chen2017regularized,zeng2023tensor,lyu2025spectral}. For \(M\geq 2\), the Binomial formulation directly captures the polytomous nature by allowing the expected response to increase with the success probability, thereby shifting the distribution toward higher categories. As discussed extensively in the literature on categorical data analysis \citep{agresti2012categorical}, the Binomial distribution is a fundamental tool for modeling discrete responses, and its mean is directly linked to the underlying probability of success. Historically, parameter estimation has relied on the Expectation‑Maximization (EM) algorithm, which treats class memberships as missing data \citep{dempster1977maximum}. However, the EM algorithm can be computationally demanding for large datasets and is sensitive to initialization, often requiring multiple random starts to avoid local optima \citep{biernacki2003choosing}. To address these limitations, alternative methods have been developed in recent years, including spectral clustering that exploits the low‑rank structure of the data matrix \citep{qing2024finding,qing2024grade,qing2025mixed,lyu2025degree,lyu2025spectral}, tensor decomposition methods that operate on low‑order moments \citep{zeng2023tensor}, and regularized estimation techniques that perform simultaneous parameter estimation and model selection \citep{chen2017regularized}. Other frameworks that can model ordinal categorical data with polytomous responses exist, such as the general diagnostic model proposed by \citep{von2008general}.

A fundamental and unresolved challenge in applying LCMs is determining the number of latent classes \(K\). In real‑world applications, \(K\) is rarely known a priori and must be inferred from the observed response matrix \(R\). Selecting too few classes can obscure meaningful heterogeneity, while selecting too many can lead to overfitting and scientifically spurious conclusions. Traditional approaches for selecting \(K\) include information criteria such as the Akaike Information Criterion (AIC) and Bayesian Information Criterion (BIC) \citep{akaike1998information, schwarz1978estimating}, and likelihood‑ratio tests, typically implemented with the Expectation‑Maximization (EM) algorithm. These methods inherit the computational burden of the EM algorithm, making them expensive when many candidate \(K\) values must be evaluated. Moreover, their theoretical properties in high‑dimensional settings where the number of items \(J\) grows with the sample size \(N\) remain largely unexplored. The consistency of these criteria often relies on regularity conditions that may not hold in complex latent variable models—a point examined by \citet{keribin2000consistent} in the context of mixture models, which include LCMs as a special case. More recent approaches address these limitations. Regularized latent class analysis, proposed by \citet{chen2017regularized}, learns the latent structure by shrinking small parameter differences toward zero, thereby revealing the underlying number of latent classes. Their framework uses a generalized information criterion (GIC) to select both the regularization parameter and the number of classes. Spectral methods offer another direction. By exploiting the low‑rank structure of the expected response matrix, the number of latent classes can be estimated by thresholding the singular values of the data matrix. For binary responses, \citet{lyu2025spectral} developed such a method, which is computationally efficient and theoretically grounded, but yields only a point estimate of \(K\) without a formal goodness-of-fit assessment. While promising, these existing methods lack rigorous goodness-of-fit guarantees, especially for ordinal categorical data. This gap motivates the present work.

In this paper, we introduce novel goodness‑of‑fit testing procedures that directly addresses the problem of estimating \(K\) in LCMs for ordinal categorical data. Our approaches transform a challenging model selection problem into a sequence of simple spectral checks, offering both computational efficiency and statistical rigor. The main contributions of this work are as follows:
\begin{itemize}
    \item We develop a goodness-of-fit framework for determining the number of latent classes in ordinal categorical data under the latent class models. The framework introduces a test statistic constructed from a normalized residual matrix by a simple sample-size adjustment. We prove that its upper bound converges to zero in probability under the null hypothesis that the candidate number of latent classes \(K_0\) equals the true \(K\), while the statistic itself exceeds a fixed positive constant with probability approaching one under an under-fitted alternative. These results are obtained using matrix concentration inequalities and a perturbation analysis that controls the error from estimated parameters.
    \item Based on the dichotomous behavior of the test statistic, we develop two sequential testing procedures that consistently estimates the true number of latent classes under the latent class models. The consistency theorems specify the required growth rates of the thresholds relative to the sample dimensions.
    \item We evaluate both procedures through extensive simulations. The results show that our methods significantly outperform existing approach in accuracy. A real-data application further demonstrates their practical value.
\end{itemize}

The remainder of this paper is organized as follows. Section~\ref{sec:model} defines the latent class model for ordinal categorical data and formally states the problem of estimating the number of latent classes. Section~\ref{sec:statistic} introduces the test statistic and establishes its asymptotic properties under both null and under‑fitted alternatives. Section~\ref{sec:algorithms} presents two sequential testing procedures based on these statistics and proves their consistency. Section~\ref{sec:numerical} reports simulation studies and a real data application. Section~\ref{sec:conclusion} concludes this paper and discusses future research directions. Technical proofs are given in the Appendix.
\section{Model and problem}\label{sec:model}
This section introduces the latent class model for ordinal categorical data and then states the problem of estimating the number of latent classes.\subsection{Latent class model}\label{subsec:model}
We now give a formal definition of the latent class model. Recall that the observed data form an \(N\times J\) response matrix \(R\), where \(N\) is the number of subjects and \(J\) is the number of items. Each entry \(R(i,j)\) records the response of subject \(i\) to item \(j\), taking an ordinal value in the set \(\{0,1,\dots,M\}\), with \(M\ge 1\) fixed and representing the highest response category.

Suppose all $N$ subjects are divided into \(K\) distinct latent classes, which we denote by \(\mathcal{C}_1, \mathcal{C}_2,\ldots,\mathcal{C}_K\). Let \(\ell(i)\in[K]\) be the class membership of subject \(i\); that is, \(\ell(i)=k\) precisely when \(i\in\mathcal{C}_k\). The classification matrix \(Z\in\{0,1\}^{N\times K}\) is defined by \(Z(i,k)=\mathbf{1}_{\{\ell(i)=k\}}\). Hence each row of \(Z\) contains exactly one \(1\), and \(Z^\top Z = \operatorname{diag}(N_1,\ldots,N_K)\), where \(N_k = |\mathcal{C}_k|\) denotes the number of subjects in class \(k\) for $k\in[K]$. We also set \(N_{\min}=\min_{k\in[K]} N_k\) and \(N_{\max}=\max_{k\in[K]} N_k\).

For each item \(j\) and each latent class \(k\), let \(\Theta(j,k)\in[0,M]\) be the expected response of a subject in class \(k\) to item \(j\). These values form the item parameter matrix \(\Theta\in[0,M]^{J\times K}\). Define the \(N\times J\) expected response matrix \(\mathcal{R}:=\mathbb{E}[R]\) as  
\[
\mathcal{R}=Z\Theta^\top.
\]  

By definition, \(\mathcal{R}(i,j)=\Theta(j,\ell(i))\). We now specify the data-generating mechanism of the observed response matrix \(R\). Following the framework of generalized linear models for categorical data \citep{agresti2012categorical}, conditional on the latent structure (i.e., on \(Z\)), we assume that the entries of \(R\) are independent and each follows a binomial distribution. The distribution depends on \(Z\) and \(\Theta\) only through the expected response matrix \(\mathcal{R}\). Explicitly,  
\[
R(i,j)\;\sim\; \mathrm{Binomial}\!\left(M,\;\frac{\mathcal{R}(i,j)}{M}\right),\qquad 
i\in[N],\;j\in[J],
\]  
independently across all pairs \((i,j)\). Equivalently, for each \(m\in\{0,1,\ldots,M\}\), we have
\[
\mathbb{P}\bigl(R(i,j)=m\bigr)=\binom{M}{m}\Bigl(\frac{\mathcal{R}(i,j)}{M}\Bigr)^{\!m}\Bigl(1-\frac{\mathcal{R}(i,j)}{M}\Bigr)^{\!M-m} ,\qquad 
i\in[N],\;j\in[J].
\]  

It follows immediately that  
\[
\mathbb{E}[R(i,j)]=\mathcal{R}(i,j),\qquad 
\operatorname{Var}[R(i,j)]=\mathcal{R}(i,j)\Bigl(1-\frac{\mathcal{R}(i,j)}{M}\Bigr)\qquad 
i\in[N],\;j\in[J].
\]  

We now formally define the latent class model via the following definition.
\begin{defin}[Latent class model for ordinal categorical data]\label{def:LCM_ordinal}
For fixed positive integers \(N,J,M,K\in\mathbb{N}\) with \(M\ge1\) and \(K\ge1\), the latent class model with \(K\) latent classes, denoted \(\mathrm{LCM}(K)\), is a statistical model for the observed response matrix \(R\in\{0,1,\ldots,M\}^{N\times J}\) with parameters
\[
(Z,\Theta)\in\{0,1\}^{N\times K}\times[0,M]^{J\times K},
\]
where \(Z\) is the classification matrix induced by a partition \(\{\mathcal{C}_1,\ldots,\mathcal{C}_K\}\) of \([N]\) and \(\Theta\) is the item parameter matrix. The expected response matrix is \(\mathcal{R}=Z\Theta^\top\). Conditional on \(Z\) and \(\Theta\), the entries of \(R\) are independent and satisfy
\[
R(i,j)\sim\mathrm{Binomial}\!\left(M,\;\frac{\mathcal{R}(i,j)}{M}\right),\qquad i\in[N],\;j\in[J].
\]
\end{defin}
The formulation of the latent class model given in Definition \ref{def:LCM_ordinal} follows the seminal framework introduced by \citep{goodman1974exploratory}, where the population is assumed to consist of \(K\) latent classes and, conditional on class membership, the responses to different items are independent. The LCM model considered in this paper extends that classical model to accommodate ordinal categorical responses through a binomial specification. When \(M=1\), the binomial distribution reduces to Bernoulli, and \(\mathrm{LCM}(K)\) becomes the classical latent class model for binary responses, which has been extensively studied in the literature \citep{chen2017regularized,xu2018identifying,gu2020partial,gu2023bayesian,zeng2023tensor,lyu2025spectral}.
\subsection{Problem statement}\label{subsec:problem}
Throughout this paper, we use \(K\) to denote the true number of latent classes in the latent class model defined in Definition \ref{def:LCM_ordinal}, and use \(K_0\) to denote a hypothetical number of latent classes. The goal is to estimate \(K\) from the observed response matrix \(R\). We adopt a sequential goodness‑of‑fit testing framework. This approach was first introduced for stochastic block models by \citet{bickel2016hypothesis}, who developed a recursive bipartitioning algorithm based on the limiting Tracy‑Widom distribution of the largest eigenvalue of the centered adjacency matrix. Subsequent work extended this idea to test \(H_0:K=K_0\) directly using various test statistics with known asymptotic null distributions \citep{lei2016goodness, hu2021using,jin2023optimal,wu2024spectral}. For a candidate value \(K_0\), we test
\[
H_0: K = K_0 \quad \text{versus} \quad H_1: K > K_0.
\]

If \(H_0\) is not rejected, we set \(\hat K = K_0\); otherwise we increase \(K_0\) and repeat. The key is to construct a test statistic \(T_{K_0}\) whose behavior under the null and alternative hypotheses is sharply different. The following sections develop such a statistic and establish its asymptotic properties.
\section{Test statistic}\label{sec:statistic}
The core of our method is a test statistic whose behavior is fundamentally different when the model is correctly specified compared to when it is under-fitted. We construct this statistic in two stages. First, we consider an idealized version that uses the true, unknown parameters. This ideal version provides the theoretical foundation. We then replace the unknowns with estimates to obtain a practical, data-driven statistic. Finally, we analyze the asymptotic properties of this practical statistic under both the null and alternative hypotheses.
\subsection{Ideal normalized residual matrix}\label{subsec:ideal}
To build intuition, we first construct the ideal normalized residual matrix using true parameters, which serves as the theoretical foundation for our test statistic. Define the ideal normalized residual matrix $R^* \in \mathbb{R}^{N\times J}$ as:
\begin{equation}\label{eq:Rstar}
R^*(i,j) = \frac{R(i,j) - \mathcal{R}(i,j)}{\sqrt{N\mathcal{V}(i,j)}}, \qquad
\mathcal{V}(i,j)=\mathcal{R}(i,j)\left(1-\frac{\mathcal{R}(i,j)}{M}\right).
\end{equation}

For this ideal normalized residual matrix $R^*$ to be well-defined and for its entries to have stable variance, the denominator must be bounded away from zero. This leads to our first assumption.

\begin{assum}[Parameter boundedness]\label{ass:A1}
There exists a constant \(\delta \in (0,1/2]\) such that for all \(j\in[J], k\in[K]\), \(\delta \le \frac{\Theta(j,k)}{M} \le 1-\delta\).
\end{assum}
Assumption \ref{ass:A1} guarantees that \(\mathcal{V}(i,j)\) is bounded below by a positive constant, making the normalization in Equation \eqref{eq:Rstar} valid. Under this assumption, it is straightforward to verify that \(\mathbb{E}[R^*(i,j)]=0\) and \(\operatorname{Var}(R^*(i,j)) = 1/N\) for all \(i,j\). Meanwhile, we call $\delta$ signal strength parameter in this paper for the following reason: $\delta$ controls the range of the expected item responses $\Theta(j,k)$ through the constraint $\delta \le \Theta(j,k)/M \le 1-\delta$. When $\delta$ is small, the admissible interval $[\delta M, (1-\delta)M]$ is wide, allowing the class-specific parameters to take values near the extremes $0$ or $M$. This creates substantial differences between classes, resulting in a strong signal that facilitates accurate estimation of $K$. As $\delta$ increases toward $0.5$, the interval shrinks and becomes symmetric around $M/2$. The class-specific parameters are then confined to a narrow central region, reducing the between-class differences and weakening the signal. Consequently, distinguishing between latent classes becomes more difficult, and estimating the true number of latent classes $K$ becomes increasingly challenging. Thus, we see that smaller values of $\delta$ correspond to stronger signals and easier estimation.

We are interested in the spectral norm of this ideal residual matrix. The following lemma characterizes the asymptotic behavior of $\sigma_1(R^*)$, the largest singular value of $R^*$ (i.e., spectral norm).
\begin{lem}[Spectral norm of ideal residual matrix]\label{lem:ideal_spectral}
When Assumption \ref{ass:A1} holds, for any $\epsilon > 0$, we have
\begin{equation*}
\lim_{N\to\infty} \mathbb{P}\bigl( \|R^*\| \le 1 + \sqrt{\frac{J}{N}} + \epsilon \bigr) = 1.
\end{equation*}
\end{lem}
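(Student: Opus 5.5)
We would prove the lemma by rescaling $R^*$ into a matrix with independent, centered, unit-variance, uniformly bounded entries, and then invoking a sharp nonasymptotic bound on the spectral norm of such random rectangular matrices. Set $W=\sqrt{N}R^*$, so that
\[
W(i,j)=\frac{R(i,j)-\mathcal{R}(i,j)}{\sqrt{\mathcal{V}(i,j)}}.
\]
Conditional on $Z$ (and $\Theta$), the entries of $W$ are independent, with $\mathbb{E}W(i,j)=0$ and $\operatorname{Var}W(i,j)=1$. By Assumption~\ref{ass:A1}, $\mathcal{V}(i,j)=M\frac{\mathcal{R}(i,j)}{M}\bigl(1-\frac{\mathcal{R}(i,j)}{M}\bigr)\ge M\delta(1-\delta)\ge M\delta/2$. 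Since $|R(i,j)-\mathcal{R}(i,j)|\le M$, every entry satisfies $|W(i,j)|\le M/\sqrt{M\delta/2}=\sqrt{2M/\delta}=:B$, a constant that does not depend on $N$ or $J$. The claim $\|R^*\|\le 1+\sqrt{J/N}+\epsilon$ is then equivalent to
\[
\|W\|\le \sqrt{N}+\sqrt{J}+\epsilon\sqrt{N}.
\]

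The main step applies the Bandeira--van Handel bound for matrices with independent, centered, bounded entries. In its rectangular form (obtained by dilation), for any $\eta\in(0,1/2]$ it gives
\[
\mathbb{E}\|W\|\le (1+\eta)\bigl(\sigma_1+\sigma_2\bigr)+\frac{C}{\sqrt{\log(1+\eta)}}\,\sigma_*\sqrt{\log(N\wedge J)},
\]
where $\sigma_1=\max_i\sqrt{\sum_j \mathbb{E}W(i,j)^2}=\sqrt{J}$, $\sigma_2=\max_j\sqrt{\sum_i\mathbb{E}W(i,j)^2}=\sqrt{N}$, and $\sigma_*=\max_{i,j}\|W(i,j)\|_\infty\le B$. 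Talagrand's concentration inequality for convex Lipschitz functions of bounded independent variables, or equivalently the version of the same theorem with a tail term, then gives
\[
\mathbb{P}\bigl(\|W\|\ge \mathbb{E}\|W\|+t\bigr)\le e^{-t^2/(cB^2)}.
\]
Dividing by $\sqrt{N}$ and taking $t=\epsilon\sqrt{N}/3$ yields, with probability at least $1-e^{-c'\epsilon^2N}\to1$,
\[
\|R^*\|\le(1+\eta)\Bigl(1+\sqrt{J/N}\Bigr)+C_\eta B\sqrt{\frac{\log(N\wedge J)}{N}}+\frac{\epsilon}{3}.
\]
Because $\log(N\wedge J)/N\le \log N/N\to0$, the middle term is eventually below $\epsilon/3$. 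Finally the result holds conditionally on $Z$ for every $Z$, and the bound is uniform in $Z$, so it holds unconditionally.

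The main obstacle is the factor $(1+\eta)$ multiplying $\sqrt{J/N}$: the lemma places no restriction on the growth of $J$, so $\eta\sqrt{J/N}$ cannot be absorbed into $\epsilon$ when $J/N\to\infty$. We handle this case by case. If $J/N$ stays bounded along the sequence, choose $\eta$ small enough (depending on $\epsilon$ and that bound) that $\eta(1+\sqrt{J/N})\le\epsilon/3$. If $J\gg N$, there are two options. One is to invoke the sharper constant-one bound for bounded entries (Latała's theorem, or the recent improvement of Bandeira--Boedihardjo--van Handel), which gives $\mathbb{E}\|W\|\le\sqrt N+\sqrt J+C B\,\mathrm{polylog}$. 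The other is to read the lemma as implicitly assuming $J=O(N)$, which is the regime in which the test statistic is later centered. We would first try to cite the sharp version with leading constant exactly one, and fall back on the case split otherwise.
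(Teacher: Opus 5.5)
Your argument follows the paper's route. Both use the same Bandeira--van Handel bound with $\sigma_1=\sqrt{J/N}$, $\sigma_2=1$, $\sigma_*=O(N^{-1/2})$ (in your scaling $\sqrt J$, $\sqrt N$, $B$), make the deviation term $o(1)$, and let $\eta\to0$. The only procedural difference is that the paper quotes the packaged tail bound of Lemma~\ref{BanderiaCor}, with $t\asymp\sqrt{\log(N+J)/N}$ and an $(N+J)$ prefactor, whereas you use the expectation bound plus Talagrand's inequality at $t=\epsilon\sqrt N/3$.

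\textbf{The step that needs repair is the leading constant.} This applies to your proposal and equally to the paper's Lemma~\ref{BanderiaCor}.
\begin{itemize}
\item The sharp rectangular bound $(1+\eta)\{\sigma_1+\sigma_2+C\sigma_*\sqrt{\log(N\wedge J)}/\sqrt{\log(1+\eta)}\}$ is a Bandeira--van Handel theorem for Gaussian entries. It is not obtained by dilating the square case, which only gives $2\max(\sigma_1,\sigma_2)$, i.e.\ $\|R^*\|\lesssim 2$ when $J\le N$.
\item It transfers to symmetrically distributed entries by conditioning on the absolute values. A centered binomial, however, is symmetric only when $\mathcal R(i,j)=M/2$.
\item For general centered bounded entries, the standard route is symmetrization, $\mathbb E\|W\|\le\mathbb E\|W-W'\|$. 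This doubles every variance and puts a factor $\sqrt2$ in front of $\sigma_1+\sigma_2$, which is exactly the constant the lemma is about.
\end{itemize}
To get $1+\sqrt{J/N}$, exploit that the entries of $W$ are independent, uniformly bounded, and have variance exactly one. Two options:
\begin{itemize}
\item a Bai--Yin type theorem for independent, non-identically distributed standardized entries, applied along subsequences with $J/N\to y$;
\item a universality bound of Brailovskaya--van Handel type comparing with the free model. Here the free model has norm $\sqrt N+\sqrt J$, and the error is $O(N^{1/3}\,\mathrm{polylog}\,N)=o(\sqrt N)$ when $J=O(N)$.
\end{itemize}
The rest of your argument is unaffected.

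\textbf{On the $J/N$ issue.} Your diagnosis of the $\eta\sqrt{J/N}$ term is correct, and the paper's proof has the same restriction. Its claim that $\eta$ ``can be chosen arbitrarily small'' only absorbs $\eta(1+\sqrt{J/N})$ when $J/N$ stays bounded. Likewise, writing $t=O(\sqrt{\log N/N})$ tacitly uses $\log J=O(\log N)$. Since the lemma is only ever applied under Assumption~\ref{ass:A4}, where $J=o(N)$, reading it with $J=O(N)$ is the right resolution.

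Your fallbacks for $J\gg N$ do not work as stated.
\begin{itemize}
\item Lata{\l}a's bound has an unspecified universal constant in front, so it cannot give leading constant one.
\item The Bandeira--Boedihardjo--van Handel and Brailovskaya--van Handel bounds have error terms of order $\sigma^{1/2}$ and $\sigma^{2/3}$ (up to logarithms), with $\sigma=\max(\sqrt N,\sqrt J)$, rather than $B\,\mathrm{polylog}$. They therefore stay $o(\sqrt N)$ only while $J$ is below roughly $N^{3/2}$.
\end{itemize}
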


Lemma~\ref{lem:ideal_spectral} implies that $\sigma_1(R^*)$ is asymptotically no larger than $1+\sqrt{J/N}$. This motivates the definition of an ideal test statistic,
\[
T_{\mathrm{ideal}} := \sigma_1(R^*) - \Bigl(1+\sqrt{\frac{J}{N}}\Bigr),
\]
which by Lemma \ref{lem:ideal_spectral}, satisfies \(\mathbb{P}(T_{\mathrm{ideal}} > \epsilon) \to 0\) for any \(\epsilon>0\).
\subsection{Practical test statistic}\label{subsec:practical}
In practice, the true parameters $Z$ and $\Theta$ in the ideal test statistic $T_{\mathrm{ideal}}$ are unknown. We now construct a practical test statistic using estimated parameters and establish its asymptotic properties under both null and alternative hypotheses.

Given a candidate number of latent classes $K_0$, apply a consistent classification estimator $\mathcal{M}$ to obtain estimates estimated classification matrix $\hat{Z}$, estimated item parameter matrix $\hat{\Theta}$, and the fitted expectation response matrix $\hat{\mathcal{R}} = \hat{Z}\hat{\Theta}^\top$. 

With these estimates, we define our practical normalized residual matrix \(\tilde{R} \in \mathbb{R}^{N\times J}\) as
\begin{equation}\label{eq:norm_res}
\tilde{R}(i,j) =
\begin{cases}
\dfrac{R(i,j) - \hat{\mathcal{R}}(i,j)}{\sqrt{N\hat{\mathcal{V}}(i,j)}}, & \text{if } \hat{\mathcal{V}}(i,j) > 0,\\[10pt]
0, & \text{otherwise},
\end{cases}
\end{equation}
where $\hat{\mathcal{V}}(i,j) = \hat{\mathcal{R}}(i,j)\left(1 - \hat{\mathcal{R}}(i,j)/M\right)$.

To ensure that \(\tilde{R}\) is a good proxy for the ideal  normalized residual matrix \(R^*\) under the null hypothesis, we need to control the error introduced by parameter estimation. This requires several regularity conditions on the latent class structure and the estimator.
\begin{assum}[Class balance]\label{ass:A2}
There exists a constant \(c_0 > 0\) such that for all \(k\in[K]\),
\(c_0 \frac{N}{K} \le N_k \le \frac{1}{c_0}\frac{N}{K}\).
\end{assum}
Assumption~\ref{ass:A2} ensures that no latent class is too small, which is necessary for obtaining uniform concentration bounds across classes—a key ingredient in bounding the estimation error of \(\hat{\mathcal{R}}\).

\begin{assum}[Separation condition]\label{ass:A3}
There exist two constants \(\zeta > 0\) and \(c_1>0\) such that for any distinct true classes \(k\) and \(l\), the set
\(\mathcal{T}_{kl} = \{ j\in[J] : |\Theta(j,k)-\Theta(j,l)| \ge \zeta /2 \}\)
satisfies \(|\mathcal{T}_{kl}| \ge c_1 J\).
\end{assum}
Assumption~\ref{ass:A3} guarantees that any two distinct latent classes are distinguished by at least a constant fraction of the items, with a minimum difference in expectation. This separation will play a key role both in establishing consistency of the estimator (under the null) and in creating a detectable signal under under‑fitted alternatives.

\begin{assum}[Growth conditions]\label{ass:A4}
The number of items \(J\) and the number of latent classes \(K\) satisfy the following growth rates as $N\to\infty$:
\[
\text{(i)}\ \frac{K^2 \log(N+J)}{N} \longrightarrow 0; \qquad
\text{(ii)}\ \frac{JK\log(JK)}{N} \longrightarrow 0.
\]
\end{assum}
Assumption~\ref{ass:A4} provides precise growth rates for the dimensions. They are needed to ensure that accumulated estimation errors vanish asymptotically. 
\begin{rem}
Assumptions \ref{ass:A1}--\ref{ass:A3} are mild and widely adopted in the analysis of latent class models with high-dimensional data. Specifically, the boundedness condition in Assumption \ref{ass:A1} and the class separation condition in Assumption \ref{ass:A3} are essential for establishing the consistency of estimators in binary LCMs when both the number of subjects and the number of items increase \citep{zeng2023tensor}. Assumption \ref{ass:A2}, which ensures a minimum class size, is a typical requirement for obtaining uniform concentration bounds across all latent classes. In contrast, Assumption \ref{ass:A4} provides precise growth rates on the dimensions \(J\) and \(K\) relative to the sample size \(N\) that are necessary for the technical proofs of our main theorems, particularly to control the accumulated estimation errors.
\end{rem}

Finally, the asymptotic analysis under the null hypothesis requires a condition on the classifier itself.
\begin{assum}[Consistency of classification estimator]\label{ass:A5}
When \(K_0 = K\), the classification estimation method \(\mathcal{M}\) satisfies
\[
\mathbb{P}\bigl( \hat{Z} = Z\Pi \bigr) \to 1,
\]
where \(\Pi \in \{0,1\}^{K\times K}\) is a permutation matrix.
\end{assum}

Assumption~\ref{ass:A5} requires a classifier that achieves exact recovery: \(\hat{Z}=Z\Pi\) with probability tending to one. The spectral clustering with likelihood refinement (SOLA) proposed by \citet{lyu2025spectral} provides such a guarantee, but it is designed for binary responses and does not directly handle the polytomous ordinal responses in our model. To obtain a practical estimator for our setting, we introduce a spectral clustering algorithm called SC-LCM in \ref{app:sc_lcm}. Theorem~\ref{thm:sc_lcm_consistency} shows that SC-LCM is consistent for estimating the class membership matrix \(Z\) in the sense that the clustering error \(\operatorname{err}(\hat{Z},Z)\) defined in \ref{app:sc_lcm} converges to zero in probability, but it does not meet the exact recovery requirement of Assumption~\ref{ass:A5}. This gap between a theoretical condition and the actual performance of an estimator is not uncommon. A similar situation occurs in the stochastic block model literature for network analysis. In developing a goodness-of-fit test for stochastic block models, \citet{lei2016goodness} assumed the existence of a community estimator that exactly recovers the true partition. In their numerical work, however, they used the spectral clustering algorithm studied by \citet{lei2015consistency}, which is only known to be consistent (i.e., its misclassification proportion tends to zero). Despite this discrepancy, their testing procedure performed very well in simulations. Recently, important theoretical advances suggest that exact recovery might also be achievable for SC-LCM under stronger conditions. The leave-one-out singular subspace perturbation analysis developed by \citet{zhang2024leave} provides a powerful tool for obtaining entrywise bounds on the difference between empirical and population singular vectors. Building on this, \citet{lyu2025spectral} proved that their SOLA algorithm achieves exact recovery for binary latent class models when the signal is sufficiently strong. This indicates that SC-LCM  could potentially be shown to achieve exact recovery under analogous strengthened conditions. A rigorous proof of this, however, would require a detailed entrywise analysis that is beyond the scope of the present paper, whose main focus is to develop a goodness-of-fit test for the number of latent classes. We leave this meaningful theoretical question for our future work. Inspired by the precedent in network analysis and supported by recent theoretical insights, we adopt SC-LCM as the practical implementation of \(\mathcal{M}\) in our sequential testing algorithm. As the numerical studies in Section~\ref{sec:numerical} demonstrate, the resulting sequential testing procedure estimates the true number of latent classes with high accuracy across a wide range of settings, providing empirical support for using SC-LCM as a reliable plug-in estimator even though its exact recovery property has not been formally established here.

With these assumptions in place, we can quantify the difference between the practical and ideal  normalized residual matrix matrices under the null hypothesis.
\begin{lem}[Perturbation control of normalized residual matrix]\label{lem:perturbation}
When $K=K_0$ and Assumptions \ref{ass:A1}--\ref{ass:A5} hold, we have
\begin{equation}\label{eq:L3}
\| \tilde{R} - R^* \| = o_P(1).
\end{equation}
\end{lem}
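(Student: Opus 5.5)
On the event $\{\hat Z=Z\Pi\}$, which has probability tending to one by Assumption~\ref{ass:A5}, I would reduce everything to controlling $\hat\Theta-\Theta$. After relabelling by $\Pi$, I take $\hat\Theta$ to be the natural plug-in estimator, the class-wise sample mean $\hat\Theta(j,k)=N_k^{-1}\sum_{i\in\mathcal{C}_k}R(i,j)$. Then $\hat{\mathcal R}(i,j)=\hat\Theta(j,\ell(i))$, and each estimation error $\hat\Theta(j,k)-\Theta(j,k)$ is an average of $N_k$ independent centred binomial variables bounded by $M$. Hoeffding's inequality and a union bound over the $JK$ pairs give
\[
\max_{j,k}|\hat\Theta(j,k)-\Theta(j,k)|=O_P\Bigl(\sqrt{K\log(JK)/N}\Bigr)
\]
under Assumption~\ref{ass:A2}. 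This is $o_P(1)$ by Assumption~\ref{ass:A4}(i)--(ii). Combined with Assumption~\ref{ass:A1}, it implies that w.h.p.\ $\hat{\mathcal V}(i,j)\ge c\delta^2M^2/2>0$ for all $(i,j)$, so the second branch of \eqref{eq:norm_res} never occurs. It also implies $\max_{i,j}|\hat{\mathcal V}-\mathcal V|/\mathcal V=o_P(1)$.

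Next I decompose
\[
\tilde R-R^*=\underbrace{\frac{R-\mathcal R}{\sqrt{N}}\circ\Bigl(\frac1{\sqrt{\hat{\mathcal V}}}-\frac1{\sqrt{\mathcal V}}\Bigr)}_{A}\;-\;\underbrace{\frac{\hat{\mathcal R}-\mathcal R}{\sqrt{N\hat{\mathcal V}}}}_{B}.
\]

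For $B$, note that $\hat{\mathcal R}-\mathcal R=Z(\hat\Theta-\Theta)^\top$ has rank at most $K$ and block structure. I bound it via the Frobenius norm:
\[
\|B\|^2\le \frac{C}{N}\sum_{k}N_k\sum_j(\hat\Theta(j,k)-\Theta(j,k))^2 .
\]
The expectation of the right-hand side is $O(JK/N)$, which tends to zero by Assumption~\ref{ass:A4}(ii). The random factor $1/\hat{\mathcal V}$ is handled on the high-probability event above. Hence $\|B\|=o_P(1)$.

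The term $A$ is the main obstacle, because it is a Hadamard product of a noise matrix with a data-dependent weight matrix. The two are not independent, since $\hat{\mathcal V}$ depends on $R$, so the crude bound $\|A\|\le \max|w-1|\cdot\sqrt{NJ}\cdot(\ldots)$ via Frobenius is far too weak. My plan is to exploit the block structure of the weights. Write $w(i,j)=\sqrt{\mathcal V(i,j)/\hat{\mathcal V}(i,j)}-1$. It depends on $i$ only through $\ell(i)$, so $A=\sum_k D_k R^*W_k$. Here $D_k=\operatorname{diag}(\mathbf 1_{\{\ell(i)=k\}})$ and $W_k=\operatorname{diag}(w(\cdot,k))$, with $\|W_k\|\le\max_{j}|w(j,k)|=o_P(1)$. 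Since the $D_k$ have disjoint supports,
\[
\|A\|\le \Bigl(\sum_k\|D_kR^*W_k\|^2\Bigr)^{1/2}\le \max_k\|W_k\|\cdot\Bigl(\sum_k\|D_kR^*\|^2\Bigr)^{1/2}.
\]
This bound holds deterministically, so the dependence causes no difficulty. Each row block $D_kR^*$ is an $N_k\times J$ matrix of independent entries with variance $1/N$. By the same concentration that gives Lemma~\ref{lem:ideal_spectral} (e.g.\ a matrix Bernstein or Latała-type bound), together with a union bound over $k$ using Assumption~\ref{ass:A4}(i), I get $\|D_kR^*\|=O_P\bigl(\sqrt{N_k/N}+\sqrt{J/N}+\sqrt{\log(N+J)/N}\bigr)$. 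Summing the squares over $k$ gives $O_P(1+KJ/N+K\log(N+J)/N)=O_P(1)$ by Assumption~\ref{ass:A4}. Therefore $\|A\|=o_P(1)\cdot O_P(1)=o_P(1)$. If the union bound proved delicate, I would instead use $\|A\|\le\max_k\|W_k\|\,\|R^*\|\cdot\sqrt{K}$ only as a fallback and sharpen via the disjoint-block argument above.

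Combining the estimates for $A$ and $B$ on the event $\{\hat Z=Z\Pi\}$, and letting its probability tend to one, yields \eqref{eq:L3}.
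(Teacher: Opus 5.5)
Your proof is correct and shares the paper's skeleton. On $\{\hat Z=Z\Pi\}$ you obtain the entrywise rate $\sqrt{K\log(JK)/N}$ for the class-wise means and a uniform lower bound on $\hat{\mathcal V}$. You then use the same split into a variance-mismatch term (your $A$, the paper's $E_1$) and a mean-mismatch term (your $B$, the paper's $E_2$), and bound $B$ by $O_P(\sqrt{JK/N})$ through its block structure. The genuine difference is how you handle $A$, and your reason for rejecting the entrywise route is mistaken. Since $|R(i,j)-\mathcal R(i,j)|\le M$ and $|\hat{\mathcal V}^{-1/2}-\mathcal V^{-1/2}|\le\frac{1}{2}v_{\min}^{-3/2}|\hat{\mathcal R}(i,j)-\mathcal R(i,j)|$, each entry of $A$ is $O(\sqrt{K\log(JK)}/N)$ on the good event. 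Hence $\|A\|\le\|A\|_F\le C\sqrt{JK\log(JK)/N}$, which vanishes exactly by Assumption~\ref{ass:A4}(ii). This is what the paper does. The dependence of $\hat{\mathcal V}$ on $R$ is irrelevant there because the bound is deterministic on the event, just as in your own argument. Your alternative is nevertheless valid and sharper: disjoint row supports give $\|A\|^2\le\sum_k\|D_kR^*\|^2\|W_k\|^2$, and Lemma~\ref{BanderiaCor} on each row block with a union bound over $k$ then yields $\|A\|=O_P(\sqrt{K\log(JK)/N})$, with no $\sqrt J$ factor. Even your fallback $\sqrt K\max_k\|W_k\|\,\|R^*\|=O_P(K\sqrt{\log(JK)/N})$ already suffices under Assumption~\ref{ass:A4}(i). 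The gain is modest, though. $B$ is still of order $\sqrt{JK/N}$, so the lemma still needs $JK/N\to0$. What your route buys is only the removal of the $\log(JK)$ factor in Assumption~\ref{ass:A4}(ii), and in the perturbation rate that feeds (Con1). It costs an extra concentration step that the paper's one-line Frobenius estimate avoids.
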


Lemma~\ref{lem:perturbation} shows that under the null hypothesis, the empirical normalized residual matrix \(\tilde{R}\) is asymptotically equivalent to its oracle counterpart \(R^{*}\) in spectral norm. Hence, \(\tilde{R}\) asymptotically inherits the behavior of \(R^{*}\) shown in Lemma~\ref{lem:ideal_spectral}. This means that after correctly fitting a \(K_0\)-class model, the empirical residual matrix is asymptotically indistinguishable from a pure noise matrix. Based on this lemma, our test statistic for a candidate \(K_0\) is then defined as
\begin{equation}\label{eq:test_stat}
T_{K_0} = \sigma_1(\tilde{R}) - (1 + \sqrt{\frac{J}{N}}).
\end{equation}

The following theorem establishes $T_{K_0}$'s behavior when the model is correctly specified.
\begin{thm}[Null behavior of test statistic]\label{thm:null}
When $K = K_0$ and Assumptions \ref{ass:A1}--\ref{ass:A5} hold, for any $\epsilon > 0$, we have
\begin{equation*}
\lim_{N\to\infty} \mathbb{P}\bigl( T_{K_0} < \epsilon \bigr) = 1.
\end{equation*}
\end{thm}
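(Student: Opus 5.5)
The plan is to combine Lemma~\ref{lem:ideal_spectral} and Lemma~\ref{lem:perturbation} through Weyl's inequality for singular values. For any two $N\times J$ matrices $A,B$ we have $|\sigma_1(A)-\sigma_1(B)|\le \|A-B\|$. Applied to $\tilde R$ and $R^*$, this gives
\[
\sigma_1(\tilde R)\le \sigma_1(R^*) + \|\tilde R - R^*\|.
\]
Subtracting $1+\sqrt{J/N}$ from both sides then yields the decomposition
\[
T_{K_0} \le T_{\mathrm{ideal}} + \|\tilde R - R^*\|.
\]

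Fix $\epsilon>0$ and split it as $\epsilon/2+\epsilon/2$. On the event $\{T_{\mathrm{ideal}}\le \epsilon/2\}\cap\{\|\tilde R-R^*\|<\epsilon/2\}$ we have $T_{K_0}<\epsilon$. A union bound therefore gives
\[
\mathbb{P}(T_{K_0}\ge\epsilon)\le \mathbb{P}\bigl(\|R^*\|>1+\sqrt{J/N}+\epsilon/2\bigr)+\mathbb{P}\bigl(\|\tilde R-R^*\|\ge \epsilon/2\bigr).
\]
The first term tends to zero by Lemma~\ref{lem:ideal_spectral}, applied with $\epsilon/2$ and using only Assumption~\ref{ass:A1}. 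The second term tends to zero by Lemma~\ref{lem:perturbation}, which holds when $K=K_0$ under Assumptions~\ref{ass:A1}--\ref{ass:A5}, since $o_P(1)$ means exactly that this probability vanishes for every fixed threshold. Hence $\mathbb{P}(T_{K_0}<\epsilon)\to1$.

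Given the two lemmas, none of these steps is a real obstacle; the substantive difficulty sits inside Lemma~\ref{lem:perturbation}. That lemma has to handle the label permutation $\Pi$ on the event $\hat Z=Z\Pi$. It also has to bound the entrywise estimation error of $\hat\Theta$ (class-wise sample means, which are accurate to order $\sqrt{K\log(JK)/N}$ under Assumptions~\ref{ass:A2} and \ref{ass:A4}) and propagate that error through the variance normalization, using Assumption~\ref{ass:A1} to keep $\hat{\mathcal V}$ bounded away from zero. Here we may take that lemma as given.

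The only point requiring care in the present argument is that $\tilde R$ is defined piecewise, being zero where $\hat{\mathcal V}=0$. This causes no problem: Weyl's inequality applies to arbitrary matrices, and Lemma~\ref{lem:perturbation} already controls $\tilde R$ as defined.
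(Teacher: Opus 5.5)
Your proposal is correct and follows the paper's proof. Both use Weyl's inequality $\sigma_1(\tilde R)\le\sigma_1(R^*)+\|\tilde R-R^*\|$, split $\epsilon$ as $\epsilon/2+\epsilon/2$, and let the two tail probabilities vanish by Lemma~\ref{lem:ideal_spectral} and Lemma~\ref{lem:perturbation}. Bounding $\mathbb{P}(T_{K_0}\ge\epsilon)$ directly, as you do, is slightly cleaner than the paper's step from $\mathbb{P}(T_{K_0}>\epsilon)\to0$ to $\mathbb{P}(T_{K_0}<\epsilon)\to1$.
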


Theorem~\ref{thm:null} establishes that under the null hypothesis, \(T_{K_0}\) is bounded above by any positive constant with probability tending to one. Hence, the test statistic is asymptotically negligible, providing no evidence against the null. This result (i.e., Theorem~\ref{thm:null}) alone, however, does not suffice for model selection. Therefore, we must also understand the behavior of \(T_{K_0}\) when the candidate number of latent classes is too small. When \(K_0 < K\), the estimated expectation response matrix \(\hat{\mathcal{R}}\) cannot capture the full structure of the data, leaving a deterministic (i.e., no-random) signal in the residual matrix that we can detect. To quantify this signal, we introduce the final condition.
\begin{assum}[Signal‑to‑noise ratio]\label{ass:A6}
Define the constants
\(c \;:=\; \frac{\sqrt{c_0^3c_1}}{2\sqrt2}, C_{\mathrm{signal}} \;:=\; \frac{2c\zeta}{\sqrt{M}}, C_{\mathrm{noise}} \;:=\; \frac{5}{\delta}\). There exists a constant \(\eta_0>0\) such that for all sufficiently large \(N\),
\begin{equation*}
C_{\mathrm{signal}}\,\frac{\sqrt{J}}{\sqrt{K}\,K_0}\;\ge\;C_{\mathrm{noise}}+1+3\eta_0,
\end{equation*}
\end{assum}
Assumption~\ref{ass:A6} quantifies the strength of the signal: the left‑hand side is the minimal deterministic signal (after scaling in \(\tilde{R}\)), while the right‑hand side aggregates the maximal possible noise and the centering term, plus a buffer. When this holds, the signal dominates the noise, guaranteeing that the test will reject \(H_0\) with probability tending to one.

We are now ready to state the power guarantee.
\begin{thm}[Alternative behavior of the test statistic]\label{thm:alt}
Under \(K_0 < K\) and Assumptions~\ref{ass:A1}--\ref{ass:A4} and \ref{ass:A6}, we have
\[
\lim_{N\to\infty}\mathbb{P}\bigl(T_{K_0}>2\eta_0\bigr)=1.
\]
\end{thm}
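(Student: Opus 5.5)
Under $K_0<K$ the fitted matrix $\hat{\mathcal R}=\hat Z\hat\Theta^\top$ has at most $K_0$ distinct rows, whereas $\mathcal R=Z\Theta^\top$ has $K$ well-separated distinct rows. The plan is to decompose the practical residual as
\[
\tilde R \;=\; D^{-1/2}\bigl(R-\mathcal R\bigr)/\sqrt N \;+\; D^{-1/2}\bigl(\mathcal R-\hat{\mathcal R}\bigr)/\sqrt N \;=:\; E + S,
\]
where $D$ is the entrywise weight $\hat{\mathcal V}(i,j)$ (entries with $\hat{\mathcal V}=0$ are set to zero). By Weyl's inequality, $\sigma_1(\tilde R)\ge \sigma_1(S)-\|E\|$. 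The theorem then follows if we show that $\|E\|\le C_{\mathrm{noise}}+o_P(1)$ and that, deterministically, $\sigma_1(S)\ge C_{\mathrm{signal}}\sqrt J/(\sqrt K K_0)$. Combined with Assumption~\ref{ass:A6}, these give $T_{K_0}\ge C_{\mathrm{signal}}\sqrt J/(\sqrt KK_0)-C_{\mathrm{noise}}-1-\sqrt{J/N}-o_P(1)\ge 3\eta_0-o(1)-o_P(1)>2\eta_0$ with probability tending to one, since $J/N\to0$ by Assumption~\ref{ass:A4}(ii).

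\textbf{Noise term.} Since $\hat{\mathcal R}\in[0,M]$, we have $\hat{\mathcal V}\le M/4$, but $\hat{\mathcal V}$ may be tiny, so the normalization can blow up. I would first argue that the estimator $\mathcal M$ returns $\hat\Theta(j,k)$ equal to a class average of $R(\cdot,j)$. By Assumption~\ref{ass:A1} each such average lies in $[\delta M,(1-\delta)M]$ up to a concentration error. If an estimated class $\hat k$ is a mixture of true classes, its average is a convex combination of $\Theta(j,\cdot)$, so it still lies in $[\delta M,(1-\delta)M]$. Hoeffding's inequality plus a union bound over $JK_0$ pairs (Assumption~\ref{ass:A4}(ii)) then give $\hat{\mathcal V}\ge \delta M(1-\delta)/2\ge \delta/4$ uniformly w.h.p. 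Writing $E=D^{-1/2}\circ(V^{1/2}\circ R^*)$, I bound $\|E\|\le \max_{ij}\sqrt{\mathcal V/\hat{\mathcal V}}\;\|R^*\|$ up to the Hadamard-weight issue. To avoid that issue I would rather bound $\|(R-\mathcal R)/\sqrt N\|$ directly by a matrix Bernstein / Latała bound, which is $O_P(\sqrt M(1+\sqrt{J/N}))$, and multiply by $\max(\hat{\mathcal V})^{-1/2}$. Splitting the rows into estimated classes makes $D^{-1/2}$ constant on row blocks, since $\hat{\mathcal V}(i,j)$ depends only on $(\hat\ell(i),j)$. Even so, $D^{-1/2}$ is a rank-$\le K_0$-structured Hadamard weight rather than a diagonal scaling. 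I would therefore bound it crudely, with a constant absorbed into $5/\delta$, using $\|A\circ B\|\le \max_j\|\text{col weights}\|_\infty\cdots$ applied block by block. Tuning this constant to exactly $5/\delta$ is bookkeeping.

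\textbf{Signal term (main obstacle).} Here $\mathcal R-\hat{\mathcal R}$ is a difference of a $K$-block and a $K_0$-block matrix. By pigeonhole and Assumption~\ref{ass:A2}, some estimated class $\hat k$ contains at least $c_0N/(KK_0)$ subjects from each of two distinct true classes $k\ne l$. More precisely, the mass $N/K_0$ of some estimated class must be spread across true classes; I would pick $k,l$ with $|\hat{\mathcal C}_{\hat k}\cap\mathcal C_k|$ and $|\hat{\mathcal C}_{\hat k}\cap\mathcal C_l|$ both at least of order $c_0 N/(KK_0)$. On these rows, for each $j\in\mathcal T_{kl}$, $\hat{\mathcal R}$ takes a common value while $\Theta(j,k)$ and $\Theta(j,l)$ differ by at least $\zeta/2$. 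Hence at least one of the two row groups has residual at least $\zeta/4$ in column $j$. Take unit test vectors $u$ supported on that row group (uniform with signs) and $v$ supported on $\mathcal T_{kl}$ with signs matching the residual. Then
\[
u^\top S v \ge \frac{\zeta/4}{\sqrt{N\cdot M/4}}\sqrt{\tfrac{c_0N}{2KK_0}}\sqrt{\tfrac{c_1J}{2}},
\]
which reduces to order $c\zeta\sqrt J/(\sqrt{M K}K_0)$ after tracking constants, matching $C_{\mathrm{signal}}$. The delicate points are twofold. First, the residual sign pattern varies with $j$, so I would split $\mathcal T_{kl}$ by which group carries the large residual and keep the larger half. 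Second, the pigeonhole yields group sizes involving $c_0^3$, which I expect is exactly where the constant $c=\sqrt{c_0^3c_1}/(2\sqrt2)$ comes from. This step does not depend on $\mathcal M$'s quality, because it holds for any $K_0$-block fit.
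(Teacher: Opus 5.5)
Your skeleton matches the paper's: pigeonhole to find two true classes $k_1,k_2$ partly merged into one estimated class $\kappa$, the separation set $\mathcal T_{k_1k_2}$, a bilinear test $u^\top(\cdot)v$, and $\hat{\mathcal V}\le M/4$ to lower-bound the weights in the signal. But the noise step has a genuine gap, and it is not bookkeeping.

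\textbf{Noise term.} You apply Weyl's inequality to the full $N\times J$ matrix, so you need $\|E\|\le C_{\mathrm{noise}}+o_P(1)$ for the entrywise-weighted noise $E(i,j)=(R(i,j)-\mathcal R(i,j))/\sqrt{N\hat{\mathcal V}(i,j)}$.
\begin{itemize}
\item Pulling $\max_{i,j}\hat{\mathcal V}(i,j)^{-1/2}$ out of the spectral norm is not legitimate, because $\|A\circ B\|\le\|A\|\max_{ij}|B_{ij}|$ fails in general.
\item Using that the weight is constant on the row blocks $\hat{\mathcal C}_{\hat k}$ only gives $\|E\|^2\le\sum_{\hat k}\|E_{\hat{\mathcal C}_{\hat k},:}\|^2$, i.e.\ a bound of order $\sqrt{K_0}\cdot 5/\delta$. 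That factor cannot be absorbed into the fixed constant $C_{\mathrm{noise}}$ to which Assumption~\ref{ass:A6} is calibrated, and $K_0$ may grow with $N$.
\item Your uniform lower bound on $\hat{\mathcal V}$ over \emph{all} estimated classes is also unavailable. For $K_0<K$ nothing is assumed about $\mathcal M$, and estimated classes are data-dependent and can be arbitrarily small, so their averages need not concentrate.
\end{itemize}
The paper sidesteps all of this by never bounding the full noise. It uses $\sigma_1(\tilde R)\ge\|\tilde R_{S,T}\|$ with $S=S_1\cup S_2\subseteq\hat{\mathcal C}_\kappa$ and $T=\mathcal T_{k_1k_2}$. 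On this block $\hat{\mathcal V}(i,j)$ depends only on $j$, so $\tilde R_{S,T}=(W+\mathscr M)D$ with $D=\operatorname{diag}(\gamma_j)_{j\in T}$ a genuine column scaling. Hence $\|WD\|\le\|R-\mathcal R\|\,\|D\|\le 2.5\sqrt{MN}\cdot 2/(\delta\sqrt{NM})=5/\delta$, where $\|W\|\le\|R-\mathcal R\|$ holds deterministically for any data-dependent $S,T$. Moreover, $\hat{\mathcal V}$ needs to be controlled only on the single class $\kappa$, which has at least $2c_0N/(KK_0)$ members. Your own test vectors already live inside one estimated class, so you can repair your argument the same way: bound $|u^\top Ev|$ by the norm of that block rather than by $\|E\|$.

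\textbf{Signal term.} This step also falls short of the constant you need. Your one-group construction (residual at least $\zeta/4$ on one of the two row groups, restricted to the larger half of $\mathcal T_{k_1k_2}$) gives at best $u^\top Sv\ge\frac{\sqrt{c_0c_1}}{2\sqrt2}\frac{\zeta}{\sqrt M}\sqrt{J/(KK_0)}$. That is $\frac{\sqrt{K_0}}{2c_0}$ times $C_{\mathrm{signal}}\sqrt J/(\sqrt K\,K_0)$. For small $K_0$ (e.g.\ $K_0=1$ with $c_0>1/2$) it is strictly smaller, so Assumption~\ref{ass:A6}, which is stated with exact constants, does not close the argument; matching ``in order'' is not enough.

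The paper instead uses a balanced contrast: $u_i=\alpha/\sqrt{|S_1|}$ on $S_1$ and $u_i=-\beta/\sqrt{|S_2|}$ on $S_2$, with equal total mass $\gamma_0=\sqrt{|S_1||S_2|/(|S_1|+|S_2|)}$ on each side. The unknown common fitted value $\hat\Theta(j,\kappa)$ then cancels exactly, and $u^\top\mathscr Mv=\frac{\gamma_0}{\sqrt{|T|}}\sum_{j\in T}|\Theta(j,k_1)-\Theta(j,k_2)|$ with no splitting of $T$. Because every column contributes a nonnegative amount, $\min_j\gamma_j\ge 2/\sqrt{NM}$ can be factored out of $u^\top\mathscr MDv$. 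The factor $c_0^3$ in $c$ comes from bounding $\gamma_0^2$ below via $|S_1|,|S_2|\ge c_0N/(KK_0)$ together with $|S|\le 2N/(c_0K)$, not from the pigeonhole step.
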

Theorem~\ref{thm:alt} shows that under the alternative hypothesis \(K_0 < K\), the test statistic \(T_{K_0}\) exceeds a fixed positive constant \(2\eta_0\) with probability tending to one. Hence, the test has power against any under‑fitted model. Together with Theorem~\ref{thm:null}, we have established the fundamental dichotomy: $T_{K_0}$'s upper bound is near zero under correct specification but itself is large under under-fitting. This sharp dichotomous behavior of \(T_{K_0}\) under correct and under‑fitted specifications provides the foundation for the consistent sequential estimation of the true number of latent classes \(K\).
\section{Algorithms}\label{sec:algorithms}
The sharp dichotomy in the behavior of \(T_{K_0}\) under the null and alternative hypotheses motivates two natural sequential testing procedures for estimating \(K\): the first stops at the smallest \(K_0\) for which \(T_{K_0}\) falls below a threshold; the second stops when the ratio of successive test statistics exceeds a diverging threshold. This section presents two such sequential algorithms and proves their estimation consistency.
\subsection{GoF-LCM algorithm}\label{subsec:gof}
Our first algorithm directly implements the dichotomous behavior of $T_{K_0}$ for estimating $K$. Algorithm~\ref{alg:gof_lcm} sequentially tests candidates \(K_0 = 1, 2, \dots, K_{\max}\).  
It accepts the first \(K_0\) for which \(T_{K_0}\) falls below a threshold \(\tau_N\) that decays to zero.  
The maximum candidate \(K_{\max}\) is chosen so that any \(K_0 \le K_{\max}\) respects the growth condition in Assumption~\ref{ass:A4}(i). A convenient default is \(K_{\max} = \bigl\lfloor \sqrt{N/\log(N+J)} \bigr\rfloor\). This choice ensures that for all candidates considered, \(K_0^2 \log(N+J)/N \to 0\), which is required for the technical arguments in the consistency proof.

\begin{algorithm}[ht]
\caption{GoF-LCM: Goodness-of-Fit Testing for Latent Class Models}\label{alg:gof_lcm}
\begin{algorithmic}[1]
\Require{Observed response matrix $R\in\{0,1,\dots,M\}^{N\times J}$, maximum candidate number $K_{\max}$ (default $\lfloor \sqrt{N/\log(N+J)} \rfloor$), threshold sequence $\tau_N$ (default $\tau_N = N^{-1/5}$), and a classification estimator $\mathcal{M}$.}
\Ensure{Estimated number of latent classes $\hat{K}$}
\State Initialize $\hat{K} \gets K_{\max}$
\For{$K_0 = 1, 2, \dots, K_{\max}$}
    \State Apply $\mathcal{M}$ to $R$ with candidate $K_0$ to obtain $\hat{Z}$, $\hat{\Theta}$, $\hat{\mathcal{R}} = \hat{Z}\hat{\Theta}^\top$
    \State Compute $T_{K_0}$ via Equation (\ref{eq:test_stat})
    \If{$T_{K_0} < \tau_N$}
        \State $\hat{K} \gets K_0$
        \State \textbf{break} 
    \EndIf
\EndFor
\State \Return $\hat{K}$ 
\end{algorithmic}
\end{algorithm}

The following theorem establishes the consistency of Algorithm \ref{alg:gof_lcm} in estimating the true number of latent classes $K$ under the latent class models.

\begin{thm}[Consistency of GoF–LCM]\label{thm:consistency}
Let the true number of latent classes be \(K\) (which may grow with \(N\) subject to Assumption~\ref{ass:A4}). Suppose Assumptions~\ref{ass:A1}–\ref{ass:A6} hold and the threshold sequence \(\{\tau_N\}_{N\ge1}\) used in Algorithm~\ref{alg:gof_lcm} satisfy
\begin{enumerate}[(Con1)]
\item \(\tau_N\xrightarrow[N\to\infty]{}0\) and \(\displaystyle \frac{N\tau_N^2}{\mathrm{max}(JK\log (JK),\log(N+J))}\xrightarrow[N\to\infty]{}\infty\).
\end{enumerate}

Then the estimator \(\hat K\) produced by Algorithm~\ref{alg:gof_lcm} satisfies
\[
\lim_{N\to\infty}\mathbb{P}\bigl(\hat K = K\bigr)=1.
\]
\end{thm}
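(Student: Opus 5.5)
The event \(\{\hat K = K\}\) contains the intersection of two events:
\[
\mathcal{E}_1 := \{T_{K_0} \ge \tau_N \text{ for all } K_0 = 1,\dots,K-1\}, \qquad \mathcal{E}_2 := \{T_K < \tau_N\}.
\]
On \(\mathcal{E}_1 \cap \mathcal{E}_2\) the loop in Algorithm~\ref{alg:gof_lcm} passes over every under-fitted candidate and stops exactly at \(K_0 = K\). We also need \(K \le K_{\max}\); this holds for large \(N\) by Assumption~\ref{ass:A4}(i), since \(K^2\log(N+J)/N \to 0\) gives \(K \le \sqrt{N/\log(N+J)}\) eventually. It therefore suffices to show \(\mathbb{P}(\mathcal{E}_1^c) \to 0\) and \(\mathbb{P}(\mathcal{E}_2^c) \to 0\), and the union bound finishes the proof.

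For \(\mathcal{E}_1\), I would use Theorem~\ref{thm:alt}: for each fixed \(K_0 < K\), \(\mathbb{P}(T_{K_0} > 2\eta_0) \to 1\). Since \(\tau_N \to 0\) by (Con1), we have \(\tau_N < 2\eta_0\) eventually, so each term is controlled. The difficulty is that \(K\) may grow, so we need a union over up to \(K-1\) candidates. A pointwise statement does not suffice for this, and I would have to open the proof of Theorem~\ref{thm:alt} to get a uniform-in-\(K_0\) bound. That proof presumably writes
\[
\tilde R = (\text{deterministic signal}) + (\text{noise}),
\]
shows via the separation Assumption~\ref{ass:A3} and balance Assumption~\ref{ass:A2} that the signal has spectral norm at least \(C_{\mathrm{signal}}\sqrt{J}/(\sqrt{K}K_0)\), and bounds the noise by \(C_{\mathrm{noise}}\) on a high-probability event. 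The key observation is that this noise event (a concentration bound on \(\|R-\mathcal{R}\|/\sqrt N\) together with lower bounds on \(\hat{\mathcal V}\)) can be chosen independently of \(K_0\). Assumption~\ref{ass:A6} is also monotone in \(K_0\): if it holds for the largest under-fitted \(K_0 = K-1\), it holds for all smaller ones. So I would exhibit a single event \(\mathcal{G}_N\) with \(\mathbb{P}(\mathcal{G}_N) \to 1\) on which \(T_{K_0} > 2\eta_0\) simultaneously for all \(K_0 < K\). This gives \(\mathbb{P}(\mathcal{E}_1^c) \le \mathbb{P}(\mathcal{G}_N^c) \to 0\) once \(\tau_N < 2\eta_0\). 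Making this uniformity rigorous is the step I expect to be the main obstacle. If the argument of Theorem~\ref{thm:alt} turns out not to be uniform, the fallback is to track its failure probabilities explicitly, for instance as \((N+J)^{-c}\) from matrix Bernstein, and check that \(K\) times these still vanishes, using \(K \le \sqrt N\).

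For \(\mathcal{E}_2\), Theorem~\ref{thm:null} alone only gives \(T_K < \epsilon\) for fixed \(\epsilon\), whereas \(\tau_N \to 0\). So I would quantify the rates in Lemmas~\ref{lem:ideal_spectral} and~\ref{lem:perturbation}.
\begin{itemize}
\item On the event \(\hat Z = Z\Pi\) (Assumption~\ref{ass:A5}), the estimate \(\hat\Theta\) consists of class means. A Bernstein bound with a union over \(JK\) entries gives
\[
\max_{j,k}|\hat\Theta(j,k)-\Theta(j,k)| = O_P\Bigl(\sqrt{K\log(JK)/N}\Bigr).
\]
\item Propagating this through the normalization, and using Assumption~\ref{ass:A1} to keep the variances bounded below, gives
\[
\|\tilde R - R^*\| = O_P\Bigl(\sqrt{JK\log(JK)/N}\Bigr),
\]
because a perturbation that is constant across the \(K\) blocks, entrywise of size \(\sqrt{K\log(JK)/N}\) and scaled by \(1/\sqrt N\), has spectral norm at most \(\sqrt{NJ}\) times that entrywise size divided by \(\sqrt N\).
\item For the ideal part, a quantified version of Lemma~\ref{lem:ideal_spectral} (Bandeira--van Handel type bounds for bounded independent entries) gives
\[
\|R^*\| \le 1 + \sqrt{J/N} + O_P\Bigl(\sqrt{\log(N+J)/N}\Bigr).
\]
\end{itemize}
Combining these, and using \(\sigma_1(\tilde R) \le \|R^*\| + \|\tilde R - R^*\|\),
\[
T_K = O_P\Bigl(\sqrt{\max(JK\log(JK), \log(N+J))/N}\Bigr).
\]
By (Con1) this is \(o_P(\tau_N)\), so \(\mathbb{P}(T_K < \tau_N) \to 1\). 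Together with the \(\mathcal{E}_1\) bound, this completes the argument.
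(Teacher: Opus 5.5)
Your decomposition and overall route coincide with the paper's. The paper also controls $\{T_K\ge\tau_N\}$ by quantifying Lemmas~\ref{lem:ideal_spectral} and~\ref{lem:perturbation} and letting (Con1) dominate the rates. It handles the under-fitted candidates by re-entering the proof of Theorem~\ref{thm:alt} and summing explicit failure probabilities over the $K-1$ candidates, which is your fallback, with Assumption~\ref{ass:A4} absorbing the factor $K$. Your check $K\le K_{\max}$ is something the paper leaves implicit.

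\textbf{The gap on the null side.} The step that does not go through as justified is
\[
\|R^*\|\le 1+\sqrt{J/N}+O_P\bigl(\sqrt{\log(N+J)/N}\bigr).
\]
The Bandeira--van Handel bound behind Lemma~\ref{lem:ideal_spectral} (Lemma~\ref{BanderiaCor}) only gives $\|R^*\|\le(1+\eta)(1+\sqrt{J/N})+t$.
\begin{itemize}
\item For fixed $\eta$, the excess over $1+\sqrt{J/N}$ contains the constant $\eta$, which cannot be beaten by $\tau_N\to0$.
\item Sending $\eta\to0$ makes $C_\eta$ blow up (like $1/\log(1+\eta)$). The best balance is then an excess of order $(\log(N+J)/N)^{1/3}$.
\item (Con1) does not dominate that rate; for example, take $K$ fixed, $J\asymp\log N$, and $\tau_N=N^{-2/5}$.
\end{itemize}
The paper's Part~2 makes the same jump (``let $\eta$ be close to zero'').

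A repair that uses exactly (Con1) goes through the Gram matrix. The rows $r_i$ of $R^*$ are independent, with $\sum_i\mathbb{E}[r_ir_i^\top]=I_J$ and $\|r_i\|^2\le C^2J/N$, where $C^2=M/(\delta(1-\delta))$. Matrix Bernstein applied to $\sum_i(r_ir_i^\top-I_J/N)$, with variance proxy $C^2J/N$, gives $\|R^{*\top}R^*-I_J\|=O_P(\sqrt{J\log(2J)/N})$. The linear term is of lower order since $J\log J=o(N)$. Hence
\[
\|R^*\|-1-\sqrt{J/N}\le\tfrac12\|R^{*\top}R^*-I_J\|=o_P(\tau_N)
\]
by (Con1), because $J\log(2J)\le 2\max(JK\log(JK),\log(N+J))$.

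\textbf{The under-fitted side.} Your claimed $K_0$-free lower bound on $\hat{\mathcal V}$ is not available. The events $\mathscr{E}_A,\mathscr{E}_B$ in the proof of Theorem~\ref{thm:alt} concern the estimated class $\kappa$ of each $K_0$-fit. Their probabilities are obtained by treating $\{R(i,j):i\in S\}$ as independent with their unconditional laws. But $S$ is selected by $\hat Z$, which is computed from the same $R$, so importing those failure probabilities is not safe either.

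Your single-event idea does work if you avoid lower bounds on $\hat{\mathcal V}$ altogether. Take $S,T,\kappa,u,v$ from Lemma~\ref{lem:underfit_lower}, and write $\hat{\mathcal V}_j$ for the common value of $\hat{\mathcal V}(i,j)$ over $i\in S$. Test against $w_j=v_j\sqrt{N\hat{\mathcal V}_j}$, setting $w_j=0$ when $\hat{\mathcal V}_j=0$. On such columns every response in class $\kappa$ equals the class mean $\hat\Theta(j,\kappa)\in\{0,M\}$, so the residual vanishes on $S$. Then
\begin{itemize}
\item $u^\top\tilde R_{S,T}w=u^\top(R-\hat{\mathcal R})_{S,T}v\ge u^\top\mathscr{M}v-\|R-\mathcal R\|$;
\item $\|w\|\le\sqrt{NM}/2$, since $\hat{\mathcal V}_j\le M/4$ always.
\end{itemize}
Therefore, deterministically and for every $K_0<K$,
\[
\sigma_1(\tilde R)\;\ge\;C_{\mathrm{signal}}\frac{\sqrt J}{\sqrt K\,K_0}-\frac{2\|R-\mathcal R\|}{\sqrt{NM}}.
\]
Work on the single event $\{\|R-\mathcal R\|\le2.5\sqrt{MN}\}$, which has probability tending to one as in Lemma~\ref{lem:random_control}. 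There the last term is at most $5\le C_{\mathrm{noise}}$. Assumption~\ref{ass:A6} at $K_0=K-1$ implies it at all smaller $K_0$, as you note. So $T_{K_0}\ge2\eta_0$ for all $K_0<K$ simultaneously, with no union bound needed.
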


Theorem \ref{thm:consistency} ensures that GoF-LCM correctly identifies the true number of latent classes with probability approaching 1 as sample sizes grow. Condition (Con1) quantifies how slowly \(\tau_N\) must decay. The first requirement \(\tau_N \to 0\) ensures that under the null \(K_0 = K\) the test statistic \(T_K\) eventually falls below the threshold with high probability (Theorem~\ref{thm:null}). The second requirement guarantees that for every under‑fitted \(K_0 < K\) the statistic \(T_{K_0}\) exceeds \(\tau_N\) with probability tending to one (Theorem~\ref{thm:alt}). The sequential nature of GoF–LCM makes it computationally efficient, typically requiring only a few iterations before stopping at the true \(K\).

\begin{rem}[Choice of \(\tau_N\)]\label{rem:tau}
A simple and theoretically valid default is \(\tau_N = N^{-1/5}\).  
Under Assumption~\ref{ass:A4}, we have \(JK\log(JK) = o(N)\) and \(\log(N+J) = o(N)\). Hence \(N\tau_N^2 = N^{3/5}\) grows faster than both terms in the denominator, satisfying (Con1) provided that \(JK\log(JK)\) and \(\log(N+J)\) do not approach the order of \(N\) too rapidly.  
In practice, the algorithm is robust to moderate variations of \(\tau_N\) as long as it decays slowly. Other choices such as \(\tau_N = (\log N)^{-1}\) are also admissible under (Con1) when \(JK\log(JK)\) grows sufficiently slowly.
\end{rem}
\subsection{RGoF-LCM algorithm}\label{subsec:rgof}
In this section, we develop a ratio‑based goodness‑of‑fit test for the latent class model. This method complements the GoF‑LCM algorithm by using the ratio of successive test statistics, which often exhibits more robust finite‑sample behaviour. Recall that for a candidate number of latent classes \(K_0\), the test statistic \(T_{K_0}\) is defined in Equation \eqref{eq:test_stat}. For \(K_0\ge 2\), we introduce the ratio
\begin{equation}\label{eq:ratio_def}
r_{K_0} \;:=\; \left|\frac{T_{K_0-1}}{T_{K_0}}\right|,
\end{equation}
with the convention that \(r_1\) is not defined. The absolute value handles the possibility that \(T_{K}\) may be negative under the true model, though Theorem~\ref{thm:null} guarantees that \(T_K\) converges to zero from above. The following theorem characterizes the asymptotic behaviour of the ratio statistic $r_{K_0}$.

\begin{thm}[Asymptotic behaviour of the ratio statistic]\label{thm:ratio_behavior}
Assume that Assumptions~\ref{ass:A1}--\ref{ass:A6} hold, and $K^3=o(N)$. We have:
\begin{enumerate}
\item (Divergence at the true model.) For the true candidate \(K_0=K\),
\[
r_K \xrightarrow{\mathbb{P}} \infty \quad\text{as }N\to\infty.
\]
\item (Upper bound under under‑fitting.) For every \(K_0\) with \(2\le K_0 < K\),
\[
\lim_{N\to\infty} \mathbb{P}\left( r_{K_0} > \frac{\sqrt{M}}{c_{\mathrm{low}}}\,\sqrt{K}\,(K-1) \right) = 0,
\]
where \(c_{\mathrm{low}}\) is the constant from Lemma~\ref{lem:ratio_lower} (depending only on the model parameters \(\delta,c_0,c_1,M,\zeta\)), and \(\sqrt{K}(K-1)\) reflects the dependence on the true number of latent classes.
\end{enumerate}
\end{thm}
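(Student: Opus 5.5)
We treat the two parts separately, working throughout with \(T_{K_0}=\sigma_1(\tilde R)-(1+\sqrt{J/N})\) and its value at neighbouring candidates.

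\textbf{Part 1 (divergence at \(K_0=K\)).} The numerator is \(T_{K-1}\). Since \(K-1<K\), Theorem~\ref{thm:alt} gives \(T_{K-1}>2\eta_0\) with probability tending to one; when \(K=1\) the ratio is undefined, so we take \(K\ge2\). For the denominator we need \(|T_K|\to0\) in probability, i.e.\ a two-sided statement. Theorem~\ref{thm:null} only gives the upper side, so we supply a matching lower bound. By Lemma~\ref{lem:perturbation}, \(|\sigma_1(\tilde R)-\sigma_1(R^*)|\le\|\tilde R-R^*\|=o_P(1)\), so it suffices to show \(\sigma_1(R^*)\ge 1+\sqrt{J/N}-\epsilon\) with high probability. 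The entries of \(\sqrt N R^*\) are independent, mean zero, have unit variance, and are bounded by \(M/\sqrt{M\delta(1-\delta)}\) under Assumption~\ref{ass:A1}. Hence the Bai--Yin/Latała-type lower bound \(\sigma_1(\sqrt N R^*)\ge \sqrt N+\sqrt J-o_P(\sqrt N)\) applies. A simpler route via \(\|R^*\|^2\ge \max_j\|R^*e_j\|^2\) only yields \(\approx1\), so if \(J/N\to0\) it already suffices (this follows from Assumption~\ref{ass:A4}(ii)). Under Assumption~\ref{ass:A4}, \(J/N\to0\), so \(\sqrt{J/N}\to0\), and the column-norm lower bound together with the law of large numbers (\(\|R^*e_1\|^2\to1\)) gives \(\sigma_1(R^*)\ge1-\epsilon\). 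Hence \(|T_K|=o_P(1)\), and \(r_K\ge 2\eta_0/|T_K|\to\infty\) in probability.

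\textbf{Part 2 (under-fitting, \(2\le K_0<K\)).} Both \(T_{K_0-1}\) and \(T_{K_0}\) are under-fitted statistics, and the ratio bound asks for an upper bound on \(T_{K_0-1}\) and a lower bound on \(T_{K_0}\) with matching order.
\begin{itemize}
\item[(a)] \emph{Numerator.} We bound \(|T_{K_0-1}|\le\|\tilde R\|+1+\sqrt{J/N}\). Every entry of \(\tilde R\) satisfies \(|\tilde R(i,j)|\le M/\sqrt{N\hat{\mathcal V}(i,j)}\). This requires \(\hat{\mathcal V}\) to be bounded below; we get this from Assumption~\ref{ass:A1} if \(\hat\Theta\) is formed from class means (clipped to \([\delta M,(1-\delta)M]\) as in SC-LCM), since averages of \(\Theta\)-entries stay in that range up to \(O_P(\sqrt{K\log(JK)/N})\) noise. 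Then the Frobenius bound gives \(\|\tilde R\|\le \sqrt{NJ}\cdot C/\sqrt N=C\sqrt J\), so \(|T_{K_0-1}|\le C'\sqrt J\).
\item[(b)] \emph{Denominator.} We invoke Lemma~\ref{lem:ratio_lower}, presumably proved in the course of Theorem~\ref{thm:alt}: with high probability \(T_{K_0}\ge c_{\mathrm{low}}\sqrt J/(\sqrt{MK}\,K_0)\), obtained from the deterministic signal \(C_{\mathrm{signal}}\sqrt J/(\sqrt K K_0)\) minus the noise \(C_{\mathrm{noise}}+1\). By Assumption~\ref{ass:A6} this lower bound is a constant fraction of the signal.
\end{itemize}
Dividing (a) by (b) gives \(r_{K_0}\le C'\sqrt{M}\sqrt K K_0/c_{\mathrm{low}}\le \sqrt M\sqrt K(K-1)/c_{\mathrm{low}}\) after absorbing \(C'\) into \(c_{\mathrm{low}}\) and using \(K_0\le K-1\). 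For strict inequality with probability tending to one, we leave slack, for example by taking \(c_{\mathrm{low}}\) slightly smaller or using the \(\eta_0\) buffer. The assumption \(K^3=o(N)\) enters when controlling the estimation error of \(\hat\Theta\) uniformly over classes, so that the noise in (a) and (b) is \(o(\sqrt J/(\sqrt K K_0))\).

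\textbf{Main obstacle.} The hardest step is making the constants in (a) and (b) compatible so that the bound is exactly \(\frac{\sqrt M}{c_{\mathrm{low}}}\sqrt K(K-1)\). In particular, the upper bound on \(T_{K_0-1}\) must be of order \(\sqrt J\) with an absolute constant, and this needs a uniform lower bound on \(\hat{\mathcal V}\) under misspecification, where the estimator \(\mathcal M\) is not consistent. We would first try to derive it from the clipping/averaging structure of \(\hat\Theta\). Failing that, we would define \(c_{\mathrm{low}}\) in Lemma~\ref{lem:ratio_lower} to already incorporate the normalization constant.
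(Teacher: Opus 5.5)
Your Part 1 is correct and follows the paper. You get $T_{K-1}>2\eta_0$ from Theorem~\ref{thm:alt}. The two-sided statement $T_K=o_P(1)$ then follows from Lemma~\ref{lem:perturbation}, the upper bound of Lemma~\ref{lem:ideal_spectral}, the single-column lower bound $\|R^*\|\ge\|R^*e_1\|\to1$, and $J/N\to0$; this is exactly Lemma~\ref{lem:lower_bound_j_on}.

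\textbf{Part 2 has a genuine gap in step (a).} Your numerator bound needs a uniform lower bound $\hat{\mathcal V}(i,j)\ge v_{\min}$ under under-fitting, and nothing in the paper supplies it.
\begin{itemize}
\item $\hat\Theta$ is the raw mean of $R$ over each estimated class. SC-LCM only outputs $\hat Z$ and does no clipping.
\item A misspecified, data-dependent $\hat Z$ need not produce classes of size of order $N/K$. A small estimated class can have column means at, or arbitrarily near, $0$ or $M$.
\item The rate $O_P(\sqrt{K\log(JK)/N})$ you invoke comes from Lemma~\ref{lem:param_consistency}, which relies on exact recovery at $K_0=K$.
\end{itemize}
Even if you grant clipping, your entrywise route gives a constant larger than $\sqrt M$: for example $\sqrt{M/(\delta(1-\delta))}\ge 2\sqrt M$, plus the additive $1+\sqrt{J/N}$ from your absolute-value step. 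So you only reach $r_{K_0}\le (C'/c_{\mathrm{low}})\sqrt K\,(K-1)$. The constant $c_{\mathrm{low}}$ is fixed by Lemma~\ref{lem:ratio_lower}, so ``absorbing $C'$ into $c_{\mathrm{low}}$'' or redefining it proves a different statement. Note also that Lemma~\ref{lem:ratio_lower} gives $T_{K_0}\ge c_{\mathrm{low}}\sqrt J/(\sqrt K\,K_0)$ with no $\sqrt M$; the $\sqrt M$ in the theorem comes from the numerator.

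\textbf{The missing idea} is the deterministic, self-normalized bound of Lemma~\ref{lem:ratio_upper}. Write $n_k=|\hat C_k|$. Since $\hat\Theta_{jk}$ is the mean of $R(\cdot,j)$ over $\hat C_k$, and $R(i,j)^2\le MR(i,j)$ for $R(i,j)\in\{0,\dots,M\}$,
\[
\sum_{i\in\hat C_k}\bigl(R(i,j)-\hat\Theta_{jk}\bigr)^2=\sum_{i\in\hat C_k}R(i,j)^2-n_k\hat\Theta_{jk}^2\le n_k\hat\Theta_{jk}\bigl(M-\hat\Theta_{jk}\bigr)=n_kM\hat{\mathcal V}_{jk}.
\]
So $\hat{\mathcal V}_{jk}$ cancels and $\sum_{i\in\hat C_k}\tilde R(i,j)^2\le n_kM/N$; blocks with $\hat{\mathcal V}_{jk}=0$ are zero by definition. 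Summing over $j,k$ gives $\|\tilde R\|_F^2\le MJ$. Hence $T_{K_0-1}\le\sqrt{MJ}$ for every realization and every $\hat Z$, with no lower bound on $\hat{\mathcal V}$ needed.

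On the intersection of the events of Lemma~\ref{lem:ratio_lower} for $m=K_0-1$ and $m=K_0$, both statistics are positive, so you can drop the absolute value. Then
\[
r_{K_0}\le \frac{\sqrt{MJ}\,\sqrt K K_0}{c_{\mathrm{low}}\sqrt J}=\frac{\sqrt M}{c_{\mathrm{low}}}\sqrt K K_0\le\frac{\sqrt M}{c_{\mathrm{low}}}\sqrt K(K-1).
\]

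Finally, $K^3=o(N)$ does not enter through control of $\hat\Theta$. In the paper it is used only inside Lemma~\ref{lem:ratio_lower}, to absorb the centering term $\sqrt{J/N}=\frac{\sqrt K K_0}{\sqrt N}\cdot\frac{\sqrt J}{\sqrt K K_0}$ into the signal.
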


In contrast to the original test statistic \(T_{K_0}\), whose upper bound converges to zero under the null (Theorem~\ref{thm:null}) while the statistic itself exceeds a fixed positive constant under under‑fitting (Theorem~\ref{thm:alt}), the ratio \(r_{K_0}\) amplifies this difference. It diverges to infinity at the true model \(K_0 = K\), but for every under‑fitted \(K_0 < K\) it remains bounded above by a quantity depending on \(K\). Thus \(r_{K_0}\) exhibits a sharp peak at the true number of latent classes, suggesting a natural sequential stopping rule: compute \(r_{K_0}\) for increasing \(K_0\) and stop at the first value for which the ratio exceeds a diverging threshold \(\gamma_N\).  

We now propose a sequential algorithm based on this idea. The algorithm first checks the candidate \(K_0 = 1\) using the original test statistic with a decaying threshold \(\tau_N\); if accepted, it returns \(\hat K = 1\). Otherwise, it proceeds to compute ratios for \(K_0=2,3,\dots,K_{\max}\) and stops at the first \(K_0\) for which \(r_{K_0} > \gamma_N\). The choice of thresholds must satisfy the conditions in Theorem~\ref{thm:ratio_consistency} provided later to ensure consistent estimation of \(K\).

\begin{algorithm}[H]
\caption{RGoF-LCM: Ratio-based Goodness-of-Fit for Latent Class Models}\label{alg:rgof_lcm}
\begin{algorithmic}[1]
\Require{Observed response matrix $R \in \{0,1,\dots,M\}^{N\times J}$, maximum candidate number $K_{\max}$ (default: $\lfloor \sqrt{N/\log(N+J)} \rfloor$), threshold sequences $\tau_N$ (default $\tau_N = N^{-1/5}$) and $\gamma_N$ (default $\gamma_N = \log N$), and a classification estimator $\mathcal{M}$}
\Ensure{Estimated number of latent classes $\hat{K}$}
\State Compute $T_1$ with $K_0=1$
\If{$T_1 < \tau_N$}
    \State \Return $\hat{K} = 1$
\EndIf
\For{$K_0 = 2, 3, \dots, K_{\max}$}
    \State Compute $r_{K_0}$ via Equation (\ref{eq:ratio_def})
    \If{$r_{K_0} > \gamma_N$}
        \State \Return $\hat{K} = K_0$
        \State \textbf{break} 
    \EndIf
\EndFor
\State \Return $\hat{K}$
\end{algorithmic}
\end{algorithm}

We now prove that under appropriate conditions on the threshold sequences, the RGoF-LCM algorithm consistently estimates the true number of latent classes \(K\). For this theorem, we assume that \(K\) is fixed (does not grow with \(N\)) to simplify our analysis and the choice of the threshold $\gamma_N$.

\begin{thm}[Consistency of RGoF-LCM]\label{thm:ratio_consistency}
Let the true number of latent classes be \(K\) (fixed, i.e., not growing with \(N\)).  
Assume that Assumptions~\ref{ass:A1}--\ref{ass:A6} hold.  
Let the threshold sequences \(\gamma_N\) satisfy:
\begin{enumerate}[(Con2)]
\item \(\gamma_N \xrightarrow[N\to\infty]{}\infty\) and \(\displaystyle \frac{\gamma_N}{\sqrt{N/\log J}}\xrightarrow[N\to\infty]{}0\).
\end{enumerate}

Then the estimator \(\hat K\) produced by Algorithm~\ref{alg:rgof_lcm} satisfies
\[
\lim_{N\to\infty} \mathbb{P}(\hat K = K) = 1.\]
\end{thm}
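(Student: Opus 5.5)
The plan is to decompose the event $\{\hat K\neq K\}$ according to the steps of Algorithm~\ref{alg:rgof_lcm} and show that each failure mode has vanishing probability. Since $K$ is fixed, there are only finitely many candidates $K_0\le K$ to control, so a union bound over $K_0\in\{1,\dots,K\}$ costs nothing. On the event $\{\hat K=K\}$, the algorithm must (a) not stop at the first step when $K\ge2$, i.e.\ $T_1\ge\tau_N$; (b) not stop at any $2\le K_0<K$, i.e.\ $r_{K_0}\le\gamma_N$; and (c) stop at $K_0=K$, i.e.\ $r_K>\gamma_N$ (or $T_1<\tau_N$ when $K=1$). We therefore show
\[
\mathbb{P}(\hat K\neq K)\le \mathbf{1}_{\{K\ge2\}}\mathbb{P}(T_1<\tau_N)+\sum_{K_0=2}^{K-1}\mathbb{P}(r_{K_0}>\gamma_N)+\mathbb{P}(\text{stop fails at }K)\to0 .
\]

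\textbf{Case $K=1$.} Theorem~\ref{thm:null} gives $\mathbb{P}(T_1<\epsilon)\to1$ for each fixed $\epsilon$, but $\tau_N\to0$, so this alone is not enough. We need the quantitative bound behind Theorem~\ref{thm:consistency}: $T_K\le O_P\bigl(\sqrt{\max(JK\log(JK),\log(N+J))/N}\bigr)$, which is the rate implicit in (Con1) and in the proofs of Lemmas~\ref{lem:ideal_spectral}--\ref{lem:perturbation}. With the default $\tau_N=N^{-1/5}$ and Assumption~\ref{ass:A4}, (Con1) holds, so $\mathbb{P}(T_1<\tau_N)\to1$. I would reuse the argument of Theorem~\ref{thm:consistency} verbatim here, treating $\tau_N$ as satisfying (Con1), as is implicit in the algorithm's default.

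\textbf{Case $K\ge2$.} For (a), Theorem~\ref{thm:alt} with $K_0=1<K$ gives $T_1>2\eta_0>\tau_N$ with probability tending to one, since $\tau_N\to0$. For (b), fix $2\le K_0<K$. Part 2 of Theorem~\ref{thm:ratio_behavior}, which applies because $K$ fixed implies $K^3=o(N)$, gives $r_{K_0}\le \frac{\sqrt M}{c_{\mathrm{low}}}\sqrt K(K-1)$ with probability tending to one. This bound is a constant, while $\gamma_N\to\infty$ by (Con2), so eventually $\mathbb{P}(r_{K_0}>\gamma_N)\to0$. For (c), part 1 of Theorem~\ref{thm:ratio_behavior} gives $r_K\to\infty$ in probability, but we need $r_K>\gamma_N$ for a specific diverging sequence. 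This is the main obstacle.

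To handle (c), I would make part 1 quantitative. The numerator satisfies $|T_{K-1}|>2\eta_0$ with probability tending to one by Theorem~\ref{thm:alt}. For the denominator, I would show $|T_K|=O_P(\sqrt{\log J/N})$ when $K$ is fixed. The upper side comes from concentration of $\|R^*\|$ around $1+\sqrt{J/N}$ at that scale, plus the perturbation bound of Lemma~\ref{lem:perturbation} at a matching rate; this is where exact recovery (Assumption~\ref{ass:A5}) reduces $\tilde R-R^*$ to the effect of $\hat\Theta-\Theta$, whose entries are $O_P(\sqrt{\log(JK)/N})$. The lower side needs $\sigma_1(\tilde R)\ge 1+\sqrt{J/N}-O_P(\sqrt{\log J/N})$, i.e.\ a Bai--Yin/Tracy--Widom type lower deviation for the normalized noise matrix. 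If this two-sided rate is too delicate, a fallback is to use only $|T_K|=o_P(\gamma_N^{-1})$, which is exactly what the growth condition $\gamma_N/\sqrt{N/\log J}\to0$ in (Con2) is designed to supply. Then $r_K\ge 2\eta_0/|T_K|\gg\gamma_N$ with probability tending to one. Combining (a)--(c) by the union bound completes the proof.
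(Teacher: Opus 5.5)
Your treatment of the case $K=1$ and of steps (a) and (b) matches the paper's proof, including the tacit use of (Con1) for $\tau_N$. Step (c), however, has a genuine gap. You lower-bound the numerator only by the constant $2\eta_0$ from Theorem~\ref{thm:alt}, so you need $|T_K|=O_P(\sqrt{\log J/N})$. An admissible $\gamma_N$ can be as large as $\sqrt{N/\log J}/\omega_N$ for any slowly diverging $\omega_N$. Your ``fallback'' $|T_K|=o_P(\gamma_N^{-1})$ is therefore, uniformly over admissible thresholds, the same requirement. Moreover, (Con2) constrains the threshold, not $T_K$, so it cannot supply that bound.

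The tools you cite do not give this rate:
\begin{itemize}
\item Lemma~\ref{lem:ideal_spectral} has no rate at all, because the factor $(1+\eta)$ from Lemma~\ref{BanderiaCor} leaves a gap of order $\eta$.
\item The perturbation step fails even in principle. Under exact recovery, $\hat{\mathcal R}=P_ZR$ with $P_Z=Z(Z^\top Z)^{-1}Z^\top$. Hence the term $E_2$ in the proof of Lemma~\ref{lem:perturbation} is essentially $-P_ZR^*$. Its norm equals that of the $K\times J$ matrix $(Z^\top Z)^{-1/2}Z^\top R^*$, which has independent entries of variance $1/N$. That norm is of order $\sqrt{J/N}\gg\sqrt{\log J/N}$ once $J\to\infty$.
\end{itemize}
Small entrywise error of $\hat\Theta$ does not make $\|\tilde R-R^*\|$ small at that scale, so Weyl's inequality loses exactly what you need. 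Reaching $\sqrt{\log J/N}$ would require a non-Weyl comparison, such as $\sigma_{K+1}(R^*)\le\sigma_1((I-P_Z)R^*)\le\sigma_1(R^*)$. It would also require Tracy--Widom or rigidity-type control of the spectral edge of $R^*$. None of this is in your plan.

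The missing idea is to let the numerator grow. The paper applies Lemma~\ref{lem:ratio_lower} with $K_0=K-1$; its condition $K^3=o(N)$ is trivial for fixed $K$. This gives $T_{K-1}\ge c_{\mathrm{low}}\sqrt J/(\sqrt K(K-1))$ with probability tending to one. The paper pairs this with the coarse bound $|T_K|\le\tilde C_2\sqrt{JK\log(JK)/N}$, which it extracts from the proofs of Lemmas~\ref{lem:ideal_spectral} and~\ref{lem:perturbation} via Weyl's inequality. The factor $\sqrt J$ cancels, giving $r_K\ge\tilde C_3\sqrt{N/\log(JK)}$. Because $K$ is fixed, (Con2) then guarantees that this eventually exceeds $\gamma_N$.

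This route demands a much weaker rate on $T_K$ than yours. At that coarser scale, the lower deviation you were worried about is also harmless. A single column of $R^*$ already gives $\sigma_1(R^*)\ge 1-O_P(N^{-1/2})$, and the leftover $\sqrt{J/N}$ is absorbed by $\sqrt{JK\log(JK)/N}$.
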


Theorem~\ref{thm:ratio_consistency} guarantees that RGoF-LCM consistently recovers the true number of latent classes when \(K\) is fixed. Condition (Con2) ensures that the diverging threshold \(\gamma_N\) grows fast enough to be exceeded by \(r_K\) at the true model (Theorem~\ref{thm:ratio_behavior} part 1), yet slowly enough so that under‑fitted ratios \(r_{K_0}\) (\(K_0<K\)) stay below \(\gamma_N\) with high probability (Theorem~\ref{thm:ratio_behavior} part 2). The two requirements together prevent both under‑estimation and over‑estimation asymptotically.

\begin{rem}[Why \(K\) is assumed fixed]
Theorem~\ref{thm:ratio_consistency} assumes that the true number of latent classes \(K\) does not grow with the sample size. This assumption is made for theoretical convenience: the upper bound for the under‑fitted ratios \(r_{K_0}\) in Theorem~\ref{thm:ratio_behavior} contains a factor \(\sqrt{K}(K-1)\) that depends on \(K\). When \(K\) is fixed, this factor is a constant, and the condition on \(\gamma_N\) can be expressed in a simple form independent of \(K\). If \(K\) were allowed to increase with \(N\), the bound would grow with \(K\), and the choice of a diverging threshold \(\gamma_N\) would have to depend on the growth rate of \(K\) as well. Such an extension is possible in principle but would make the analysis considerably more complex. We therefore restrict ourselves to the fixed‑\(K\) setting. In practice, as long as \(K\) is small relative to the sample size, the fixed‑\(K\) analysis provides reliable guidance for selecting \(\gamma_N\).
\end{rem}
\begin{rem}[Choice of \(\gamma_N\)]\label{rem:thresholds}
A simple and theoretically valid default is \(\gamma_N = \log N\).  
Indeed \(\log N \to \infty\), and under Assumption~\ref{ass:A4} we have \(J = o(N)\), hence \(\log J \le \log N\) for all sufficiently large \(N\). Consequently,
\[
\frac{\log N}{\sqrt{N/\log J}} \le \frac{\log N}{\sqrt{N/\log N}} = \frac{(\log N)^{3/2}}{\sqrt{N}} \xrightarrow[N\to\infty]{} 0,
\]
so \(\gamma_N = \log N\) satisfies Condition (Con2). In practice, other slowly diverging sequences such as \(\log\log N\) are also admissible as long as they meet the growth restrictions.
\end{rem}
\section{Numerical Studies}\label{sec:numerical}

In this section, we conduct comprehensive experimental studies to evaluate the performance of the proposed goodness-of-fit test and the two sequential estimation algorithms GoF-LCM and RGoF-LCM. Our numerical experiments are designed to empirically validate the theoretical properties established in Theorems~\ref{thm:null} to~\ref{thm:ratio_consistency}. Specifically, we investigate:
\begin{enumerate}
    \item The behavior of the test statistic $T_{K_0}$ and the ratio statistic $r_{K_0}$ under both the null hypothesis ($H_0: K = K_0$) and the alternative hypothesis ($H_1: K > K_0$).
    \item The acceptance rates of GoF-LCM and RGoF-LCM under the true model and their rejection rates under underfitted models.
    \item The accuracy of GoF-LCM and RGoF-LCM in estimating the true number of latent classes $K$ under various combinations of sample size $N$, number of items $J$, signal strength parameter $\delta$, and true number of latent classes $K$.
    \item The computational efficiency of the two proposed methods when the sample size $N$ increases.
    \item The sensitivity of the algorithms to their thresholds: $\tau_N$ for GoF-LCM and $\gamma_N$ for RGoF-LCM.
    \item The robustness of GoF-LCM and RGoF-LCM to the number of items $J$ when $J$ grows much faster than $N$, thereby violating the growth condition $J=o(N)$ required by Assumption~\ref{ass:A4}.
\end{enumerate}

\subsection{General simulation setup}

Data are generated from the latent class models for ordinal categorical data as defined in Definition~\ref{def:LCM_ordinal}. The generation process follows the steps outlined below, with all parameters chosen to satisfy Assumptions~\ref{ass:A1}--\ref{ass:A3} unless otherwise noted.

\paragraph{Class membership matrix $Z$}
For a given true number of latent classes $K$, we assign each of the $N$ subjects to one of the $K$ classes independently with equal probability $1/K$. This yields a membership vector $\ell \in [K]^N$, and the classification matrix $Z \in \{0,1\}^{N \times K}$ is defined by $Z(i,k) = \mathbf{1}_{\{\ell(i)=k\}}$. This random assignment ensures that Assumption~\ref{ass:A2} (class balance) holds with high probability for sufficiently large $N$.

\paragraph{Item parameter matrix $\Theta$}
We generate $\Theta \in [0,M]^{J \times K}$ with controlled signal strength and class separation as follows.  Let $M=5$ be fixed throughout all experiments.  For each $j \in [J]$ and $k \in [K]$, independently draw
\[
\theta_{jk} \sim \mathrm{Uniform}[\delta M,\; (1-\delta)M],
\]
where $\delta \in (0,0.5]$ is the signal strength parameter.  This construction guarantees $\delta \le \theta_{jk}/M \le 1-\delta$ by definition, thereby satisfying Assumption~\ref{ass:A1}.

Assumption~\ref{ass:A3} requires that for any two distinct classes \(k_1 \neq k_2\), at least a fraction \(c_1\) of the items exhibit a difference in expectation at least \(\zeta/2\).  We set \(c_1 = 0.3\) throughout and define the separation threshold as \(\zeta = (1-2\delta)M/2\), which is half the length of the uniform interval.  In our experiments, the number of items is at least \(J = 60\) and the maximum number of classes is \(K = 8\). For a fixed pair of distinct classes \(k_1 \neq k_2\), consider the indicator
\[
X_j = \mathbf{1}\bigl\{|\theta_{j,k_1} - \theta_{j,k_2}| \ge \zeta/2\bigr\},\quad j = 1,\dots,J.
\]

Because \(\theta_{j,k_1}\) and \(\theta_{j,k_2}\) are independent and uniformly distributed, a simple geometric argument yields
\[
\mathbb{P}(X_j = 1) = \left(1 - \frac{\zeta/2}{(1-2\delta)M}\right)^2 = \left(\frac{3}{4}\right)^2 = \frac{9}{16},
\]
which is independent of \(\delta\).  Hence \(\mathbb{E}\bigl[\sum_{j=1}^J X_j\bigr] = \frac{9J}{16}\), which for \(J = 60\) equals \(33.75\), well above the required \(c_1 J = 18\).  By Hoeffding's inequality, the probability that a single pair fails to meet the condition is bounded by
\[
\mathbb{P}\Bigl(\sum_{j=1}^J X_j < 0.3J\Bigr) \le \exp\!\left(-2J\Bigl(\frac{9}{16} - 0.3\Bigr)^2\right) = \exp(-0.1378125\,J).
\]

For \(J = 60\) this bound is \(\exp(-8.26875) \approx 2.57\times 10^{-4}\).  A union bound over all \(\binom{K}{2}\) class pairs (with \(K = 8\), \(\binom{8}{2}=28\)) gives an overall failure probability at most
\[
\binom{8}{2} \exp(-0.1378125\times 60)\approx 0.0072.
\]

Thus, with probability exceeding \(0.9928\), a randomly generated \(\Theta\) matrix automatically satisfies Assumption~\ref{ass:A3}.  For larger \(J\) this probability rapidly approaches \(1\) (e.g., for \(J=70\) it exceeds \(0.998\)).  Given this strong theoretical guarantee, no rejection sampling or verification step is needed; all matrices are used directly in the simulations.  The signal strength parameter \(\delta\) directly controls the width of the admissible interval \([\delta M,(1-\delta)M]\): larger \(\delta\) reduces \(\zeta\), weakening the signal, exactly as intended.

\paragraph{Response matrix $R$}
Given $Z$ and $\Theta$, the expected response matrix is $\mathcal{R} = Z\Theta^\top$. For each $i \in [N]$ and $j \in [J]$, the response $R(i,j)$ is independently drawn from a Binomial distribution with $M$ trials and success probability $\mathcal{R}(i,j)/M$.

\paragraph{Estimation and evaluation}
For each simulated dataset, we apply the SC-LCM algorithm (Algorithm~\ref{alg:sc_lcm} in \ref{app:sc_lcm}) as the classification estimator $\mathcal{M}$ to obtain $\hat{Z}$ and $\hat{\Theta}$. The test statistic $T_{K_0}$ is then computed using Equation~\eqref{eq:test_stat}, and the ratio $r_{K_0}$ is computed via Equation~\eqref{eq:ratio_def}. For the sequential algorithms, we use the default thresholds: $\tau_N = N^{-1/5}$ for GoF-LCM and $\gamma_N = \log N$ for RGoF-LCM, unless otherwise stated. The maximum candidate number is set to $K_{\max} = \lfloor \sqrt{N / \log(N+J)} \rfloor$. All results are based on 200 independent Monte Carlo replications.

\subsection{Experiment 1: behavior of $T_{K_0}$ and $r_{K_0}$ under $H_0$ and $H_1$}
This experiment empirically verifies the sharp dichotomous behavior of $T_{K_0}$ (Theorems~\ref{thm:null} and~\ref{thm:alt}) and $r_{K_0}$ (Theorem~\ref{thm:ratio_behavior}). We fix the true number of latent classes $K=4$, the number of items $J=60$, and the signal strength parameter $\delta=0.2$. We then compute $T_{K_0}$ for $K_0 = 1,2,3,4$ and the ratio statistics $r_2 = |T_1/T_2|$, $r_3 = |T_2/T_3|$, and $r_4 = |T_3/T_4|$. We vary the sample size $N \in \{200, 400, 600, 800, 1000\}$. For each $N$ and each candidate $K_0$, we compute the mean and standard deviation of $T_{K_0}$ over 200 replications.

Table~\ref{tab:stat_behavior_K4} reports the results. Under the correctly specified model ($K_0 = K = 4$), the mean of $T_4$ is close to zero and its absolute value decreases as $N$ grows, with the standard deviation also shrinking. This confirms Theorem~\ref{thm:null}. For the under‑fitted models ($K_0 = 1,2,3$), $T_{K_0}$ takes large positive values that increase with $N$, validating the divergence predicted by Theorem~\ref{thm:alt}. The ratios $r_2 = |T_1/T_2|$ and $r_3 = |T_2/T_3|$ remain bounded between $1.16$ and $1.21$ as $N$ increases, consistent with part (b) of Theorem~\ref{thm:ratio_behavior} for $K_0 < K$. In contrast, $r_4 = |T_3/T_4|$ diverges because $T_3$ grows while $|T_4|$ tends to zero, confirming part (a) of Theorem~\ref{thm:ratio_behavior} for the true candidate $K_0 = K$.
\begin{table}[htbp!]
\caption{Behavior of $T_{K_0}$ and ratios $r_2, r_3, r_4$ under $H_0$ ($K_0=4$) and under‑fitted models ($K_0=1,2,3$) for true $K=4$. Values are mean (standard deviation) over 200 replications.}
\label{tab:stat_behavior_K4}
\centering
\footnotesize
\renewcommand{\arraystretch}{1.2}
\begin{tabular}{c|cccc|ccc}
\hline
\multirow{2}{*}{$N$} & \multicolumn{4}{c|}{$T_{K_0}$} & \multicolumn{3}{c}{Ratios} \\
\cline{2-8}
 & $K_0=1$ ($H_1$) & $K_0=2$ ($H_1$) & $K_0=3$ ($H_1$) & $K_0=4$ ($H_0$) & $r_2 = |T_1/T_2|$ & $r_3 = |T_2/T_3|$ & $r_4 = |T_3/T_4|$ \\
\hline
200  & 2.020 (0.171) & 1.706 (0.147) & 1.418 (0.149) & -0.037 (0.026) & 1.19 (0.12) & 1.21 (0.13) & 255.42 (2952.98) \\
400  & 2.135 (0.167) & 1.835 (0.147) & 1.551 (0.153) & -0.024 (0.018) & 1.17 (0.10) & 1.19 (0.12) & 157.51 (486.75) \\
600  & 2.214 (0.174) & 1.891 (0.147) & 1.620 (0.157) & -0.018 (0.014) & 1.18 (0.10) & 1.18 (0.12) & 229.62 (676.48) \\
800  & 2.257 (0.167) & 1.933 (0.152) & 1.662 (0.153) & -0.017 (0.011) & 1.17 (0.11) & 1.17 (0.12) & 388.22 (2547.19) \\
1000  & 2.278 (0.169) & 1.956 (0.150) & 1.690 (0.161) & -0.014 (0.010) & 1.17 (0.10) & 1.16 (0.12) & 450.91 (2527.84) \\
\hline
\end{tabular}
\end{table}

\subsection{Experiment 2: acceptance and rejection rates}

This experiment evaluates the ability of both sequential algorithms, GoF-LCM and RGoF-LCM, to correctly accept the true model and reject underfitted models. We fix $N=1000$, $J=60$, $\delta=0.2$, and consider true numbers of latent classes $K \in \{2,3,4,5,6\}$. For each true $K$, we simulate 200 independent datasets. On each dataset, we apply both algorithms:
\begin{itemize}
    \item For GoF-LCM, we record the stopping $K_0$ as the first candidate for which $T_{K_0} < \tau_N$ (with $\tau_N = N^{-1/5}$).
    \item For RGoF-LCM, we record the stopping $K_0$ as the first candidate for which $r_{K_0} > \gamma_N$ (with $\gamma_N = \log N$). If $T_1 < \tau_N$, the algorithm stops at $K_0=1$; otherwise, it proceeds to compute $r_{K_0}$ for $K_0 \ge 2$.
\end{itemize}

We then compute the proportion of times each algorithm stops at each candidate $K_0$ over the 200 replications. 

Table~\ref{tab:accept_reject_both_ext} presents the results. For each true $K$, the table shows the stopping proportions for GoF-LCM and RGoF-LCM side by side, with columns covering $K_0 = 1$ to $10$ to fully capture overfitting behavior. The main findings are:
\begin{itemize}
    \item Both algorithms exhibit high acceptance rates at the true model. For $K=2,3,4,5$, both methods stop at the true $K$ in all $200$ replications. For $K=6$, GoF‑LCM stops at $K_0=6$ in all replications, while RGoF‑LCM does so in $99.5\%$ of the cases, with a small overfitting probability of $0.5\%$ stopping at $K_0=7$.
    \item The probability of stopping at an underfitted model ($K_0 < K$) is effectively zero for both algorithms in every case.
    \item Overfitting ($K_0 > K$) occurs with very low probability, only observed in the $K=6$ case for RGoF‑LCM.
\end{itemize}

These results empirically confirm the consistency of both sequential algorithms.
\begin{table}[htbp!]
\caption{Proportion of times GoF-LCM and RGoF-LCM stop at each candidate $K_0$ for true $K=2,3,4,5,6$. Columns cover $K_0=1$ to $10$.}
\label{tab:accept_reject_both_ext}
\centering
\small
\renewcommand{\arraystretch}{1.2}
\begin{tabular}{c|c|cccccccccc}
\hline
True $K$ & Algorithm & \multicolumn{10}{c}{Stopping $K_0$} \\
\cline{3-12}
 &  & 1 & 2 & 3 & 4 & 5 & 6 & 7 & 8 & 9 & 10 \\
\hline
\multirow{2}{*}{2} & GoF-LCM   & 0.000 & \textbf{1.000} & 0.000 & 0.000 & 0.000 & 0.000 & 0.000 & 0.000 & 0.000 & 0.000 \\
 & RGoF-LCM  & 0.000 & \textbf{1.000} & 0.000 & 0.000 & 0.000 & 0.000 & 0.000 & 0.000 & 0.000 & 0.000 \\
\hline
\multirow{2}{*}{3} & GoF-LCM   & 0.000 & 0.000 &\textbf{ 1.000} & 0.000 & 0.000 & 0.000 & 0.000 & 0.000 & 0.000 & 0.000 \\
 & RGoF-LCM  & 0.000 & 0.000 & \textbf{1.000} & 0.000 & 0.000 & 0.000 & 0.000 & 0.000 & 0.000 & 0.000 \\
\hline
\multirow{2}{*}{4} & GoF-LCM   & 0.000 & 0.000 & 0.000 & \textbf{1.000} & 0.000 & 0.000 & 0.000 & 0.000 & 0.000 & 0.000 \\
 & RGoF-LCM  & 0.000 & 0.000 & 0.000 & \textbf{1.000} & 0.000 & 0.000 & 0.000 & 0.000 & 0.000 & 0.000 \\
\hline
\multirow{2}{*}{5} & GoF-LCM   & 0.000 & 0.000 & 0.000 & 0.000 & \textbf{1.000} & 0.000 & 0.000 & 0.000 & 0.000 & 0.000 \\
 & RGoF-LCM  & 0.000 & 0.000 & 0.000 & 0.000 & \textbf{1.000} & 0.000 & 0.000 & 0.000 & 0.000 & 0.000 \\
\hline
\multirow{2}{*}{6} & GoF-LCM   & 0.000 & 0.000 & 0.000 & 0.000 & 0.000 & \textbf{1.000} & 0.000 & 0.000 & 0.000 & 0.000 \\
 & RGoF-LCM  & 0.000 & 0.000 & 0.000 & 0.000 & 0.000 & \textbf{0.995} & 0.005 & 0.000 & 0.000 & 0.000 \\
\hline
\end{tabular}
\end{table}
\subsection{Experiment 3: accuracy in estimating $K$}

We now assess the estimation accuracy of the proposed methods GoF-LCM and RGoF-LCM, and compare them with the spectral thresholding method (denoted Spec) proposed in Equation (17) of  \citet{lyu2025spectral}, which estimates $K$ by counting singular values exceeding $2.01(\sqrt{J}+\sqrt{N})$. The parameters are: $N \in \{200, 600\}$, $J \in \{60, 100\}$, $\delta \in \{0.1, 0.2, 0.3\}$, and $K \in \{1,2,3,4\}$. For each combination, we generate 200 independent datasets and apply all three methods. Accuracy is defined as the proportion of replications where the estimated $\hat{K}$ equals the true $K$. Standard errors are computed as $\sqrt{\text{accuracy}\times(1-\text{accuracy})/200}$ and are shown in parentheses.

Table~\ref{tab:accuracy_full} reports the accuracy for all $48$ parameter combinations. The main findings are:
\begin{itemize}
   \item The case $K=1$ is trivial: all methods achieve perfect accuracy across all settings.
    \item GoF-LCM and RGoF-LCM achieve near-perfect accuracy across almost all settings. For all combinations except one, both methods attain an accuracy of $1.000$. The only exception is $K=4,N=200,J=60,\delta=0.3$, where GoF-LCM yields $0.985$ and RGoF-LCM yields $0.990$. This demonstrates the strong consistency and reliability of the proposed sequential testing procedures, which are able to recover the true number of latent classes even under relatively weak signal conditions.
    \item The spectral thresholding method (Spec) performs unreliably, with accuracy ranging from perfect to zero depending on the signal strength. Its accuracy depends critically on the strength of the signal, which is controlled by $\delta$, $K$, $N$, and $J$. Recall that $\delta$ is a signal strength parameter: smaller $\delta$ allows the class-specific parameters $\Theta(j,k)$ to take values near the extremes $0$ or $M$, creating substantial differences between classes and a strong signal. Larger $\delta$ confines $\Theta(j,k)$ to a narrow interval around $M/2$, reducing class separation and weakening the signal. The data confirm this interpretation:
            \begin{itemize}
        \item At $\delta = 0.1$ (strong signal), Spec performs excellently, with accuracy $1.000$ in all but one case ($K=4,N=200,J=60,\delta=0.1$ yields $0.995$).
        \item At $\delta = 0.2$ (moderate signal), Spec remains good overall, but some degradation appears: $K=3,N=200,J=60,\delta=0.2$ gives $0.995$, and $K=4,N=200,J=60,\delta=0.2$ drops to $0.520$.
        \item At $\delta = 0.3$ (weak signal), Spec frequently fails. For $K=3$, accuracy ranges from $0.000$ to $0.995$; for $K=4$, it is $0.000$ in three of four combinations, reaching only $0.565$ at the largest sample size ($N=600,J=100$). 
    \end{itemize}
    \item While Spec is simple and computationally cheap, its performance is highly sensitive to the signal strength. In weak-signal regimes ($\delta=0.3$), it can completely fail. In contrast, GoF-LCM and RGoF-LCM adaptively test the fit of candidate models and maintain near-perfect accuracy across all scenarios, including the weakest signal considered. The slight advantage of RGoF-LCM over GoF-LCM in the hardest case ($K=4,N=200,J=60,\delta=0.3$) confirms its marginally improved robustness
\end{itemize}

These results highlight the practical advantage of the proposed sequential testing framework: it delivers highly accurate and robust estimates of the number of latent classes without requiring manual tuning, whereas the simple spectral thresholding method can fail catastrophically when the signal is weak.
\begin{table}[htbp!]
\caption{Accuracy (with standard error in parentheses) of GoF-LCM, RGoF-LCM, and the spectral thresholding method (Spec) for all parameter combinations.}
\label{tab:accuracy_full}
\centering
\footnotesize
\renewcommand{\arraystretch}{1.2}
\begin{tabular}{cccc|c@{\hspace{0.5cm}}c@{\hspace{0.5cm}}c}
\hline
\multicolumn{4}{c|}{Parameters} & \multicolumn{1}{c}{GoF-LCM} & \multicolumn{1}{c}{RGoF-LCM} & \multicolumn{1}{c}{Spec} \\
$K$ & $N$ & $J$ & $\delta$ & \multicolumn{1}{c}{Accuracy (S.E.)} & \multicolumn{1}{c}{Accuracy (S.E.)} & \multicolumn{1}{c}{Accuracy (S.E.)} \\
\hline
1 & 200 & 60 & 0.1 & 1.000 (0.000) & 1.000 (0.000) & 1.000 (0.000) \\
1 & 200 & 60 & 0.2 & 1.000 (0.000) & 1.000 (0.000) & 1.000 (0.000) \\
1 & 200 & 60 & 0.3 & 1.000 (0.000) & 1.000 (0.000) & 1.000 (0.000) \\
1 & 200 & 100 & 0.1 & 1.000 (0.000) & 1.000 (0.000) & 1.000 (0.000) \\
1 & 200 & 100 & 0.2 & 1.000 (0.000) & 1.000 (0.000) & 1.000 (0.000) \\
1 & 200 & 100 & 0.3 & 1.000 (0.000) & 1.000 (0.000) & 1.000 (0.000) \\
1 & 600 & 60 & 0.1 & 1.000 (0.000) & 1.000 (0.000) & 1.000 (0.000) \\
1 & 600 & 60 & 0.2 & 1.000 (0.000) & 1.000 (0.000) & 1.000 (0.000) \\
1 & 600 & 60 & 0.3 & 1.000 (0.000) & 1.000 (0.000) & 1.000 (0.000) \\
1 & 600 & 100 & 0.1 & 1.000 (0.000) & 1.000 (0.000) & 1.000 (0.000) \\
1 & 600 & 100 & 0.2 & 1.000 (0.000) & 1.000 (0.000) & 1.000 (0.000) \\
1 & 600 & 100 & 0.3 & 1.000 (0.000) & 1.000 (0.000) & 1.000 (0.000) \\
2 & 200 & 60 & 0.1 & 1.000 (0.000) & 1.000 (0.000) & 1.000 (0.000) \\
2 & 200 & 60 & 0.2 & 1.000 (0.000) & 1.000 (0.000) & 1.000 (0.000) \\
2 & 200 & 60 & 0.3 & 1.000 (0.000) & 1.000 (0.000) & 0.880 (0.023) \\
2 & 200 & 100 & 0.1 & 1.000 (0.000) & 1.000 (0.000) & 1.000 (0.000) \\
2 & 200 & 100 & 0.2 & 1.000 (0.000) & 1.000 (0.000) & 1.000 (0.000) \\
2 & 200 & 100 & 0.3 & 1.000 (0.000) & 1.000 (0.000) & 1.000 (0.000) \\
2 & 600 & 60 & 0.1 & 1.000 (0.000) & 1.000 (0.000) & 1.000 (0.000) \\
2 & 600 & 60 & 0.2 & 1.000 (0.000) & 1.000 (0.000) & 1.000 (0.000) \\
2 & 600 & 60 & 0.3 & 1.000 (0.000) & 1.000 (0.000) & 1.000 (0.000) \\
2 & 600 & 100 & 0.1 & 1.000 (0.000) & 1.000 (0.000) & 1.000 (0.000) \\
2 & 600 & 100 & 0.2 & 1.000 (0.000) & 1.000 (0.000) & 1.000 (0.000) \\
2 & 600 & 100 & 0.3 & 1.000 (0.000) & 1.000 (0.000) & 1.000 (0.000) \\
3 & 200 & 60 & 0.1 & 1.000 (0.000) & 1.000 (0.000) & 1.000 (0.000) \\
3 & 200 & 60 & 0.2 & 1.000 (0.000) & 1.000 (0.000) & 0.995 (0.005) \\
3 & 200 & 60 & 0.3 & 1.000 (0.000) & 1.000 (0.000) & 0.000 (0.000) \\
3 & 200 & 100 & 0.1 & 1.000 (0.000) & 1.000 (0.000) & 1.000 (0.000) \\
3 & 200 & 100 & 0.2 & 1.000 (0.000) & 1.000 (0.000) & 1.000 (0.000) \\
3 & 200 & 100 & 0.3 & 1.000 (0.000) & 1.000 (0.000) & 0.440 (0.035) \\
3 & 600 & 60 & 0.1 & 1.000 (0.000) & 1.000 (0.000) & 1.000 (0.000) \\
3 & 600 & 60 & 0.2 & 1.000 (0.000) & 1.000 (0.000) & 1.000 (0.000) \\
3 & 600 & 60 & 0.3 & 1.000 (0.000) & 1.000 (0.000) & 0.450 (0.035) \\
3 & 600 & 100 & 0.1 & 1.000 (0.000) & 1.000 (0.000) & 1.000 (0.000) \\
3 & 600 & 100 & 0.2 & 1.000 (0.000) & 1.000 (0.000) & 1.000 (0.000) \\
3 & 600 & 100 & 0.3 & 1.000 (0.000) & 1.000 (0.000) & 0.995 (0.005) \\
4 & 200 & 60 & 0.1 & 1.000 (0.000) & 1.000 (0.000) & 0.995 (0.005) \\
4 & 200 & 60 & 0.2 & 1.000 (0.000) & 1.000 (0.000) & 0.520 (0.035) \\
4 & 200 & 60 & 0.3 & 0.985 (0.009) & 0.990 (0.007) & 0.000 (0.000) \\
4 & 200 & 100 & 0.1 & 1.000 (0.000) & 1.000 (0.000) & 1.000 (0.000) \\
4 & 200 & 100 & 0.2 & 1.000 (0.000) & 1.000 (0.000) & 1.000 (0.000) \\
4 & 200 & 100 & 0.3 & 1.000 (0.000) & 1.000 (0.000) & 0.000 (0.000) \\
4 & 600 & 60 & 0.1 & 1.000 (0.000) & 1.000 (0.000) & 1.000 (0.000) \\
4 & 600 & 60 & 0.2 & 1.000 (0.000) & 1.000 (0.000) & 0.980 (0.010) \\
4 & 600 & 60 & 0.3 & 1.000 (0.000) & 1.000 (0.000) & 0.000 (0.000) \\
4 & 600 & 100 & 0.1 & 1.000 (0.000) & 1.000 (0.000) & 1.000 (0.000) \\
4 & 600 & 100 & 0.2 & 1.000 (0.000) & 1.000 (0.000) & 1.000 (0.000) \\
4 & 600 & 100 & 0.3 & 1.000 (0.000) & 1.000 (0.000) & 0.565 (0.035) \\
\hline
\end{tabular}
\end{table}
\subsection{Experiment 4: computational efficiency under weak signal}
We investigate the performance of GoF-LCM, RGoF-LCM, and Spec in a challenging weak-signal scenario with signal strength parameter $\delta=0.3$, number of items $J=60$, true number of latent classes $K=8$, and sample size $N$ ranging from $400$ to $4000$ in increments of $400$. For each configuration, we generate $200$ independent datasets and apply both algorithms. The accuracy (proportion of correct $K$ estimates) and the average running time (in seconds) are recorded. Figure~\ref{fig:exp4} displays the results. Under this weak signal scenario, RGoF-LCM demonstrates strong robustness, achieving high accuracy even at the smallest sample size and reaching perfect accuracy for almost all samples. GoF-LCM, on the other hand, has low accuracy at $N=400$ and $N=800$, but its performance improves rapidly as $N$ increases, reaching perfect accuracy for $N \ge 1600$. In contrast, Spec fails completely in this weak-signal regime: its accuracy is almost always zero. All three methods are computationally efficient. The running times of GoF-LCM and RGoF-LCM scale approximately linearly with $N$, and even at $N = 4000$, the average time remains below $0.6$ seconds. Spec is extremely fast, with negligible running time across all sample sizes, due to its simple singular value thresholding procedure. These results confirm that RGoF-LCM is the method of choice in weak-signal settings, offering both high accuracy and fast computation across the entire range of sample sizes. GoF-LCM also achieves excellent accuracy once the sample size is sufficiently large, while Spec, despite its speed, is unreliable when the signal is weak.

\begin{figure}
\centering
\resizebox{\columnwidth}{!}{
{\includegraphics[width=3\textwidth]{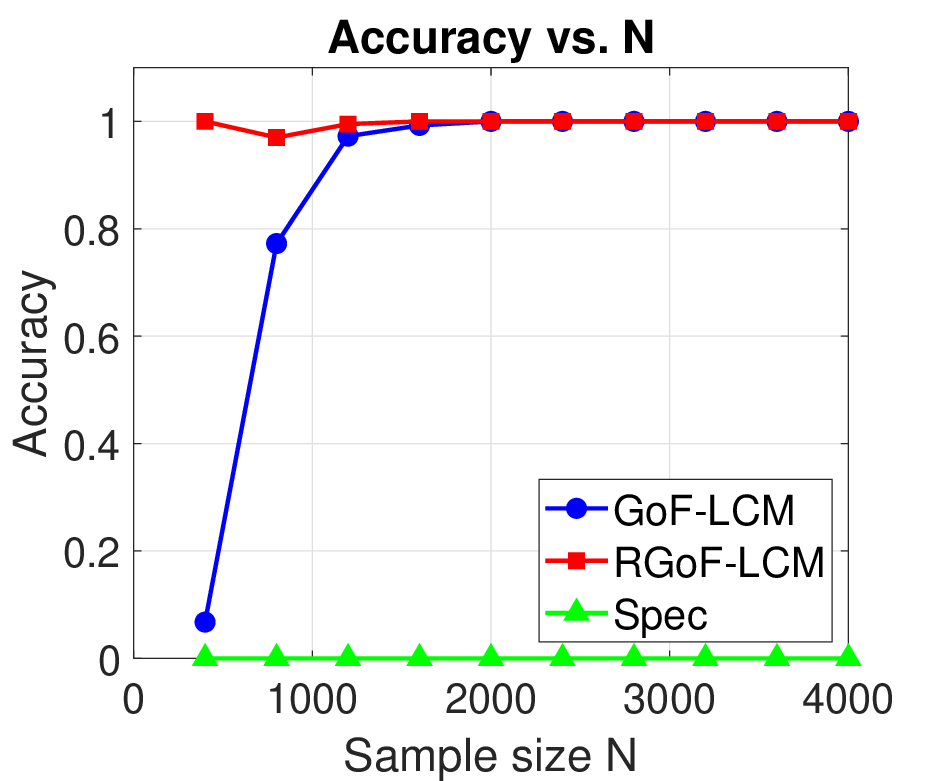}}
{\includegraphics[width=3\textwidth]{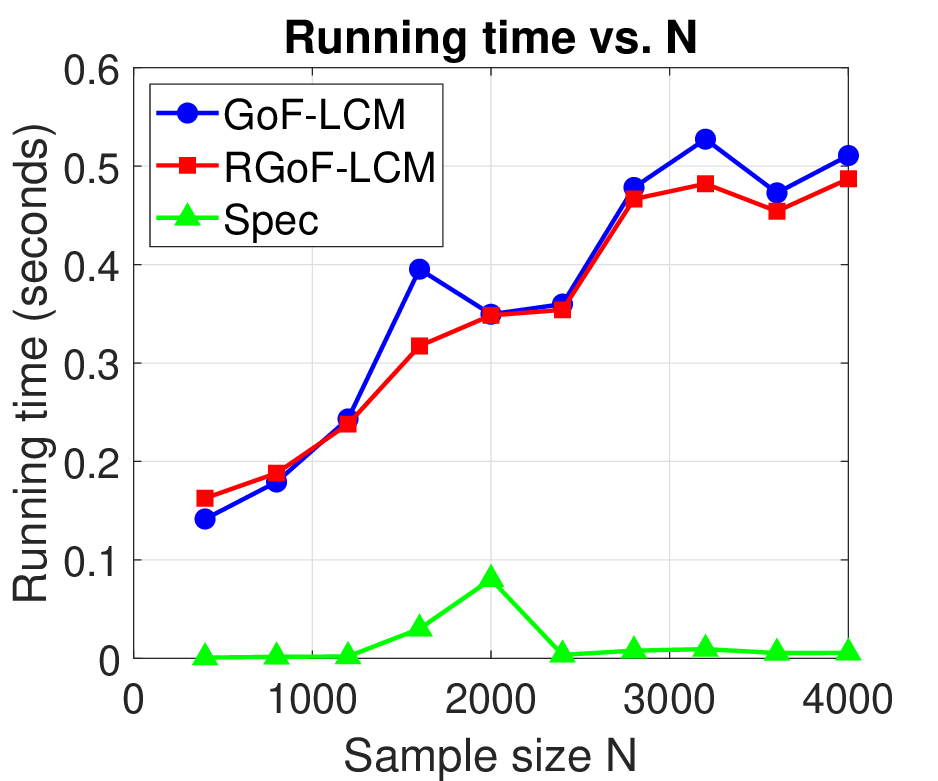}}
}
\caption{Accuracy (left) and running time (right) of GoF-LCM and RGoF-LCM for $K=8$, $J=60$, $\delta=0.3$, with varying $N$.}
\label{fig:exp4} 
\end{figure}
\subsection{Experiment 5: sensitivity to thresholds}

This final experiment investigates the sensitivity of the two algorithms to their primary tuning parameters. We fix $K=5$, $N=1000$, $J=60$, and $\delta=0.2$.

\subsubsection{Sensitivity of GoF-LCM to $\tau_N = N^{-\epsilon}$}
We vary the decay parameter $\epsilon$ in the threshold $\tau_N = N^{-\epsilon}$ over the set $\{0.1, 0.2, 0.3, 0.4, 0.5, 0.6, 0.7, 0.8, 0.9,1\}$. For each $\epsilon$, we run GoF-LCM on 200 simulated datasets and record the accuracy. The left part of Table~\ref{tab:sensitivity_gof} shows the results. Accuracy remains perfect ($1.000$) for $\epsilon \le 0.4$, then gradually declines: $0.995$ at $\epsilon=0.5$, $0.990$ at $\epsilon=0.6$, $0.970$ at $\epsilon=0.7$, $0.960$ at $\epsilon=0.8$, and finally $0.935$ and $0.930$ at $\epsilon=0.9$ and $1.0$, respectively. Overall, GoF-LCM is robust to the choice of $\epsilon$ as long as it does not exceed $0.5$; beyond this point, performance gradually degrades. Finally, the default $\epsilon = 0.2$ lies well within the stable region and yields perfect accuracy. 

\subsubsection{Sensitivity of RGoF-LCM to $\gamma_N = a \log N$}
We vary the multiplier $a$ in the threshold $\gamma_N = a \log N$ over the set $\{0.5, 1.0, 1.5, 2.0, 2.5, 3.0,3.5,4.0,4.5,5.0\}$. For each $a$, we run RGoF-LCM on 200 simulated datasets and record the accuracy. The right part of Table~\ref{tab:sensitivity_gof} shows the results. Accuracy is perfect ($1.000$) for all $a \le 4.5$, and only slightly lower ($0.995$) at $a = 5.0$. This demonstrates that RGoF-LCM is highly robust to the choice of the multiplier $a$ over a wide range. Finally, the default $a = 1$ ($\gamma_N = \log N$) lies well within the stable region.

\begin{table}[htbp!]
\caption{Sensitivity of GoF-LCM to $\epsilon$ in $\tau_N = N^{-\epsilon}$ and of RGoF-LCM to multiplier $a$ in $\gamma_N = a \log N$.}
\label{tab:sensitivity_gof}
\centering
\begin{tabular}{c|c}
\hline
\multicolumn{2}{c}{GoF-LCM: $\tau_N = N^{-\epsilon}$} \\
\hline
$\epsilon$ & Accuracy \\
\hline
0.1 & 1.000 \\
0.2 & 1.000 \\
0.3 & 1.000 \\
0.4 & 1.000 \\
0.5 & 0.995 \\
0.6 & 0.990 \\
0.7 & 0.970 \\
0.8 & 0.960 \\
0.9 & 0.935 \\
1.0 & 0.930 \\
\hline
\end{tabular}
\qquad
\begin{tabular}{c|c}
\hline
\multicolumn{2}{c}{RGoF-LCM: $\gamma_N = a \log N$} \\
\hline
$a$ & Accuracy \\
\hline
0.5 & 1.000 \\
1.0 & 1.000 \\
1.5 & 1.000 \\
2.0 & 1.000 \\
2.5 & 1.000 \\
3.0 & 1.000 \\
3.5 & 1.000 \\
4.0 & 1.000 \\
4.5 & 1.000 \\
5.0 & 0.995 \\
\hline
\end{tabular}
\end{table}

\subsection{Experiment 6: performance under large \(J\) (violating \(J=o(N)\))}\label{sec:exp6}

The theoretical analysis in Sections~\ref{sec:statistic} and~\ref{sec:algorithms} relies on Assumption~\ref{ass:A4}, which requires \(\frac{JK\log(JK)}{N}\to0\). In particular, this assumption forces the number of items \(J\) to grow slower than the sample size \(N\). This condition is used in the proofs to control accumulated estimation errors. Experiment 6 investigates the performance of GoF-LCM and RGoF-LCM when \(J\) is allowed to be substantially larger than \(N\), i.e. when \(J\) grows faster than \(N\) and the term \(\frac{JK\log(JK)}{N}\) does not tend to zero.

We fix the sample size \(N=600\), the true number of latent classes \(K=8\), and the signal strength parameter \(\delta=0.3\) (weak signal). The number of items \(J\) is varied from \(200\) to \(2000\) in steps of \(200\). For each value of \(J\), we generate \(200\) independent datasets. Figure~\ref{fig:exp6} reports the accuracy of both algorithms for all considered values of \(J\). Remarkably, for every value of \(J\) (including \(J=2000\), which is more than three times of \(N\)), both GoF-LCM and RGoF-LCM achieve almost perfect accuracy of \(1.000\).

\begin{figure}[htbp!]
\centering
\resizebox{0.5\columnwidth}{!}{\includegraphics{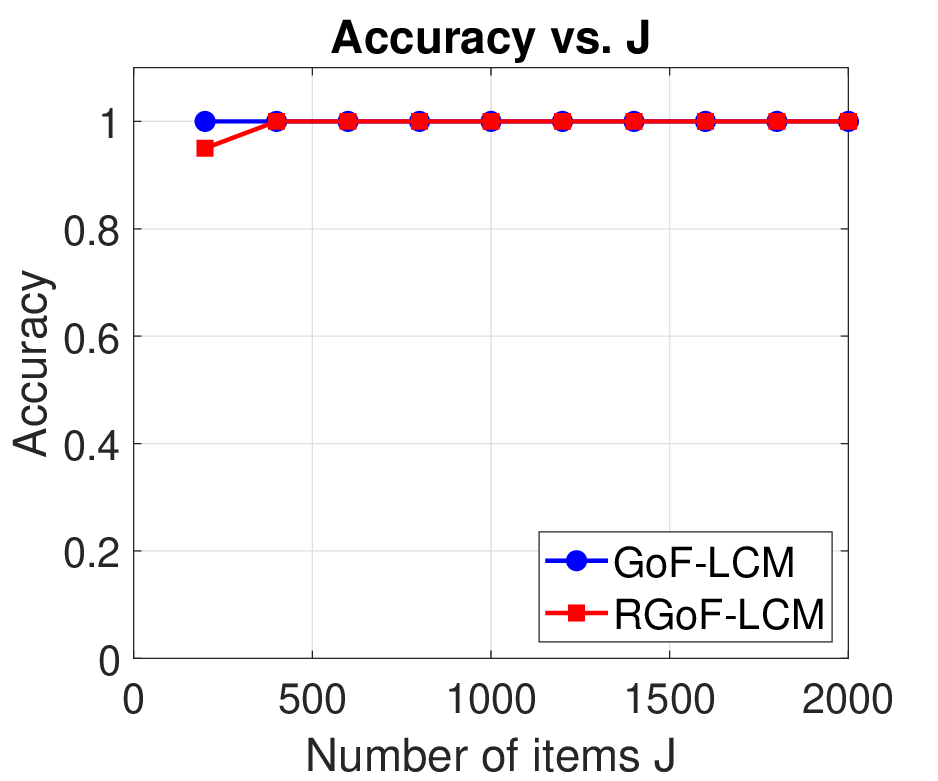}}
\caption{Accuracy of GoF‑LCM and RGoF‑LCM $K=8$, $N=600$, $\delta=0.3$, with varying \(J\).}
\label{fig:exp6}
\end{figure}

These results show that the proposed sequential testing procedures are robust to the dimension of the item space. Although Assumption~\ref{ass:A4} requires \(J=o(N)\) for the theoretical derivations, the actual performance does not deteriorate even when \(J\) significantly exceeds \(N\). For example, at \(J=2000\) we have \(\frac{JK\log(JK)}{N} \approx 258\), which is far from zero, yet the algorithms still recover the true number of classes with probability one. This suggests that the growth condition in Assumption~\ref{ass:A4} is a sufficient condition imposed by the proof technique and may not be necessary for the consistency of the algorithms. This empirical finding opens the possibility of relaxing the growth conditions in future theoretical work.
\subsection{Real data example}
We illustrate the proposed sequential testing procedures using a publicly available dataset from the Open-Source Psychometrics Project. The data file `randomnumber.zip` can be downloaded from \url{https://openpsychometrics.org/_rawdata/}. The survey consisted of a cognitive task (generating random numbers) followed by a standard Big Five Personality Test (BFPT).  The test comprises \(J = 50\) items, each rated on a six‑point scale with integer values from \(0\) to \(5\). For each item, \(0\) indicates the lowest level of agreement and \(5\) the highest. The dataset contains responses from \(N = 1369\) individuals, yielding a response matrix \(R \in \{0,1,\dots,5\}^{1369 \times 50}\).

\begin{figure}[htbp]
\centering
\resizebox{\columnwidth}{!}{
{\includegraphics[width=3\textwidth]{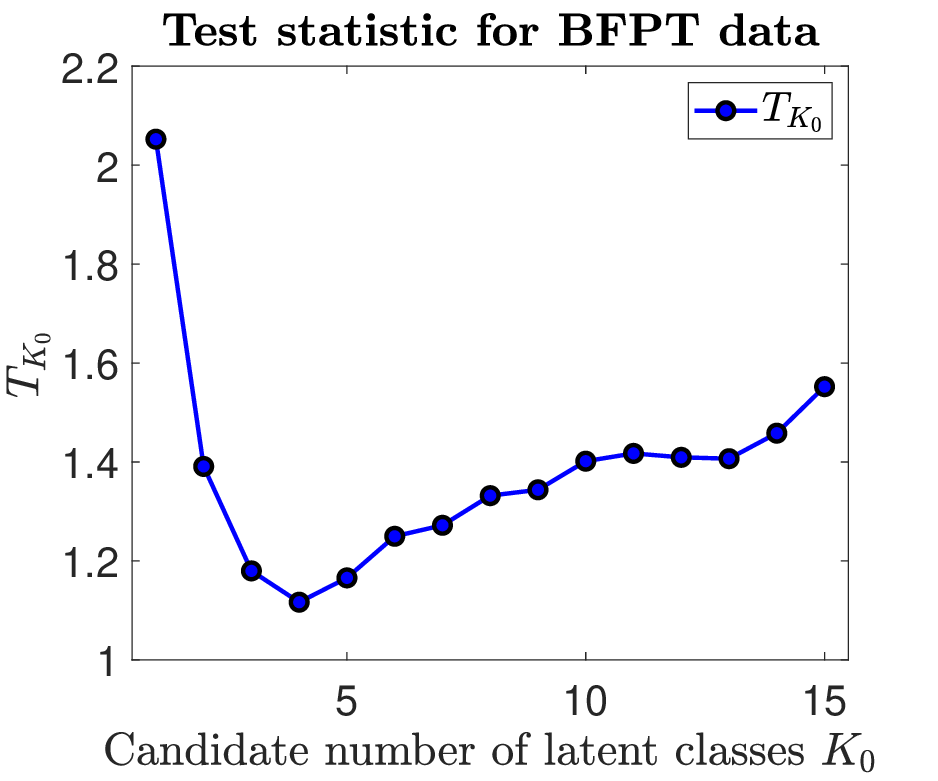}}
{\includegraphics[width=3\textwidth]{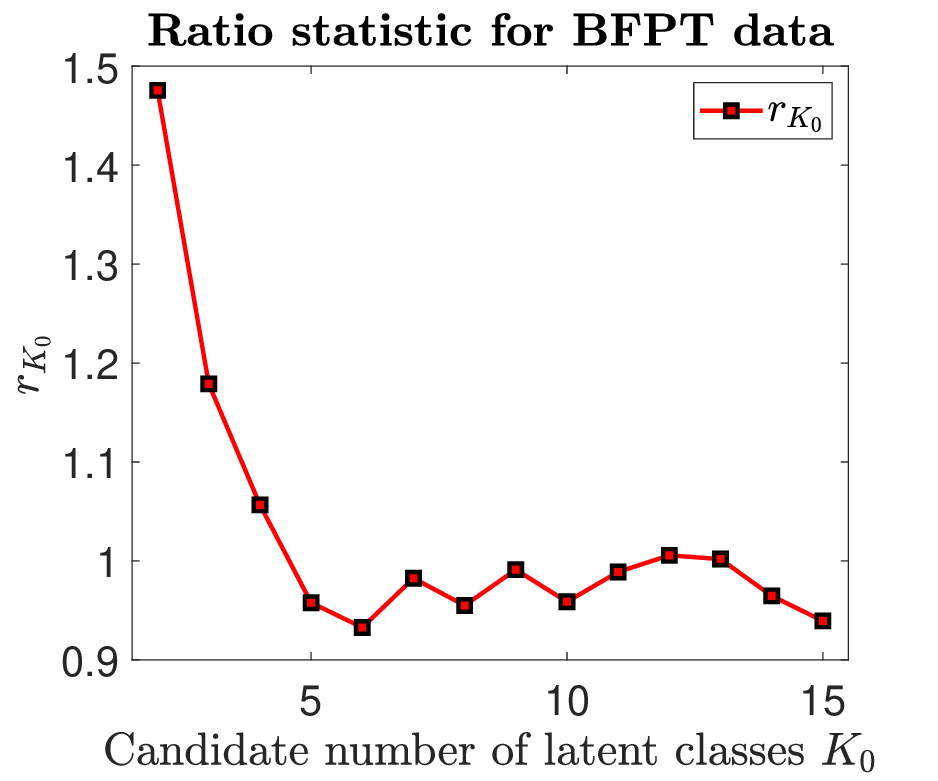}}
}
\caption{Test statistic $T_{K_0}$ (left) and ratio $r_{K_0}=|T(K_0-1)/T(K_0)|$ (right) versus candidate number of latent classes $K_0$ for the BFPT dataset.}
\label{fig:BFPT}
\end{figure}

The left panel of Figure~\ref{fig:BFPT} plots the test statistic $T_{K_0}$ for the BFPT data ($N=1369$, $J=50$). All values lie between $1$ and $2.2$, well above GoF-LCM's default threshold $\tau_N = N^{-1/5} \approx 0.24$. Consequently, GoF-LCM never encounters a $K_0$ satisfying $T_{K_0} < \tau_N$ and therefore runs to the maximum candidate $K_{\max} = \lfloor \sqrt{N/\log(N+J)} \rfloor \approx 13$, producing $\hat{K}=13$ by default. Thus, GoF-LCM yields an estimate that is likely too large and provides little insight into the underlying structure.

The right panel displays the ratio $r_{K_0}=|T_{K_0-1}/T_{K_0}|$. It attains a clear peak at $K_0=2$ ($r_2\approx1.47$), while for $K_0\ge3$ the ratios fluctuate between $0.9$ and $1.2$ without further structure. This pattern indicates that the relative change in $T_{K_0}$ is largest when moving from $K_0=1$ to $K_0=2$, suggesting that a two-class model captures the dominant heterogeneity in the data. Such a profile is exactly what RGoF-LCM is designed to exploit. 

Theorem~\ref{thm:ratio_consistency} requires the stopping threshold $\gamma_N$ to satisfy $\gamma_N\to\infty$ and $\gamma_N = o(\sqrt{N/\log J})$; any such sequence is admissible. The default $\gamma_N = \log N \approx 7.22$ exceeds $r_2$, so the formal sequential procedure does not stop at $K_0=2$. Nevertheless, the pronounced maximum provides valuable diagnostic information: it suggests that a two-class model offers the most proper description of the BFPT data, even though the default threshold is too conservative to trigger a stop. 

This observation highlights a key difference between the two approaches. GoF-LCM relies on an absolute threshold and fails when model assumptions are violated. RGoF-LCM, by focusing on relative improvements, can reveal dominant structural transitions through the shape of $r_{K_0}$—even when all $T_{K_0}$ values are inflated. In Figure~\ref{fig:BFPT}, this leads to the clear peak at $K_0=2$, a conclusion that would be missed by examining $T_{K_0}$ alone or by a mechanical application of the default threshold rule.

From a broader perspective, these results illustrate the value of diagnostic tools sensitive to relative rather than absolute fit. GoF-LCM enjoys clean asymptotic theory under correct specification, but RGoF-LCM demonstrates greater empirical robustness in practice. For the BFPT data, the evidence supports $\hat{K}=2$ as a reasonable description, a conclusion that emerges clearly from the ratio profile.
\section{Conclusion}\label{sec:conclusion}
We develop a goodness-of-fit framework for determining the number of latent classes in ordinal categorical data. The framework introduces a test statistic, exhibiting a sharp dichotomous behavior that leads to consistent sequential estimation. By transforming a challenging model selection problem into simple sequential tests, the proposed methods are both computationally efficient and theoretically principled. Simulation studies and real‑data applications demonstrate their accuracy and reliability.

Several directions for future research emerge from this work. Relaxing the current modeling assumptions is a natural extension. The binomial response assumption could be generalized to accommodate other exponential‑family distributions, such as Poisson for count data or multinomial for polytomous responses. The boundedness and separation conditions might be weakened through more refined concentration arguments. Extending the framework to richer latent structures presents another important direction. Mixed‑membership formulations, such as Grade‑of‑Membership (GoM) models \citet{woodbury1978mathematical}, allow individuals partial membership across multiple latent profiles. Degree heterogeneous latent class models \citet{lyu2025degree} (Dh-LCM) introduce additional individual‑specific parameters to capture varying levels of activity. For these more complex models, a fundamental first step is to estimate the number of latent classes \(K\), which remains largely unexplored and could build upon the sequential testing framework developed in this work, though adapting them would require fundamental reconsideration of the residual concept. Hierarchical latent structures, where classes themselves are organized into higher‑order groupings, pose a further challenge that may necessitate layered testing procedures. Finally, extending to dynamic settings, such as latent transition models for longitudinal data, would enable studying how class memberships evolve over time. Pursuing these directions will promisingly extend our framework to more complex data environments.
\section*{CRediT authorship contribution statement}
\textbf{Huan Qing} is the sole author of this article.

\section*{Declaration of competing interest}
The author declares no competing interests.

\section*{Data availability}
Data and code will be made available on request.


\appendix
\section{Technical proofs}\label{app:proofs}
\subsection{Proof of Lemma~\ref{lem:ideal_spectral}}
\begin{proof}
For any fixed $(i,j)$, we have
\[
\mathbb{E}[R^*(i,j)] = \frac{\mathbb{E}[R(i,j)] - \mathcal{R}(i,j)}{\sqrt{N\mathcal{V}(i,j)}} = 0.
\]

The variance is
\[
\operatorname{Var}(R^*(i,j)) = \frac{\operatorname{Var}(R(i,j))}{N\mathcal{V}(i,j)}.
\]

Since $\operatorname{Var}(R(i,j)) = \mathcal{V}(i,j)$, we obtain
\[
\operatorname{Var}(R^*(i,j)) = \frac{\mathcal{V}(i,j)}{N\mathcal{V}(i,j)} = \frac{1}{N}.
\]

By Assumption \ref{ass:A1}, we have $p_{ij} := \frac{\mathcal{R}(i,j)}{M} \in [\delta, 1-\delta]$. Then
\[
\mathcal{V}(i,j) = \mathcal{R}(i,j)\left(1 - \frac{\mathcal{R}(i,j)}{M}\right) = M\, p_{ij}(1-p_{ij}).
\]

The function $g(p) = p(1-p)$ is continuous and concave on $[\delta, 1-\delta]$. Since $g(p)$ is symmetric about $p=1/2$ and $\delta \le 1/2$, its minimum on $[\delta, 1-\delta]$ is attained at both endpoints, i.e., 
\[
g(p) \ge g(\delta) = \delta(1-\delta) \quad \text{for all } p \in [\delta, 1-\delta].
\]

Consequently, we have
\begin{equation*}
\mathcal{V}(i,j) \ge M\,\delta(1-\delta) \quad \text{for all } i\in[N], j\in[J].
\end{equation*}

Because $R(i,j)$ takes values in $\{0,1,\dots,M\}$, we have
\[
|R(i,j) - \mathcal{R}(i,j)| \le M.
\]

Hence, we get
\[
|R^*(i,j)| = \frac{|R(i,j) - \mathcal{R}(i,j)|}{\sqrt{N\mathcal{V}(i,j)}} \le \frac{M}{\sqrt{N \cdot M\delta(1-\delta)}} = \sqrt{\frac{M}{N\delta(1-\delta)}}.
\]

Hence, the maximum entrywise magnitude (denoted by $\sigma_*$ in matrix concentration inequalities) satisfies
\begin{equation*}
\sigma_* := \max_{i,j} \|R^*(i,j)\|_\infty \le \sqrt{\frac{M}{N\delta(1-\delta)}}.
\end{equation*}

For each row $i \in [N]$, we have
\[
\sum_{j=1}^{J} \mathbb{E}\bigl[(R^*(i,j))^2\bigr] = \sum_{j=1}^{J} \frac{1}{N} = \frac{J}{N},
\]
which gives
\[
\sigma_1 := \max_{i \in [N]} \sqrt{\sum_{j=1}^{J} \mathbb{E}\bigl[(R^*(i,j))^2\bigr]} = \sqrt{\frac{J}{N}}.
\]

For each column $j \in [J]$, we have
\[
\sum_{i=1}^{N} \mathbb{E}\bigl[(R^*(i,j))^2\bigr] = \sum_{i=1}^{N} \frac{1}{N} = 1,
\]
which gives
\[
\sigma_2 := \max_{j \in [J]} \sqrt{\sum_{i=1}^{N} \mathbb{E}\bigl[(R^*(i,j))^2\bigr]} = 1.
\]

The following lemma is obtained from the statements after Corollary 3.12 of \citep{Afonso2016}. 
\begin{lem}\label{BanderiaCor}
Let $X$ be any $n_1 \times n_2$ random matrix with independent entries satisfying $\mathbb{E}[X_{ij}]=0$. Define
\[
\sigma_1 = \max_{i} \sqrt{\sum_{j} \mathbb{E}[X_{ij}^2]}, \quad
\sigma_2 = \max_{j} \sqrt{\sum_{i} \mathbb{E}[X_{ij}^2]}, \quad
\sigma_* = \max_{i,j} \|X_{ij}\|_\infty.
\]

Then for any $0 < \eta \le \frac{1}{2}$, there exists a constant $C_\eta > 0$ such that for all $t \ge 0$,
\[
\mathbb{P}\bigl( \|X\| \ge (1+\eta)(\sigma_1 + \sigma_2) + t \bigr) \le (n_1+n_2) \exp\left( -\frac{t^2}{C_\eta \sigma_*^2} \right).
\]
\end{lem}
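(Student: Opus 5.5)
The plan is to prove Lemma~\ref{BanderiaCor} by the moment method of Bandeira and van Handel, adapted to rectangular matrices. The tail bound comes straight from a trace-moment estimate, so no separate concentration step is needed. Concretely, for an integer $p\ge 1$ I would aim for the \emph{target moment bound}
\[
\mathbb{E}\,\mathrm{tr}\bigl[(XX^\top)^p\bigr] \;\le\; (n_1+n_2)\Bigl(\sigma_1+\sigma_2+C\sigma_*\sqrt{p}\Bigr)^{2p},
\]
with $C$ a universal constant. Granting this, Markov's inequality applied to $\|X\|^{2p}\le \mathrm{tr}[(XX^\top)^p]$ gives
\[
\mathbb{P}\bigl(\|X\|\ge u\bigr)\le (n_1+n_2)\Bigl(\tfrac{\sigma_1+\sigma_2+C\sigma_*\sqrt p}{u}\Bigr)^{2p}.
\]
Set $u=(1+\eta)(\sigma_1+\sigma_2)+t$ and choose $p\asymp t^2/(C'_\eta\sigma_*^2)$.
\begin{itemize}
\item If $C\sigma_*\sqrt p$ is at most a small multiple of $\eta(\sigma_1+\sigma_2)+t$, the ratio is at most $1/(1+c\eta)$.
\item Raising to the power $2p$ then gives $\exp(-t^2/(C_\eta\sigma_*^2))$.
\item If $t$ is so small that this $p$ would be less than $1$, the right-hand side exceeds $1$ once $C_\eta$ is large, so the bound is trivial.
\end{itemize}

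To prove the moment bound I would proceed in three steps.
\begin{enumerate}
\item \emph{Symmetrisation.} Replace the independent centred bounded entries by symmetric ones at the cost of a constant factor in $\sigma_*$.
\item \emph{Walk expansion.} Expand $\mathrm{tr}[(XX^\top)^p]$ as a sum over closed bipartite walks of length $2p$, alternating between row vertices in $[n_1]$ and column vertices in $[n_2]$. Only walks traversing every edge an even number of times contribute.
\item \emph{Gaussian comparison.} For each such walk, bound $\mathbb{E}\prod X_{ij}^{k_{ij}}$ by $\sigma_*^{\sum(k_{ij}-2)}\prod \mathbb{E}X_{ij}^2$. As in Bandeira--van Handel, this reduces the problem to a Gaussian matrix whose entries have variances $\mathbb{E}X_{ij}^2$, plus an explicit $\sigma_*$-weighted term for the edges that are repeated beyond two visits.
\end{enumerate}
Those excess edges account for the $C\sigma_*\sqrt p$ contribution: a walk of length $2p$ has at most $p$ distinct edges, and the combinatorial count of the shapes with multiplicity produces $\sqrt p$ per unit of $\sigma_*$.

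The Gaussian part should contribute $(\sigma_1+\sigma_2)^{2p}$ per starting vertex. I would get this by the comparison trick that reduces to walks on a tree, i.e.\ non-crossing pairings. In such a walk, each step that discovers a new row vertex is charged the column variance sum, which is at most $\sigma_2^2$. Each step that discovers a new column vertex is charged the row variance sum, which is at most $\sigma_1^2$. The number of non-crossing pairings split between the two vertex types is a Narayana-type count. Summed against these weights it gives at most $(\sigma_1+\sigma_2)^{2p}$, by the Marchenko--Pastur edge $(\sqrt a+\sqrt b)^2$ identity. The factor $n_1+n_2$ comes from the choice of starting vertex.

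The main obstacle is exactly this sharp constant. A crude argument — for example, passing to the Hermitian dilation $\begin{pmatrix}0&X\\X^\top&0\end{pmatrix}$ and applying the symmetric theorem — only gives $2\max(\sigma_1,\sigma_2)$, not $\sigma_1+\sigma_2$. So the bipartite walk counting has to keep track of which side each new vertex lies on. If this combinatorics proves too delicate, I would fall back on the statement following Corollary~3.12 of \citep{Afonso2016}, which is precisely this rectangular refinement, and verify only the Markov-inequality step above.
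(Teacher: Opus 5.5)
The paper gives no proof of this lemma: it quotes it from the remarks following Corollary~3.12 of \citep{Afonso2016}, which is your fallback. Your passage from a trace-moment bound to the tail bound (Markov with $p\asymp t^2/\sigma_*^2$, with the small-$t$ regime trivial) is correct. Your from-scratch route, however, already breaks at the symmetrisation step. Passing to $X-X'$ (or to $2\varepsilon\circ X$) doubles (resp.\ quadruples) every variance, so $\sigma_1+\sigma_2$ is replaced by at least $\sqrt2(\sigma_1+\sigma_2)$. The cost is not confined to $\sigma_*$, and the sharp leading term $(1+\eta)(\sigma_1+\sigma_2)$ is lost. This is the source of the extra $\sqrt2$ in the leading term of the Bandeira--van Handel bound for merely centred entries. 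So if you fall back on the citation, check which constant it actually gives for non-symmetric entries; the paper applies the lemma to centred binomial residuals, which are not symmetric. If instead you do not symmetrise, your claim that only even-multiplicity walks contribute is false: $\mathbb{E}X_{ij}^3\neq0$ in general, walks with an edge of odd multiplicity $\ge3$ survive, and nothing in your Gaussian comparison accounts for them.

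The moment bound itself is also not actually obtained. The estimate $|\mathbb{E}X_{ij}^k|\le\sigma_*^{k-2}\mathbb{E}X_{ij}^2$ does not reduce you to a Gaussian matrix with the same variance profile. For sparse entries (centred Bernoulli$(q)$, $q\to0$) the fourth moment is of order $q$, against $3q^2$ for the matching Gaussian. Moreover, the Narayana count controls only tree shapes with $p+1$ vertices. For inhomogeneous profiles, shapes with cycles or high multiplicities are not a lower-order correction: already for a $1\times1$ Gaussian, $\mathbb{E}g^{2p}=(2p-1)!!$ exceeds $2\cdot4^p$ once $p\ge6$. So the additive $C\sigma_*\sqrt p$ has to come from a bound uniform over all shapes, which your sketch asserts but does not supply.

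The missing device is Bandeira--van Handel's shape-by-shape comparison with a \emph{homogeneous} Gaussian matrix. Take a shape with $m_r$ row vertices, $m_c$ column vertices, and edge $e$ traversed $2k_e$ times. Using $\mathbb{E}X_e^{2k}\le\sigma_*^{2k-2}\mathbb{E}X_e^2\,\mathbb{E}g^{2k}$ and a spanning tree, bound the sum over labellings by $n_1\sigma_2^{2(m_r-1)}\sigma_1^{2m_c}\sigma_*^{2(p-m_r-m_c+1)}\prod_e\mathbb{E}g^{2k_e}$. Let $Y$ be an $(r_2+p)\times(r_1+p)$ standard Gaussian matrix with $r_i=\lceil\sigma_i^2/\sigma_*^2\rceil$. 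Since $m_r,m_c\le p$, the bound above is at most $n_1/(r_2+p)$ times the weight of the same shape for $\sigma_*Y$. Then $\mathbb{E}\|Y\|\le\sqrt{r_1+p}+\sqrt{r_2+p}$ together with Gaussian concentration gives $\mathbb{E}\,\mathrm{tr}[(XX^\top)^p]\le n_1(\sigma_1+\sigma_2+C\sigma_*\sqrt p)^{2p}$. The row/column bookkeeping you wanted comes from the rectangular shape of $Y$, not from Narayana numbers. This settles symmetric bounded entries. For general centred entries you still need either to accept the $\sqrt2$ or to find a genuine treatment of the odd multiplicities.
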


We apply Lemma \ref{BanderiaCor} with $X = R^*$, $n_1 = N$, $n_2 = J$, and $\sigma_1$, $\sigma_2$, $\sigma_*$ as computed before. Substituting these values, for any $t \ge 0$, we obtain 
\begin{equation*}
\mathbb{P}\left( \|R^*\| \ge (1+\eta)\left(\sqrt{\frac{J}{N}} + 1\right) + t \right) \le (N+J) \exp\left( -\frac{t^2 \delta(1-\delta) N}{C_\eta M} \right).
\end{equation*}

Let $\gamma > 2$ be a fixed constant (e.g., $\gamma = 3$). Choose
\(t = \sqrt{ \frac{\gamma C_\eta M \log(N+J)}{\delta(1-\delta) N} }.\)
Substituting this $t$ into the right-hand side of the above inequality gives
\begin{align*}
(N+J) \exp\left( -\frac{t^2 \delta(1-\delta) N}{C_\eta M} \right)
&= (N+J) \exp\left( -\frac{ \gamma C_\eta M \log(N+J) }{\delta(1-\delta) N} \cdot \frac{\delta(1-\delta) N}{C_\eta M} \right)= (N+J)^{1-\gamma}.
\end{align*}

Since $\gamma > 2$, $(N+J)^{1-\gamma} = o(1)$ as $N\to \infty$. Therefore, with probability at least $1 - o(1)$, we have
\[
\|R^*\| \le (1+\eta)\left(\sqrt{\frac{J}{N}} + 1\right) + \sqrt{ \frac{\gamma C_\eta M \log(N+J)}{\delta(1-\delta) N} }.
\]

Because the term $\sqrt{ \frac{\gamma C_\eta M \log(N+J)}{\delta(1-\delta) N} } = O\left( \sqrt{\frac{\log N}{N}} \right) = o(1)$ and the parameter $\eta$ can be chosen arbitrarily small, we obtain that with probability approaching 1,
\[
\|R^*\| \le 1 + \sqrt{\frac{J}{N}} + o(1).
\]

Thus, for any fixed $\epsilon > 0$, we have
\[
\lim_{N\to\infty} \mathbb{P}\left( \|R^*\| \le 1 + \sqrt{\frac{J}{N}} + \epsilon \right) = 1.
\]
\end{proof}
\subsection{Proof of Lemma \ref{lem:perturbation}}
\begin{proof}
We prove this lemma via five parts.

\vspace{2mm}
\noindent\textbf{Part 1:  A high‑probability event.}
By Lemma~\ref{lem:param_consistency} and the fact that a finite intersection of events with probability tending to 1 still has probability tending to 1, there exists an event $\mathcal{F}_N$ with $\mathbb{P}(\mathcal{F}_N)\to1$ such that on $\mathcal{F}_N$ the following hold simultaneously:
\[
\begin{aligned}
&\text{(i)}\quad \hat Z = Z\Pi;\\[2mm]
&\text{(ii)}\quad \sum_{k=1}^{K}\sum_{j=1}^{J}\bigl(\hat\Theta(j,\pi(k))-\Theta(j,k)\bigr)^2 \le C_1\,\frac{JK^2}{N};\\[2mm]
&\text{(iii)}\quad \max_{i\in[N],j\in[J]}\bigl|\hat{\mathcal R}(i,j)-\mathcal R(i,j)\bigr| \le C_2\sqrt{\frac{K\log(JK)}{N}};\\[2mm]
&\text{(iv)}\quad \hat{\mathcal V}(i,j)\ge v_{\min}\quad\forall i,j;\\[2mm]
&\text{(v)}\quad \mathcal V(i,j)\ge\delta(1-\delta)M\quad\forall i,j .
\end{aligned}
\]
(Here $\mathcal V(i,j)=\mathcal R(i,j)(1-\mathcal R(i,j)/M)$, $\hat{\mathcal V}(i,j)=\hat{\mathcal R}(i,j)(1-\hat{\mathcal R}(i,j)/M)$.)  
All subsequent estimates are performed pointwise on $\mathcal{F}_N$ and are therefore deterministic on this event.

\vspace{2mm}
\noindent\textbf{Part 2:  Decomposition of the difference matrix.}
Set $\Delta:=\tilde R-R^*\in\mathbb{R}^{N\times J}$.  
On $\mathcal{F}_N$, $\hat{\mathcal V}(i,j)>0$ by (iv), so the definition in Equation \eqref{eq:norm_res} reduces to the standard case with a positive denominator. For a fixed pair $(i,j)$, write
\[
A:=R(i,j),\qquad p:=\mathcal R(i,j),\qquad \hat p:=\hat{\mathcal R}(i,j),\qquad 
v:=\mathcal V(i,j),\qquad \hat v:=\hat{\mathcal V}(i,j).
\]

Then, we have
\[
\Delta(i,j)=\frac{A-\hat p}{\sqrt{N\hat v}}-\frac{A-p}{\sqrt{Nv}}
          =(A-p)\Bigl(\frac1{\sqrt{N\hat v}}-\frac1{\sqrt{Nv}}\Bigr)+\frac{p-\hat p}{\sqrt{N\hat v}}.
\]

Define two $N\times J$ matrices $E_1,E_2$ entrywise by
\[
E_1(i,j):=(A-p)\Bigl(\frac1{\sqrt{N\hat v}}-\frac1{\sqrt{Nv}}\Bigr),\qquad 
E_2(i,j):=\frac{p-\hat p}{\sqrt{N\hat v}}.
\]

Thus $\Delta=E_1+E_2$ on $\mathcal{F}_N$.

\vspace{2mm}
\noindent\textbf{Part 3:  Spectral norm of $E_2$.}
Because $\hat p=\hat{\mathcal R}(i,j)=\hat\Theta(j,\pi(\ell(i)))$ is constant on each true class, $E_2(i,j)$ depends only on the class of subject $i$ and the item $j$.  
Let $\mathcal C_k:=\{i:\ell(i)=k\}$ for $k\in[K]$.  On $\mathcal{F}_N$, for $i\in\mathcal C_k$, we have
\[
\hat{\mathcal V}(i,j)=\hat\Theta(j,\pi(k))\Bigl(1-\frac{\hat\Theta(j,\pi(k))}{M}\Bigr)=:\hat v_k(j),
\]
which is independent of $i\in\mathcal C_k$ and, by (iv), satisfies $\hat v_k(j)\ge v_{\min}$.  
Hence for $i\in\mathcal C_k$, we have
\[
E_2(i,j)=\frac{\Theta(j,k)-\hat\Theta(j,\pi(k))}{\sqrt{N\hat v_k(j)}}.
\]

Construct $V\in\mathbb{R}^{J\times K}$ by $V(j,k):=\dfrac{\Theta(j,k)-\hat\Theta(j,\pi(k))}{\sqrt{N\hat v_k(j)}}$.  
Since $Z(i,k)=\mathbf{1}_{\{i\in\mathcal C_k\}}$, we have $(ZV^\top)_{ij}=V(j,\ell(i))=E_2(i,j)$. Thus, we get $E_2=ZV^\top$ on $\mathcal{F}_N$.
 
$Z^\top Z=\operatorname{diag}(N_1,\dots,N_K)$ gives $\|Z\|=\sqrt{\max_k N_k}=:\sqrt{N_{\max}}$.  
Assumption~\ref{ass:A2} gives $c_0N/K\le N_k\le N/(c_0K)$, so $N_{\max}\le N/(c_0K)$ and
\[
\|Z\|\le\sqrt{\frac{N}{c_0K}}.
\]

Using $\hat v_k(j)\ge v_{\min}$ gets
\[
\|V\|_F^2=\sum_{k=1}^K\sum_{j=1}^J\frac{\bigl(\Theta(j,k)-\hat\Theta(j,\pi(k))\bigr)^2}{N\hat v_k(j)}
\le\frac1{Nv_{\min}}\sum_{k=1}^K\sum_{j=1}^J\bigl(\Theta(j,k)-\hat\Theta(j,\pi(k))\bigr)^2.
\]

On $\mathcal{F}_N$, (ii) bounds the double sum by $C_1JK^2/N$. Thus, we have
\[
\|V\|_F^2\le\frac1{Nv_{\min}}\cdot C_1\frac{JK^2}{N}=C_1v_{\min}^{-1}\frac{JK^2}{N^2},\qquad
\|V\|_F\le\sqrt{C_1v_{\min}^{-1}}\;\frac{K\sqrt J}{N}.
\]

Submultiplicativity of the spectral norm gives
\[
\|E_2\|\le\|Z\|\,\|V\|\le\|Z\|\,\|V\|_F
\le\sqrt{\frac{N}{c_0K}}\;\sqrt{C_1v_{\min}^{-1}}\;\frac{K\sqrt J}{N}
=\underbrace{\sqrt{\frac{C_1}{c_0v_{\min}}}}_{=:C_{E_2}}\;\sqrt{\frac{JK}{N}}.
\]

By Assumption \ref{ass:A4} , we have $\sqrt{JK/N}=o(1)$.  Consequently, for any $\varepsilon>0$, $\exists N_0$ s.t. $\forall N\ge N_0$, on $\mathcal{F}_N$ we have $\|E_2\|<\varepsilon$, which gives $\|E_2\|=o(1)$ on $\mathcal{F}_N$.

\vspace{2mm}
\noindent\textbf{Part 4:  Spectral norm of $E_1$.}
Set $f(x)=1/\sqrt{x}$ for $x>0$. We have $f'(x)=-\frac12x^{-3/2}$ and $|f'(x)|=\frac12x^{-3/2}$. Fix $(i,j)$.  On $\mathcal{F}_N$, $\hat v\ge v_{\min}$ by (iv).  From (v), $v\ge\delta(1-\delta)M$.  
Because $\delta\le\frac12$, one has $\delta(1-\delta)\ge\delta^2/4$, hence $v\ge v_{\min}$ as well.  
Thus $\hat v,v\in[v_{\min},\infty)$.

Apply the mean value theorem to $f$ on the interval between $\hat v$ and $v$: there exists $\xi$ strictly between $\hat v$ and $v$ such that
\[
\frac1{\sqrt{\hat v}}-\frac1{\sqrt v}=f'(\xi)(\hat v-v).
\]

Since $\xi\ge v_{\min}$, we have
\[
\Bigl|\frac1{\sqrt{\hat v}}-\frac1{\sqrt v}\Bigr|=|f'(\xi)|\,|\hat v-v|\le\frac12v_{\min}^{-3/2}\,|\hat v-v|.
\]

Define $h(x)=x(1-x/M)$ for $x\in[0,M]$. Then, we have $\hat v=h(\hat p)$, $v=h(p)$.  
$|h'(x)|=|1-2x/M|\le1$ for all $x\in[0,M]$, so the mean value theorem yields
\[
|\hat v-v|=|h(\hat p)-h(p)|\le\Bigl(\max_{z\in[0,M]}|h'(z)|\Bigr)\,|\hat p-p|\le|\hat p-p|.
\]

Thus, we have
\[
\Bigl|\frac1{\sqrt{\hat v}}-\frac1{\sqrt v}\Bigr|\le\frac12v_{\min}^{-3/2}\,|\hat p-p|.
\]

Because $R(i,j)\in\{0,1,\dots,M\}$ and $\mathcal R(i,j)\in[0,M]$, we have $|A-p|\le M$.  Therefore, we get
\[
|E_1(i,j)|\le|A-p|\;\frac1{\sqrt N}\;\Bigl|\frac1{\sqrt{\hat v}}-\frac1{\sqrt v}\Bigr|
\le\frac{M}{2\sqrt N\,v_{\min}^{3/2}}\;|\hat p-p|.
\]

On $\mathcal{F}_N$, (iii) gives $\max_{i,j}|\hat p-p|\le C_2\sqrt{\frac{K\log(JK)}{N}}$.  Hence, for every $(i,j)$, we have
\[
|E_1(i,j)|\le\frac{M C_2}{2v_{\min}^{3/2}}\;\sqrt{\frac{K\log(JK)}{N^2}}
=:\underbrace{\frac{M C_2}{2v_{\min}^{3/2}}}_{=:C_{E_1}}\;\sqrt{\frac{K\log(JK)}{N^2}}.
\]

For any matrix, $\|E_1\|\le\|E_1\|_F$ and
\(\|E_1\|_F=\sqrt{\sum_{i=1}^N\sum_{j=1}^JE_1(i,j)^2}\le\sqrt{NJ}\,\max_{i,j}|E_1(i,j)|.\)
Consequently, on $\mathcal{F}_N$, we have
\[
\|E_1\|\le\sqrt{NJ}\;C_{E_1}\sqrt{\frac{K\log(JK)}{N^2}}
=C_{E_1}\sqrt{\frac{JK\log(JK)}{N}}.
\]

Assumption~\ref{ass:A4} directly gives $JK\log(JK)/N\to0$, so $\sqrt{JK\log(JK)/N}=o(1)$.  
Thus for any $\varepsilon>0$, $\exists N_0$ s.t. $\forall N\ge N_0$, on $\mathcal{F}_N$ we have $\|E_1\|<\varepsilon$.

\vspace{2mm}
\noindent\textbf{Part 5:  Conversion to $o_P(1)$.}
On the event $\mathcal{F}_N$, the triangle inequality yields
\[
\|\Delta\|\le\|E_1\|+\|E_2\|
\le C_{E_1}\sqrt{\frac{JK\log(JK)}{N}}+C_{E_2}\sqrt{\frac{JK}{N}}.
\]

Both terms are $o(1)$. Therefore, for every $\varepsilon>0$ there exists $N_0$ (depending only on $\varepsilon$ and the model constants) such that for all $N\ge N_0$ and every realization belonging to $\mathcal{F}_N$, $\|\Delta\|<\varepsilon$. Recall $\mathbb{P}(\mathcal{F}_N)\to1$.  For an arbitrary $\varepsilon>0$, we have
\[
\mathbb{P}\bigl(\|\Delta\|>\varepsilon\bigr)
=\mathbb{P}\bigl(\{\|\Delta\|>\varepsilon\}\cap\mathcal{F}_N\bigr)+\mathbb{P}\bigl(\{\|\Delta\|>\varepsilon\}\cap\mathcal{F}_N^c\bigr)
\le\mathbb{P}\bigl(\{\|\Delta\|>\varepsilon\}\cap\mathcal{F}_N\bigr)+\mathbb{P}(\mathcal{F}_N^c).
\]

From the previous results, we know that for sufficiently large $N$ the set $\{\|\Delta\|>\varepsilon\}\cap\mathcal{F}_N$ is empty. Hence its probability is $0$. Thus, we have $\limsup_{N\to\infty}\mathbb{P}(\|\Delta\|>\varepsilon)\le\lim_{N\to\infty}\mathbb{P}(\mathcal{F}_N^c)=0$, and since $\varepsilon>0$ was arbitrary, $\|\Delta\|\xrightarrow{\mathbb{P}}0$, which is exactly Equation \eqref{eq:L3}.  
\end{proof}
\subsection{Proof of Theorem \ref{thm:null}}\label{proof:thm1}
\begin{proof}
For any two real matrices of identical dimensions, Weyl's inequality for singular values asserts
\[
|\sigma_1(A) - \sigma_1(B)| \le \|A-B\|.
\]

Applying it with \(A = \tilde R\) and \(B = R^{*}\) (both are \(N\times J\) matrices) yields the deterministic bound
\[
\sigma_1(\tilde R) \le \sigma_1(R^{*}) + \|\tilde R - R^{*}\|. 
\]

Fix an arbitrary \(\varepsilon > 0\), we have
\[
\bigl\{\sigma_1(\tilde R) > 1 + \sqrt{J/N} + \varepsilon\bigr\}
\subseteq
\bigl\{\sigma_1(R^{*}) > 1 + \sqrt{J/N} + \varepsilon/2\bigr\}
\;\cup\;
\bigl\{\|\tilde R - R^{*}\| > \varepsilon/2\bigr\},
\]
which yields
\[
\mathbb{P}\Bigl(\sigma_1(\tilde R) > 1 + \sqrt{J/N} + \varepsilon\Bigr)
\le
\mathbb{P}\Bigl(\sigma_1(R^{*}) > 1 + \sqrt{J/N} + \varepsilon/2\Bigr)
+ \mathbb{P}\Bigl(\|\tilde R - R^{*}\| > \varepsilon/2\Bigr). 
\]

By Lemmas \ref{lem:ideal_spectral} and \ref{lem:perturbation}, we have that both probabilities on the right‑hand side of the above equation converge to zero as \(N\to\infty\).  Consequently, we get
\[
\lim_{N\to\infty} \mathbb{P}\Bigl(\sigma_1(\tilde R) > 1 + \sqrt{J/N} + \varepsilon\Bigr) = 0.
\]

Recall the definition of \(T_{K_0} \) is 
\(T_{K_0} = \sigma_1(\tilde R) - \bigl(1 + \sqrt{J/N}\bigr),\)
so for any \(\varepsilon > 0\), we have
\[
\{T_{K_0} > \varepsilon\} = \{\sigma_1(\tilde R) > 1 + \sqrt{J/N} + \varepsilon\},
\]
which immediately implies
\[
\lim_{N\to\infty} \mathbb{P}\bigl(T_{K_0} > \varepsilon\bigr) = 0.
\]

Hence, we have 
\[
\mathbb{P}\bigl(T_{K_0} < \varepsilon\bigr) \longrightarrow 1,
\]
as \(N\to\infty\).  This completes the proof of Theorem~\ref{thm:null}.
\end{proof}
\subsection{Proof of Theorem \ref{thm:alt}}\label{proof:thm2}
\begin{proof}[Proof of Theorem~\ref{thm:alt}]
We work conditionally on the output of the classifier \(\mathcal{M}\). Lemma~\ref{lem:underfit_lower} holds deterministically for every possible \(\hat Z\) whenever \(K_0<K\).

\vspace{3mm}
\noindent\textbf{Step 1.  Deterministic objects from Lemma~\ref{lem:underfit_lower}.}
Because \(K_0<K\), Lemma~\ref{lem:underfit_lower} guarantees the existence of two distinct true classes \(k_1,k_2\in[K]\), subsets
\[
S_1\subseteq\mathcal{C}_{k_1},\; S_2\subseteq\mathcal{C}_{k_2},\qquad S:=S_1\cup S_2\subseteq[N],\qquad T\subseteq[J],
\]
and unit vectors \(u\in\mathbb{R}^{|S|}, v\in\mathbb{R}^{|T|}\) with the specific forms constructed in the lemma (see its proof) such that the following deterministic properties hold:
\[
|S_1|,|S_2|\ge\frac{c_0 N}{K K_0},\qquad |S|=|S_1|+|S_2|\le\frac{2N}{c_0 K},\qquad |T|\ge c_1 J,
\]
\[
u_i=
\begin{cases}
\alpha/\sqrt{|S_1|},& i\in S_1,\\[2mm]
-\beta/\sqrt{|S_2|},& i\in S_2,
\end{cases}\qquad 
v_j=\frac{s_j}{\sqrt{|T|}},\quad s_j=\operatorname{sign}\bigl(\Theta(j,k_1)-\Theta(j,k_2)\bigr),
\]
where \(\alpha=\sqrt{|S_2|}/\sqrt{|S_1|+|S_2|},\ \beta=\sqrt{|S_1|}/\sqrt{|S_1|+|S_2|}\).
Moreover, from the construction in Lemma~\ref{lem:underfit_lower}, we have the deterministic lower bound
\begin{equation}\label{eq:uvMv_bound}
u^\top (\mathcal{R}-\hat{\mathcal{R}})_{S,T}\,v \;=\; \frac{\gamma_0}{\sqrt{|T|}}\sum_{j\in T}|\Theta(j,k_1)-\Theta(j,k_2)|
\;\ge\; c\,\zeta\,\frac{\sqrt{NJ}}{\sqrt{K}K_0},
\end{equation}
where \(\gamma_0=\sqrt{|S_1||S_2|/(|S_1|+|S_2|)}\).
All subjects in \(S\) are assigned by \(\hat Z\) to the same estimated class, say \(\kappa\in[K_0]\);
consequently for every \(j\in T\) the fitted value \(\hat{\mathcal{R}}(i,j)=\hat\Theta(j,\kappa)\) is constant on \(S\times\{j\}\).

\vspace{3mm}
\noindent\textbf{Step 2.  Scaling factors.}
For \(j\in T\), define the following scaling factors:
\[
\gamma_j:=\frac{1}{\sqrt{N\hat{\mathcal{V}}(i,j)}},\qquad i\in S,
\]
where \(\hat{\mathcal{V}}(i,j)=\hat{\mathcal{R}}(i,j)\bigl(1-\hat{\mathcal{R}}(i,j)/M\bigr)\) and the value does not depend on the particular \(i\in S\).
Set \(D:=\operatorname{diag}(\gamma_j)_{j\in T}\in\mathbb{R}^{|T|\times|T|}\).
Then the submatrix of the practical normalized residual matrix satisfies
\[
\tilde{R}_{S,T}= (R-\hat{\mathcal{R}})_{S,T}\,D = \bigl(W+\mathscr{M}\bigr)D,
\]
where \(W:=(R-\mathcal{R})_{S,T}\) and \(\mathscr{M}:=(\mathcal{R}-\hat{\mathcal{R}})_{S,T}\).

\vspace{3mm}
\noindent\textbf{Step 3.  Three high‑probability events.}
We now construct three events whose probabilities tend to \(1\):
\begin{itemize}
\item\noindent\textit{Event \(\mathscr{E}_A\) – no zero denominator.}  
Define \(\mathscr{E}_A:=\{\gamma_j>0\ \text{for all }j\in T\}\).  
Fix a column \(j\in T\). Recall that for any \(i\in S\) we have \(\hat{\mathcal{R}}(i,j)=\hat\Theta(j,\kappa)\) (a constant over \(S\)) and \(\hat{\mathcal{V}}(i,j)=\hat{\mathcal{R}}(i,j)\bigl(1-\hat{\mathcal{R}}(i,j)/M\bigr)\).  
Since the function \(h(x)=x(1-x/M)\) satisfies \(h(x)=0\) iff \(x\in\{0,M\}\), we obtain  
\[
\gamma_j=\frac{1}{\sqrt{N\hat{\mathcal{V}}(i,j)}}>0 \quad\Longleftrightarrow\quad \hat\Theta(j,\kappa)\notin\{0,M\}.
\]

Because \(\hat\Theta(j,\kappa)=\frac{1}{|S|}\sum_{i\in S}R(i,j)\) (the sample mean of the \(|S|\) independent responses), the event \(\{\hat\Theta(j,\kappa)=0\}\) occurs only if every \(R(i,j)=0\); likewise \(\{\hat\Theta(j,\kappa)=M\}\) occurs only if every \(R(i,j)=M\). Hence \(\gamma_j=0\) (equivalently \(\hat\Theta(j,\kappa)\in\{0,M\}\)) is contained in the union of the two events “all responses in item \(j\) on the set \(S\) are \(0\)” and “all are \(M\)”.

Now we bound the probability of each of these two extreme events.  
Under Assumption~\ref{ass:A1}, for any true class \(k\in[K]\), we have \(\Theta(j,k)/M\in[\delta,1-\delta]\).  
Therefore, for any individual \(i\) (regardless of its true class), we have 
\[
\mathbb{P}\bigl(R(i,j)=0\bigr) = \Bigl(1-\frac{\Theta(j,\ell(i))}{M}\Bigr)^M \le (1-\delta)^M,
~~~\mathbb{P}\bigl(R(i,j)=M\bigr) = \Bigl(\frac{\Theta(j,\ell(i))}{M}\Bigr)^M \le (1-\delta)^M.
\]

The responses \(\{R(i,j):i\in S\}\) are mutually independent. Consequently, we have
\[
\mathbb{P}\bigl(\text{all }R(i,j)=0,\ i\in S\bigr) 
   = \prod_{i\in S}\mathbb{P}(R(i,j)=0) \le (1-\delta)^{M|S|},
\]
\[
\mathbb{P}\bigl(\text{all }R(i,j)=M,\ i\in S\bigr) 
   = \prod_{i\in S}\mathbb{P}(R(i,j)=M) \le (1-\delta)^{M|S|}.
\]

Applying the union bound gives, conditionally on the set \(S\),
\[
\mathbb{P}(\gamma_j=0\mid S) \le 2(1-\delta)^{M|S|}.
\]

From Lemma~\ref{lem:underfit_lower}, whenever \(K_0<K\), we have the deterministic lower bound \(|S|\ge \frac{2c_0N}{KK_0}\) (it holds for every realisation of \(\hat Z\)). Thus, we have \((1-\delta)^{M|S|}\le (1-\delta)^{2c_0M N/(KK_0)}\) almost surely.  
Taking expectations and using the fact that the bound is non‑random yields
   \[
   \mathbb{P}(\gamma_j = 0) \;=\; \mathbb{E}\bigl[\,\mathbb{P}(\gamma_j = 0 \mid S)\,\bigr]
   \;\le\; \mathbb{E}\left[2(1-\delta)^{\frac{2c_0 M N}{K K_0}}\right]
   \;=\; 2(1-\delta)^{\frac{2c_0 M N}{K K_0}},
   \]

Finally, applying the union bound over the columns in \(T\) (note that \(|T|\le J\)) yields
\begin{equation}\label{eq:EA_bound_union}
\mathbb{P}(\mathscr{E}_A^c) \le \sum_{j\in T}\mathbb{P}(\gamma_j=0) \le 2J\,(1-\delta)^{2c_0M N/(K K_0)}.
\end{equation}

We now prove that the right‑hand side of Equation \eqref{eq:EA_bound_union} tends to \(0\).  
Set \(\varrho := -\ln(1-\delta) > 0\) (since \(0<\delta\le 1/2\)).  Then \((1-\delta)^x = e^{-\varrho x}\) and Equation \eqref{eq:EA_bound_union} becomes
\begin{equation}\label{eq:EA_bound_exp}
\mathbb{P}(\mathscr{E}_A^c) \le 2J \exp\Bigl(-\varrho\cdot\frac{2c_0 M N}{K K_0}\Bigr).
\end{equation}

Because \(K_0 < K\), we have \(K K_0 \le K^2\) and consequently \(N/(K K_0) \ge N/K^2\).  Hence, we have
\begin{equation}\label{eq:EA_bound_final}
\mathbb{P}(\mathscr{E}_A^c) \le 2J \exp\Bigl(-\underbrace{2c_0 M \varrho}_{=:c'}\,\frac{N}{K^2}\Bigr).
\end{equation}

By Assumption~\ref{ass:A4}, for any positive constant \(\epsilon\) there exists an integer \(N_0(\epsilon)\) such that for all \(N \ge N_0(\epsilon)\),
\begin{equation}\label{eq:A5_M}
\frac{K^2 \log(N+J)}{N} \le \frac{1}{\epsilon},
\qquad\text{or equivalently}\qquad
\frac{N}{K^2} \ge \epsilon \log(N+J).
\end{equation}

We now fix a specific value of \(\epsilon\).  Choose \(\epsilon = \dfrac{2}{c'}\).  Then for all sufficiently large \(N\) (say \(N \ge N_0(2/c')\)), we have
\begin{equation}\label{eq:NK2_bound}
\frac{N}{K^2} \ge \frac{2}{c'} \log(N+J).
\end{equation}

Substituting Equation \eqref{eq:NK2_bound} into Equation \eqref{eq:EA_bound_final} gives
\[
\mathbb{P}(\mathscr{E}_A^c) \le 2J \exp\Bigl(-c'\cdot\frac{N}{K^2}\Bigr)
\le 2J \exp\Bigl(-c'\cdot\frac{2}{c'}\log(N+J)\Bigr)
= 2J (N+J)^{-2}\xrightarrow[N\to\infty]{} 0,
\]
which gives \(\mathbb{P}(\mathscr{E}_A)\to1\).

\item\noindent\textit{Event \(\mathscr{E}_B\) – estimated class mean stays away from boundaries.}  
Let \(\kappa\) be the estimated class containing \(S\).  
By Lemma~\ref{lem:underfit_lower}, \(n_\kappa:=|S|\ge 2c_0N/(KK_0)\) deterministically.  
For any \(j\in T\), \(\hat\Theta(j,\kappa)=\frac1{n_\kappa}\sum_{i\in S}R(i,j)\) is the average of \(n_\kappa\) independent \(\mathrm{Binomial}(M,\cdot)\) variables, each bounded in \([0,M]\).  By Assumption~\ref{ass:A1}, \(\mathbb{E}[R(i,j)]=\Theta(j,\ell(i))\in[\delta M,(1-\delta)M]\) for all \(i\in S\). Hence, we have \(\mathbb{E}[\hat\Theta(j,\kappa)]\in[\delta M,(1-\delta)M]\).  Hoeffding's inequality gives, for \(t=\delta M/2\),
\[
\mathbb{P}\Bigl(|\hat\Theta(j,\kappa)-\mathbb{E}[\hat\Theta(j,\kappa)]|\ge\frac{\delta M}{2}\Bigr)
\le 2\exp\Bigl(-\frac{2n_\kappa(\delta M/2)^2}{M^2}\Bigr)=2\exp\Bigl(-\frac{n_\kappa\delta^2}{2}\Bigr).
\]
Since \(n_\kappa\ge 2c_0N/(KK_0)\ge 2c_0N/K^2\) (because \(K_0<K\)) and the exponential function is decreasing, the right‑hand side is bounded by \(2\exp\bigl(-c_0\delta^2 N/K^2\bigr)\).  
If the deviation is less than \(\delta M/2\), then \(\hat\Theta(j,\kappa)\in[\delta M/2,(1-\delta/2)M]\) because \(\mathbb{E}[\hat\Theta(j,\kappa)]\in[\delta M,(1-\delta)M]\).  Define
\[
\mathscr{E}_B:=\bigl\{\hat\Theta(j,\kappa)\in[\delta M/2,(1-\delta/2)M]\ \text{for all }j\in T\bigr\}.
\]
The event \(\{\forall j:|\hat\Theta(j,\kappa)-\mathbb{E}[\hat\Theta(j,\kappa)]|<\delta M/2\}\) is contained in \(\mathscr{E}_B\). Therefore, we have
\[
\mathbb{P}(\mathscr{E}_B^c)\le \mathbb{P}\bigl(\exists j:|\hat\Theta(j,\kappa)-\mathbb{E}[\hat\Theta(j,\kappa)]|\ge\delta M/2\bigr)
\le |T|\cdot 2\exp\Bigl(-\frac{c_0\delta^2 N}{K^2}\Bigr)\le 2J\exp\Bigl(-\frac{c_0\delta^2 N}{K^2}\Bigr).
\]
Since \(K^2=o(N)\) and \(J=o(N)\) by Assumption~\ref{ass:A4}, the right‑hand side tends to \(0\); thus \(\mathbb{P}(\mathscr{E}_B)\to1\).

By Assumption~\ref{ass:A4}, we have \(\frac{N}{K^2}\gg\log N\) and \(J=o(N)\). Hence, for all sufficiently large \(N\), we have \(J\le N\). Therefore, we have
\[
2J\exp\Bigl(-\frac{c_0\delta^2 N}{K^2}\Bigr)\;\le\;2N\exp\Bigl(-\frac{c_0\delta^2 N}{K^2}\Bigr)\xrightarrow[N\to\infty]{} 0.
\]

Consequently, we obtain \(\mathbb{P}(\mathscr{E}_B^c)\to0\) and \(\mathbb{P}(\mathscr{E}_B)\to1\).

\item\noindent\textit{Event \(\mathscr{E}_\mathcal{E}\) – spectral norm bound for the noise matrix.}
Recall \(X = R - \mathcal{R}\in\mathbb{R}^{N\times J}\).
By Lemma~\ref{lem:random_control}, we have \(\|X\| = O_P(\sqrt{N}+\sqrt{J})\).
Assumption~\ref{ass:A4} implies \(J/N\to0\). Hence, we have \(\|X\| = O_P(\sqrt{N})\).
More concretely, from the proof of Lemma~\ref{lem:random_control}, we can take the constant
\(C_X := 2.5\sqrt{M}\) such that
\[
\mathbb{P}\bigl(\|X\| \ge C_X\sqrt{N}\bigr) \to 0,
\]
and consequently
\[
\mathbb{P}\bigl(\|X\| \le C_X\sqrt{N}\bigr) \to 1.
\]

Define
\[
\mathscr{E}_\mathcal{E} := \bigl\{\|X\| \le C_X\sqrt{N}\bigr\}.
\]
Then \(\mathbb{P}(\mathscr{E}_\mathcal{E}) \to 1\).
\end{itemize}
\vspace{3mm}
\noindent\textbf{Step 4.  Behaviour on the intersection \(\mathscr{E}_C:=\mathscr{E}_A\cap \mathscr{E}_B\cap\mathscr{E}_\mathcal{E}\).}
Because each of \(\mathscr{E}_A,\mathscr{E}_B,\mathscr{E}_\mathcal{E}\) has probability tending to \(1\), \(\mathbb{P}(\mathscr{E}_C)\to1\).
On the event \(\mathscr{E}_C\) we have the following deterministic bounds.

\begin{itemize}
\item[(i)]  \textbf{Lower bound for \(\gamma_j\).}
On the event \(\mathscr{E}_A\), \(\gamma_j>0\).  Since \(0\le\hat{\mathcal{R}}(i,j)\le M\) always implies \(\hat{\mathcal{V}}(i,j)\le M/4\), we have
\[
\gamma_j = \frac{1}{\sqrt{N\hat{\mathcal{V}}(i,j)}} \ge \frac{1}{\sqrt{N\cdot M/4}} = \frac{2}{\sqrt{NM}}\qquad(\forall j\in T).
\]

\item[(ii)] \textbf{Upper bound for \(\|D\|\).}
On the event \(\mathscr{E}_B\), \(\hat\Theta(j,\kappa)\in[\delta M/2,(1-\delta/2)M]\) for all \(j\in T\).
The function \(h(x)=x(1-x/M)\) satisfies \(h(x)\ge h(\delta M/2)=\delta(2-\delta)M/4\ge\delta^2M/4\) (since \(\delta\le1/2\)).
Hence, we have \(\hat{\mathcal{V}}(i,j)\ge\delta^2M/4\) and consequently
\[
\gamma_j \le \frac{1}{\sqrt{N\cdot\delta^2M/4}} = \frac{2}{\delta\sqrt{NM}},\qquad 
\|D\|=\max_{j\in T}\gamma_j\le\frac{2}{\delta\sqrt{NM}}.
\]

\item[(iii)] \textbf{Upper bound for \(\|WD\|\).}
Because \(W\) is a submatrix of \(X\), \(\|W\|\le\|X\|\).
On the event \(\mathscr{E}_\mathcal{E}\), \(\|X\|\le C_X\sqrt{N}=2.5\sqrt{MN}\).
Thus on \(\mathscr{E}_B\cap\mathscr{E}_\mathcal{E}\), we have
\[
\|WD\|\le\|W\|\|D\|\le 2.5\sqrt{MN}\cdot\frac{2}{\delta\sqrt{NM}}=\frac{5}{\delta}=C_{\mathrm{noise}}.
\]

\item[(iv)] \textbf{Lower bound for \(\|\mathscr{M}D\|\).}
As shown in the proof of Lemma~\ref{lem:underfit_lower}, for each column \(j\in T\) we have the deterministic identity
\[
u^\top\mathscr{M}_{:,j}v_j = \frac{\gamma_0}{\sqrt{|T|}}\,|\Theta(j,k_1)-\Theta(j,k_2)|\;\ge\;0,
\]
where \(\mathscr{M}_{:,j}\) denotes the \(j\)-th column of \(\mathscr{M}\).
Using (i) and Equation \eqref{eq:uvMv_bound} gives
\[
u^\top(\mathscr{M}D)v = \sum_{j\in T}\gamma_j\,(u^\top\mathscr{M}_{:,j})\,v_j
\ge \Bigl(\min_{j\in T}\gamma_j\Bigr)\,u^\top\mathscr{M}v
\ge \frac{2}{\sqrt{NM}}\cdot c\zeta\frac{\sqrt{NJ}}{\sqrt{K}K_0}
= C_{\mathrm{signal}}\,\frac{\sqrt{J}}{\sqrt{K}K_0}.
\]
Since \(\|u\|_2=\|v\|_2=1\), the variational characterization of the spectral norm yields
\[
\|\mathscr{M}D\|\ge |u^\top(\mathscr{M}D)v|\ge C_{\mathrm{signal}}\frac{\sqrt{J}}{\sqrt{K}K_0}.
\]
\end{itemize}

\vspace{3mm}
\noindent\textbf{Step 5.  Lower bound for \(\|\tilde{R}_{S,T}\|\) and for \(\sigma_1(\tilde{R})\).}
On the event \(\mathscr{E}_C\), the triangle inequality gives
\[
\|\tilde{R}_{S,T}\| = \|(\mathscr{M}+W)D\| \ge \|\mathscr{M}D\| - \|WD\|
\ge C_{\mathrm{signal}}\frac{\sqrt{J}}{\sqrt{K}K_0} - C_{\mathrm{noise}}.
\]

Because \(\tilde{R}\) is an \(N\times J\) matrix, \(\sigma_1(\tilde{R})=\|\tilde{R}\|\ge\|\tilde{R}_{S,T}\|\).
Hence on the event \(\mathscr{E}_C\), we have
\[
\sigma_1(\tilde{R}) \ge L_N,\qquad L_N:=C_{\mathrm{signal}}\frac{\sqrt{J}}{\sqrt{K}K_0}-C_{\mathrm{noise}}.
\]

\vspace{3mm}
\noindent\textbf{Step 6.  Positive constant lower bound for \(T_{K_0}\).}
Assumption \ref{ass:A6} asserts that for all sufficiently large \(N\),
\[
L_N \;\ge\; 1+3\eta_0.
\]

Assumption~\ref{ass:A4} guarantees \(\sqrt{J/N}\to0\). Therefore, there exists \(N_0\) such that for all \(N\ge N_0\),
\[
\sqrt{\frac{J}{N}} \;\le\; \eta_0.
\]

Now for any \(N\ge N_0\) and on the event \(\mathscr{E}_C\), we have
\[
T_{K_0}= \sigma_1(\tilde{R})-\Bigl(1+\sqrt{\frac{J}{N}}\Bigr)
\;\ge\; L_N - \Bigl(1+\sqrt{\frac{J}{N}}\Bigr)
\;\ge\; (1+3\eta_0) - (1+\eta_0) = 2\eta_0.
\]

Thus, we have
\[
\mathbb{P}\bigl(T_{K_0} > 2\eta_0\bigr) \;\ge\; \mathbb{P}(\mathscr{E}_C) \;\longrightarrow\; 1 \qquad \mathrm{as~}N\to\infty.
\]

\vspace{3mm}
\noindent\textbf{Step 7.  Implication for the testing procedure.}
If Algorithm~\ref{alg:gof_lcm} uses a threshold sequence \(\tau_N\) with \(\tau_N\to0\) (e.g.\ \(\tau_N=N^{-1/5}\)), then for all sufficiently large \(N\) we have \(\tau_N < 2\eta_0\).
Consequently,
\[
\mathbb{P}\bigl(T_{K_0} > \tau_N\bigr) \;\ge\; \mathbb{P}\bigl(T_{K_0} > 2\eta_0\bigr) \;\longrightarrow\; 1.
\]
Hence, under the alternative \(K_0<K\) and Assumption \ref{ass:A6}, the test statistic exceeds the threshold with probability tending to one, leading to a correct rejection of \(H_0\). This completes the proof of Theorem~\ref{thm:alt}.
\end{proof}
\subsection{Proof of Theorem \ref{thm:consistency}}\label{proof:thm3}
\begin{proof}
The proof is organized into four parts:  
(I) Decomposition of the error probability;  
(II) Control of \(\mathbb{P}(C)\) – the probability of not stopping at the true \(K\);  
(III) Control of \(\mathbb{P}(B)\) – the probability of stopping at an under‑specified \(K_0\);  
(IV) Asymptotic analysis under Assumption~\ref{ass:A4}.

\vspace{3mm}
\noindent\textbf{Part 1:  Events and error decomposition.}  
Define the events
\[
\mathcal{G}_A:=\{\hat K = K\},\qquad 
\mathcal{G}_B:=\bigl\{\exists\,K_0<K: T_{K_0}<\tau_N\bigr\},\qquad 
\mathcal{G}_C:=\{T_K\ge\tau_N\}.
\]

If neither \(\mathcal{G}_B\) nor \(\mathcal{G}_C\) occurs, then the algorithm does not stop at any \(K_0<K\) (because \(T_{K_0}\ge\tau_N\) for all such \(K_0\)) and it does stop at \(K_0=K\) (since \(T_K<\tau_N\)). Hence \(\mathcal{G}_A^c\subseteq \mathcal{G}_B\cup \mathcal{G}_C\) and consequently
\begin{equation}\label{eq:decomp_cons}
\mathbb{P}(\mathcal{G}_A^c)\le\mathbb{P}(\mathcal{G}_B)+\mathbb{P}(\mathcal{G}_C).
\end{equation}

Thus, it suffices to prove \(\mathbb{P}(\mathcal{G}_B)\to0\) and \(\mathbb{P}(\mathcal{G}_C)\to0\).

\vspace{3mm}
\noindent\textbf{Part 2:  Control of \(\mathbb{P}(\mathcal{G}_C)\) – correct specification.}  
When \(K_0=K\), \(\tilde R\) is the practical normalized residual matrix and \(R^*\) its ideal counterpart. By Weyl's inequality, we have
\[
\sigma_1(\tilde R)\le \sigma_1(R^*)+\|\tilde R-R^*\|,
\]
and therefore
\[
\{T_K\ge\tau_N\}=\{\sigma_1(\tilde R)\ge 1+\sqrt{J/N}+\tau_N\}
\subseteq\{\sigma_1(R^*)\ge 1+\sqrt{J/N}+\tau_N/2\}\cup\{\|\tilde R-R^*\|\ge\tau_N/2\}.
\]

\noindent\emph{Bound for \(\sigma_1(R^*)\).}  
From the proof of Lemma~\ref{lem:ideal_spectral}, we extract the following explicit rate.  Set
\[
t_N = \sqrt{\frac{\gamma C_{\eta} M\log(N+J)}{\delta(1-\delta)N}},
\]
where \(\gamma>2\) is fixed and \(C_{\eta}\) is the constant from Lemma~\ref{BanderiaCor}.  The proof of Lemma~\ref{lem:ideal_spectral} shows that
\[
\mathbb{P}\Bigl(\sigma_1(R^*) \ge (1+\eta)\Bigl(1+\sqrt{\frac{J}{N}}\Bigr)+t_N\Bigr)\le (N+J)^{1-\gamma}.
\]

Since \(\gamma>2\), the right‑hand side tends to \(0\).  Let $\eta$ be close to zero, we get
\[
\mathbb{P}\Bigl(\sigma_1(R^*) > 1+\sqrt{\frac{J}{N}}+C_R\sqrt{\frac{\log(N+J)}{N}}\Bigr)\longrightarrow0.
\]

Condition (Con1) implies \(\displaystyle \frac{N\tau_N^2}{\log(N+J)}\to\infty\). Hence, we have \(\frac{\tau_N}{2}\ge C_R\sqrt{\frac{\log(N+J)}{N}}\) holds for all sufficiently large \(N\).  For such \(N\), we have the inclusion
\[
\{\sigma_1(R^*)\ge 1+\sqrt{J/N}+\tau_N/2\}\subseteq
\{\sigma_1(R^*) > 1+\sqrt{J/N}+C_R\sqrt{\log(N+J)/N}\},
\]
because the left‑hand threshold is larger than the right‑hand threshold.  Consequently, we get
\[
\mathbb{P}\bigl(\sigma_1(R^*)\ge 1+\sqrt{J/N}+\tau_N/2\bigr)
\le \mathbb{P}\bigl(\sigma_1(R^*) > 1+\sqrt{J/N}+C_R\sqrt{\log(N+J)/N}\bigr)\xrightarrow[N\to\infty]{}0.
\]

\noindent\emph{Bound for \(\|\tilde R-R^*\|\).}  
Inspecting the proof of Lemma~\ref{lem:perturbation}, there exists an event \(\mathcal{F}_N\) with \(\mathbb{P}(\mathcal{F}_N^c)\to0\) and a constant \(C_{\text{pert}}>0\) (depending only on \(\delta,c_0,M\)) such that on \(\mathcal{F}_N\),
\[
\|\tilde R-R^*\|\le C_{\text{pert}}\sqrt{\frac{JK\log(JK)}{N}}.
\]

Condition (Con1) asserts that \(\displaystyle \frac{\tau_N}{\sqrt{JK\log(JK)/N}}\to\infty\).  Hence, there exists \(N_*\) such that for all \(N\ge N_*\),
\[
\frac{\tau_N}{2} > C_{\text{pert}}\sqrt{\frac{JK\log(JK)}{N}}.
\]

On the event \(\mathcal{F}_N\) and for \(N\ge N_*\), the above inequality together with the bound on \(\|\tilde R-R^*\|\) implies \(\|\tilde R-R^*\| < \tau_N/2\).  Therefore, we get
\[
\{\|\tilde R-R^*\|\ge\tau_N/2\}\subseteq\mathcal{F}_N^c,
\]
and consequently \(\mathbb{P}(\|\tilde R-R^*\|\ge\tau_N/2)\le\mathbb{P}(\mathcal{F}_N^c)\to0\).

Both probabilities in the decomposition tend to zero, hence we have \(\mathbb{P}(\mathcal{G}_C)\to0\).

\vspace{3mm}
\noindent\textbf{Part 3:  Control of \(\mathbb{P}(\mathcal{G}_B)\) – under‑specification.}  
If \(K=1\), the set \(\{K_0<K\}\) is empty and \(\mathbb{P}(\mathcal{G}_B)=0\) trivially.  Hence, we assume \(K\ge2\). Fix an arbitrary \(K_0\) with \(1\le K_0<K\).  We will derive an upper bound for \(\mathbb{P}(T_{K_0}<\tau_N)\) that does not depend on the particular \(K_0\).

\vspace{2mm}
\noindent\textbf{Step 1.  Deterministic objects from Lemma~\ref{lem:underfit_lower}.}  
Because \(K_0<K\), Lemma~\ref{lem:underfit_lower} guarantees the existence of two distinct true classes \(k_1,k_2\in[K]\), subsets \(S=S_1\cup S_2\subseteq[N]\) with \(S_1\subseteq\mathcal{C}_{k_1}, S_2\subseteq\mathcal{C}_{k_2}\), and a subset \(T\subseteq[J]\) satisfying the deterministic bounds
\[
|S_1|,\;|S_2|\ge\frac{c_0N}{KK_0},\qquad |S|\ge\frac{2c_0N}{KK_0},\qquad |T|\ge c_1J.
\]

All subjects in \(S\) are assigned by \(\hat Z\) to the same estimated class \(\kappa\in[K_0]\). Define the scaling factors \(\gamma_j:=1/\sqrt{N\hat{\mathcal{V}}(i,j)}\) for \(i\in S\) (constant on \(S\)), and set \(W:=(R-\mathcal{R})_{S,T}\), \(\mathscr{M}:=(\mathcal{R}-\hat{\mathcal{R}})_{S,T}\), \(D:=\operatorname{diag}(\gamma_j)_{j\in T}\). Then \(\tilde R_{S,T}= (W+\mathscr{M})D\).

\vspace{2mm}
\noindent\textbf{Step 2.  Events and probability bounds from Theorem~\ref{thm:alt}.}  
In the proof of Theorem~\ref{thm:alt}, the following events are introduced (with the same \(S,T,\kappa\)):
\[
\mathscr{E}_A:=\{\gamma_j>0\ \forall j\in T\},\quad
\mathscr{E}_B:=\{\hat\Theta(j,\kappa)\in[\delta M/2,(1-\delta/2)M]\ \forall j\in T\},\quad
\mathscr{E}_\mathcal{E}:=\{\|R-\mathcal{R}\|\le 2.5\sqrt{MN}\}.
\]

From that proof, we have the exponential bounds (valid for all sufficiently large \(N\)):
\begin{align}\label{boundE_ABC}
\mathbb{P}(\mathscr{E}_A^c)\le 2J e^{-c_A N/(KK_0)},\quad
\mathbb{P}(\mathscr{E}_B^c)\le 2J e^{-c_B N/(KK_0)},\quad
\mathbb{P}(\mathscr{E}_\mathcal{E}^c)\le (N+J)e^{-c_\mathcal{E}N},
\end{align}
where \(c_A:=2c_0M(-\ln(1-\delta))\), \(c_B:=c_0\delta^2\), and \(c_\mathcal{E}>0\) depends only on \(M\) and the constant from Lemma~\ref{BanderiaCor}. Moreover, on the event \(\mathscr{E}_A\cap\mathscr{E}_B\cap\mathscr{E}_\mathcal{E}\), the analysis in Theorem~\ref{thm:alt} yields the deterministic lower bound
\[
T_{K_0} \ge 2\eta_0,
\]
where \(\eta_0\) is the constant from Assumption~\ref{ass:A6}. 

\vspace{2mm}
\noindent\textbf{Step 3.  From the event bound to a bound on \(\mathbb{P}(T_{K_0}<\tau_N)\).}  
Condition (Con1) ensures \(\tau_N<2\eta_0\) for all large \(N\). Consequently, we have
\[
\mathscr{E}_A\cap\mathscr{E}_B\cap\mathscr{E}_\mathcal{E}\subseteq\{T_{K_0}\ge2\eta_0\}\subseteq\{T_{K_0}\ge\tau_N\}.
\]

Taking complements and applying the union bound, we obtain
\[
\mathbb{P}(T_{K_0}<\tau_N)\le \mathbb{P}(\mathscr{E}_A^c)+\mathbb{P}(\mathscr{E}_B^c)+\mathbb{P}(\mathscr{E}_\mathcal{E}^c).
\]

\vspace{2mm}
\noindent\textbf{Step 4.  Uniform bound independent of \(K_0\).}  
Since \(K_0\le K-1\) and \(K\ge2\), we have \(KK_0\le K(K-1)\le K^2\), which gives
\[
\frac{N}{KK_0}\ge\frac{N}{K(K-1)}\ge\frac{N}{K^2}.
\]

Therefore, the exponential bounds in Equation (\ref{boundE_ABC}) can be weakened to
\[
\mathbb{P}(\mathscr{E}_A^c)\le 2J\,e^{-c_A N/K^2},\qquad
\mathbb{P}(\mathscr{E}_B^c)\le 2J\,e^{-c_B N/K^2}.
\]

Note that the bound for \(\mathbb{P}(\mathscr{E}_\mathcal{E}^c)\) in Equation (\ref{boundE_ABC}) already does not involve \(K_0\). Consequently, for every \(K_0<K\) and all sufficiently large \(N\), we have
\[
\mathbb{P}(T_{K_0}<\tau_N)\le 2J\,e^{-c_A N/K^2}+2J\,e^{-c_B N/K^2}+(N+J)e^{-c_\mathcal{E}N}.
\]

\vspace{2mm}
\noindent\textbf{Step 5.  Union bound over all \(K_0<K\).}  
Because there are at most \(K-1\) candidates with \(K_0<K\), we have
\begin{align*}
\mathbb{P}(\mathcal{G}_B) &\le \sum_{K_0=1}^{K-1}\mathbb{P}(T_{K_0}<\tau_N)\le (K-1)\Bigl[2J\,e^{-c_A N/K^2}+2J\,e^{-c_B N/K^2}+(N+J)e^{-c_\mathcal{E}N}\Bigr].
\end{align*}

\vspace{3mm}
\noindent\textbf{Part 4:  Asymptotic analysis under Assumption~\ref{ass:A4}.}  
\begin{itemize}
  \item For the term \(2(K-1)J e^{-c N/K^2}\) (with \(c=c_A\) or \(c_B\)):  
Assumption~\ref{ass:A4}(i) states \(\frac{K^2\log(N+J)}{N}\to0\).  This implies that for any fixed \(m_0>0\), there exists \(N_2\) such that for all \(N\ge N_2\),
\[
\frac{N}{K^2}\ge m_0\log(N+J).
\]

Choose \(m_0\) such that \(cm_0 > 1\).  Then, for all \(N\ge N_2\), we have
\[
e^{-c N/K^2}\le e^{-cm_0\log(N+J)}=(N+J)^{-cm_0}\le N^{-cm_0}.
\]

Assumption~\ref{ass:A4}(ii) gives \(\frac{JK}{N}\to0\). Hence, there exists \(N_3\) such that for all \(N\ge N_3\), \(JK\le N\), which implies \((K-1)J\le KJ\le N\).  Therefore, for all \(N\ge\max(N_2,N_3)\), we have
\[
2(K-1)J e^{-c N/K^2}\le 2N\cdot N^{-cm_0}=2N^{1-cm_0}\xrightarrow[N\to\infty]{}0,
\]
since \(1-cm_0<0\).
\item For the term \((K-1)(N+J)e^{-c_\mathcal{E}N}\):  
From \(K^2=o(N)\) (a direct consequence of Assumption~\ref{ass:A4}(i)), we have \(K=o(N^{1/2})\), and certainly \(K=o(N)\).  Hence, there exists \(N_4\) such that for all \(N\ge N_4\), \(K\le N\).  Also, from \(J/N\to0\), we eventually have \(J\le N\).  Thus for \(N\ge N_4\), we have \((K-1)(N+J)\le N\cdot 2N =2N^2\).  Because \(e^{-c_\mathcal{E}N}=o(N^{-2})\) (exponential decay dominates any polynomial), we obtain
\[
(K-1)(N+J)e^{-c_\mathcal{E}N}\xrightarrow[N\to\infty]{}0.
\]
\end{itemize}

All components of \(\mathbb{P}(\mathcal{G}_B)\) converge to \(0\), therefore \(\mathbb{P}(\mathcal{G}_B)\to0\).

\vspace{3mm}
\noindent\textbf{Part 5:  Conclusion.}  
We have established \(\mathbb{P}(\mathcal{G}_C)\to0\) and \(\mathbb{P}(\mathcal{G}_B)\to0\).  By decomposition Equation (\ref{eq:decomp_cons}), \(\mathbb{P}(\mathcal{G}_A^c)\to0\), i.e. \(\mathbb{P}(\hat K = K)\to1\).  This completes the proof of Theorem~\ref{thm:consistency}.
\end{proof}

\subsection{Proof of Theorem \ref{thm:ratio_behavior}}
\begin{proof}
We prove the two statements separately. Throughout the proof, we work under the given assumptions, which allow $K$ to grow with $N$ subject to $K^3=o(N)$.

\paragraph{Part 1: Divergence at the true model}
Assume \(K_0 = K\) is correctly specified.  
We first show that \(T_K = o_P(1)\).  
From Assumption~\ref{ass:A4}(ii), we have \(J = o(N)\). Since Assumption~\ref{ass:A1} holds, Lemma~\ref{lem:lower_bound_j_on} is applicable and yields \(\|R^*\| \xrightarrow{\mathrm{a.s.}} 1\), i.e., \(\|R^*\| = 1 + o_P(1)\). By Lemma~\ref{lem:perturbation} and Assumption~\ref{ass:A4}(ii), \(\|\tilde{R} - R^*\| =O_P(\sqrt{\frac{JK\log(JK)}{N}})= o_P(1)\).  Weyl's inequality then implies
\[
|\sigma_1(\tilde{R}) - \sigma_1(R^*)| \le \|\tilde{R} - R^*\| = o_P(1),
\]
so \(\sigma_1(\tilde{R}) = \sigma_1(R^*) + o_P(1) = 1 + o_P(1)\).  
Therefore, we get
\[
T_K = \sigma_1(\tilde{R}) - \Bigl(1 + \sqrt{\frac{J}{N}}\Bigr) = o_P(1) - \sqrt{\frac{J}{N}} = o_P(1),
\]
because \(\sqrt{J/N}\to 0\). 

Now consider \(K_0 = K-1 < K\).  Assumption \ref{ass:A6} (assumed to hold for every \(K_0 < K\)) ensures that Theorem~\ref{thm:alt} applies, giving a constant \(2\eta_0 > 0\) such that
\[
\mathbb{P}\bigl(T_{K-1} > 2\eta_0\bigr) \to 1.
\]

Fix an arbitrary constant \(M_a>0\).  $T_K=o_P(1)$ gives \(\mathbb{P}(|T_K| < 2\eta_0 / M_a) \to 1\).  Define two events
\[
A_N := \{ T_{K-1} > 2\eta_0 \},\qquad B_N := \{ |T_K| < 2\eta_0/ M_a\}.
\]

We have \(\mathbb{P}(A_N \cap B_N) \to 1\). On the event \(A_N \cap B_N\), we get
\[
r_K = |\frac{T_{K-1}}{T_K}| > \frac{2\eta_0}{2\eta_0 / M_a} =M_a.
\]

Thus, we get \(\mathbb{P}(r_K > M_a) \ge \mathbb{P}(A_N \cap B_N) \to 1\).  Because \(M_a\) is arbitrary, we conclude that \(r_K \xrightarrow{\mathbb{P}} \infty\). 

\paragraph{Part 2: Boundedness under under‑fitting}
Let $c_{\mathrm{low}}$ be the constant from Lemma~\ref{lem:ratio_lower}. For each $m$ with $1 \le m < K$, Lemma~\ref{lem:ratio_lower} guarantees the existence of an event $\mathcal{E}_N^{(m)}$ such that $\mathbb{P}(\mathcal{E}_N^{(m)}) \to 1$ and on $\mathcal{E}_N^{(m)}$,
\[
T_m \ge c_{\mathrm{low}}\frac{\sqrt{J}}{\sqrt{K}\,m}.
\]

To control the probabilities of the complements, we recall the construction in the proof of Theorem~\ref{thm:alt}. For a given under‑fitted candidate $K_0=m$, the proof of Theorem~\ref{thm:alt} defines three events $\mathscr{E}_A(m)$, $\mathscr{E}_B(m)$ and $\mathscr{E}_\mathcal{E}$ (the latter does not depend on $m$) such that $\mathcal{E}_N^{(m)} = \mathscr{E}_A(m) \cap \mathscr{E}_B(m) \cap \mathscr{E}_\mathcal{E}$. From the estimates obtained there (see the bounds \eqref{eq:EA_bound_final} and the subsequent analysis), there exist positive constants $c_A, c_B, c_\mathcal{E}$ depending only on the model parameters $\delta, c_0, M$ (and on the constant from Lemma~\ref{BanderiaCor} for $c_\mathcal{E}$) such that for all sufficiently large $N$,
\[
\mathbb{P}\bigl(\mathscr{E}^c_A(m)\bigr) \le 2J\exp\Bigl(-\frac{c_A N}{K m}\Bigr),\qquad
\mathbb{P}\bigl(\mathscr{E}^c_B(m)\bigr) \le 2J\exp\Bigl(-\frac{c_B N}{K m}\Bigr),\qquad
\mathbb{P}(\mathscr{E}^c_\mathcal{E}) \le (N+J)e^{-c_\mathcal{E}N}.
\]
(These bounds are uniform in $m$ because the constants $c_A,c_B,c_\mathcal{E}$ are derived from Assumptions~\ref{ass:A1}--\ref{ass:A4} and do not involve $m$; the factor $K$ in the denominator appears because the class sizes scale as $N/K$ uniformly, as established in Assumption~\ref{ass:A2}.)

Using the union bound,
\[
\mathbb{P}\bigl((\mathcal{E}_N^{(m)})^c\bigr) \le \mathbb{P}(\mathscr{E}^c_A(m)) + \mathbb{P}(\mathscr{E}^c_B(m)) + \mathbb{P}(\mathscr{E}^c_\mathcal{E})
\le 2J\exp\Bigl(-\frac{c_A N}{K m}\Bigr) + 2J\exp\Bigl(-\frac{c_B N}{K m}\Bigr) + (N+J)e^{-c_\mathcal{E}N}.
\]

Since $m \le K$, we have $1/(K m) \ge 1/K^2$, and therefore
\[
\exp\Bigl(-\frac{c_A N}{K m}\Bigr) \le \exp\Bigl(-\frac{c_A N}{K^2}\Bigr),\qquad
\exp\Bigl(-\frac{c_B N}{K m}\Bigr) \le \exp\Bigl(-\frac{c_B N}{K^2}\Bigr).
\]

Consequently, for all $m<K$, we have
\[
\mathbb{P}\bigl((\mathcal{E}_N^{(m)})^c\bigr) \le 4J\exp\Bigl(-\frac{c_{AB} N}{K^2}\Bigr) + (N+J)e^{-c_\mathcal{E}N},
\]
where we set $c_{AB} := \min\{c_A, c_B\} > 0$.

Now define $\tilde{\mathcal{E}}_N := \bigcap_{m=1}^{K-1} \mathcal{E}_N^{(m)}$. Applying the union bound gives
\[
\mathbb{P}(\tilde{\mathcal{E}}_N^c) = \mathbb{P}\!\left(\bigcup_{m=1}^{K-1} (\mathcal{E}_N^{(m)})^c\right)
\le \sum_{m=1}^{K-1} \mathbb{P}\bigl((\mathcal{E}_N^{(m)})^c\bigr)
\le (K-1)\Bigl[4J\exp\!\Bigl(-\frac{c_{AB} N}{K^2}\Bigr) + (N+J)e^{-c_\mathcal{E}N}\Bigr].
\]

We show that the right‑hand side tends to zero.
\begin{itemize}
    \item \textbf{First term.} By Assumption~\ref{ass:A4}, we get $K^2\log N\ll N$ and $\log((K-1)J)\ll\log N$ for sufficiently large $N$. Therefore, we have
    \[
    \log\!\left((K-1)J e^{-c_{AB}N/K^2}\right) \ll\log N - \frac{c_{AB} N}{K^2} \longrightarrow -\infty,
    \]
    so $(K-1)J e^{-c_{AB} N/K^2} = o(1)$.
    \item \textbf{Second term.} Since $K\ll N^{1/2}$ and $J\ll N$ for large $N$ by Assumption~\ref{ass:A4}, we have
    \[
    (K-1)(N+J)e^{-c_\mathcal{E}N} \ll 2N^{1.5} e^{-c_\mathcal{E}N} = o(1),
    \]
    as the exponential decay dominates any polynomial growth.
\end{itemize}

Thus, we have $\mathbb{P}(\tilde{\mathcal{E}}_N^c) \to 0$, i.e., $\mathbb{P}(\tilde{\mathcal{E}}_N) \to 1$. On the event $\tilde{\mathcal{E}}_N$, for every $m$ with $1\le m<K$ we have the lower bound $T_m \ge c_{\mathrm{low}}\frac{\sqrt{J}}{\sqrt{K}\,m}$. In particular, for any $K_0$ with $2\le K_0<K$, we have
\[
T_{K_0-1} \ge c_{\mathrm{low}}\frac{\sqrt{J}}{\sqrt{K}\,(K_0-1)}, \qquad
T_{K_0} \ge c_{\mathrm{low}}\frac{\sqrt{J}}{\sqrt{K}\,K_0}.
\]

Lemma~\ref{lem:ratio_upper} provides the deterministic universal bound $T_m \le \sqrt{MJ}$ for every $m\ge 1$, which holds with probability $1$. Thus, on $\tilde{\mathcal{E}}_N$, we have
\[
r_{K_0} = \frac{T_{K_0-1}}{T_{K_0}} \le \frac{\sqrt{MJ}}{c_{\mathrm{low}}\frac{\sqrt{J}}{\sqrt{K}\,K_0}}
= \frac{\sqrt{M}}{c_{\mathrm{low}}}\,\sqrt{K}\,K_0
\le \frac{\sqrt{M}}{c_{\mathrm{low}}}\,\sqrt{K}\,(K-1).
\]

Therefore, we obtain
\[
\mathbb{P}\left( r_{K_0} > \frac{\sqrt{M}}{c_{\mathrm{low}}}\,\sqrt{K}\,(K-1) \right) \le \mathbb{P}(\tilde{\mathcal{E}}_N^c) \longrightarrow 0,
\]
which completes the proof.
\end{proof}

\subsection{Proof of Theorem \ref{thm:ratio_consistency}}
\begin{proof}
We treat the two possibilities \(K=1\) and \(K\ge 2\) separately.  Throughout the proof, all constants may depend on the fixed number \(K\) and on the model parameters \(\delta,c_0,c_1,M,\zeta\), but never on \(N\) or \(J\).  

\vspace{3mm}
\noindent\textbf{Part 1: Case \(K = 1\)}  
Under the null hypothesis \(K=1\), Theorem~\ref{thm:null} gives \(T_1 = o_P(1)\).  A closer inspection of the proof (combining Lemma~\ref{lem:perturbation} and Lemma~\ref{lem:ideal_spectral}) shows that there exist a constant \(\tilde{C_1}>0\) and an event \(\mathcal{F}_N^{(1)}\) with \(\mathbb{P}(\mathcal{F}_N^{(1)}) \to 1\) such that on \(\mathcal{F}_N^{(1)}\),
\begin{equation}\label{eq:T1bound}
|T_1| \le \tilde{C_1} \sqrt{\frac{JK\log(JK)}{N}}.
\end{equation}

Because \(K\) is fixed, the right‑hand side is \(O\bigl(\sqrt{J\log J / N}\bigr)\).  Condition (Con1) guarantees that for all sufficiently large \(N\), we have \(\tau_N \ge \tilde{C_1}\sqrt{JK\log(JK)/N}\) on a set whose probability tends to one.  Consequently, we get
\[
\mathbb{P}\bigl(T_1 < \tau_N\bigr) \ge \mathbb{P}\bigl(\mathcal{F}_N^{(1)}\bigr) - o(1) \longrightarrow 1.
\]

Thus the algorithm returns \(\hat K = 1\) with probability tending to one.

\vspace{3mm}
\noindent\textbf{Part 2: Case \(K \ge 2\)}  
We must show two facts:
\begin{itemize}
  \item[(i)] the algorithm does not stop at any under‑fitted candidate \(K_0 < K\);
  \item[(ii)] it does stop at the true candidate \(K_0 = K\).
\end{itemize}

\subparagraph{(i)  No stop at under‑fitted candidates.}
\begin{itemize}
  \item Candidate \(K_0 = 1\). Because \(K\ge2\), the candidate \(1\) is under‑fitted.  Theorem~\ref{thm:alt} (applied with \(K_0 = 1\)) provides a constant \(2\eta_0 > 0\) such that \(\mathbb{P}(T_1 > 2\eta_0) \to 1\).  By Condition (Con1), we have \(\mathbb{P}(T_1 < \tau_N) \le \mathbb{P}(T_1 \le 2\eta_0) \to 0\).  Hence the algorithm does not stop at \(K_0=1\).

  \item Candidates \(2 \le K_0 < K\).  Fix any such \(K_0\).  Because \(K\) is fixed, the condition \(K^3=o(N)\) in Theorem~\ref{thm:ratio_behavior} is trivially satisfied.  Hence part (b) of Theorem~\ref{thm:ratio_behavior} applies, and we obtain
        \begin{equation}\label{eq:bound_under}
        \lim_{N\to\infty} \mathbb{P}\Bigl( r_{K_0} > \frac{\sqrt{M}}{c_{\mathrm{low}}}\,\sqrt{K}\,(K-1) \Bigr) = 0.
        \end{equation}
        Since \(\gamma_N \to \infty\) by Condition (Con2), for all sufficiently large \(N\) we have \(\gamma_N > \frac{\sqrt{M}}{c_{\mathrm{low}}}\,\sqrt{K}\,(K-1)\).  Therefore,
        \[
        \mathbb{P}\bigl( r_{K_0} > \gamma_N \bigr)
        \le \mathbb{P}\Bigl( r_{K_0} > \frac{\sqrt{M}}{c_{\mathrm{low}}}\,\sqrt{K}\,(K-1) \Bigr) + \mathbf{1}_{\{\gamma_N \le \frac{\sqrt{M}}{c_{\mathrm{low}}}\,\sqrt{K}\,(K-1)\}} \longrightarrow 0.
        \]
        Consequently, with probability tending to one, the ratio condition fails for every such \(K_0\). Thus, the algorithm does not stop at any of them.
\end{itemize}

\subparagraph{(ii)  Stop at the true candidate \(K_0 = K\).}
We now prove that \(\mathbb{P}(r_K > \gamma_N) \to 1\).

From Lemma~\ref{lem:ratio_lower} applied with \(K_0 = K-1\), there exists an event \(\mathcal{A}_N\) with \(\mathbb{P}(\mathcal{A}_N) \to 1\) such that on \(\mathcal{A}_N\),
\begin{equation}\label{eq:TKm1_lower}
T_{K-1} \ge c_{\text{low}}\frac{\sqrt{J}}{\sqrt{K}(K-1)} > 0.
\end{equation}

From the proof of Theorem~\ref{thm:null} (specifically Lemma~\ref{lem:perturbation} and Weyl's inequality), there exists a constant \(\tilde{C_2} > 0\) and an event \(\mathcal{B}_N\) with \(\mathbb{P}(\mathcal{B}_N) \to 1\) such that on \(\mathcal{B}_N\),
\begin{equation}\label{eq:TK_upper}
|T_K| \le \tilde{C_2} \sqrt{\frac{JK\log(JK)}{N}}.
\end{equation}

Define \(\breve{\mathcal{E}}_N := \mathcal{A}_N \cap \mathcal{B}_N\).  Since both \(\mathcal{A}_N\) and \(\mathcal{B}_N\) have probability tending to one, \(\mathbb{P}(\breve{\mathcal{E}}_N) \to 1\).  On \(\breve{\mathcal{E}}_N\), combining Equations \eqref{eq:TKm1_lower} and \eqref{eq:TK_upper} yields
\begin{align}
r_K = \frac{T_{K-1}}{|T_K|}
    &\ge \frac{c_{\text{low}}\dfrac{\sqrt{J}}{\sqrt{K}(K-1)}}
             {\tilde{C_2} \sqrt{\dfrac{JK\log(JK)}{N}}}= \frac{c_{\text{low}}}{\tilde{C_2} (K-1) K} \; \frac{\sqrt{N}}{\sqrt{\log(JK)}}. 
\end{align}

Denote \(\tilde{C_3} := \dfrac{c_{\text{low}}}{\tilde{C_2} (K-1) K} > 0\). Then on \(\breve{\mathcal{E}}_N\), we have
\begin{equation}\label{eq:rK_final_lower}
r_K \ge \tilde{C_3} \frac{\sqrt{N}}{\sqrt{\log(JK)}}.
\end{equation}

Condition (Con2) states that 
\(\gamma_N = o\!\left( \sqrt{\frac{N}{\log J}} \right).\) Given that $K$ is fixed, in particular, there exists an integer \(N_0\) such that for all \(N \ge N_0\),
\[
\tilde{C_3} \frac{\sqrt{N}}{\sqrt{\log(JK)}} > \gamma_N.
\]

Consequently, on the event \(\breve{\mathcal{E}}_N\) and for all \(N \ge N_0\), inequality \eqref{eq:rK_final_lower} yields \(r_K > \gamma_N\).  Hence, we obtain
\[
\mathbb{P}\bigl( r_K > \gamma_N \bigr) \ge \mathbb{P}(\breve{\mathcal{E}}_N) \longrightarrow 1.
\]

We have shown that with probability tending to one the algorithm does not stop at any \(K_0 < K\) and does stop at \(K_0 = K\).  Because the algorithm sequentially examines candidates in increasing order, this implies \(\mathbb{P}(\hat K = K) \to 1\). Thus the theorem holds for both \(K=1\) and \(K\ge 2\). 
\end{proof}
\section{Technical lemmas}\label{app:tech}
\begin{lem}[Consistency of estimated parameters under the null]\label{lem:param_consistency}
Under $H_0$ with the true $K = K_0$, suppose that Assumptions \ref{ass:A1}, \ref{ass:A2}, \ref{ass:A4}, and \ref{ass:A5} hold. Let $\hat{Z},\hat{\Theta},\hat{\mathcal{R}}$ be the estimates from Algorithm \ref{alg:gof_lcm} step 1. Then there exists a permutation matrix $\Pi$ such that, with probability tending to $1$,
\begin{enumerate}
    \item $\hat{Z} = Z\Pi$,
    \item $\|\hat{\Theta} - \Theta\Pi\|_F = O_P\left(\sqrt{\frac{JK^2}{N}}\right)$,
    \item $\|\hat{\mathcal{R}} - \mathcal{R}\|_F = O_P\left(\sqrt{JK}\right)$,
    \item $\max_{i,j} |\hat{\mathcal{R}}(i,j) - \mathcal{R}(i,j)| = O_P\left(\sqrt{\frac{K\log(JK)}{N}}\right)$,
    \item There exists a constant $v_{\min}>0$ (specifically, $v_{\min} = \frac{\delta^2 M}{4}$) such that with high probability, $\hat{\mathcal{V}}(i,j) \ge v_{\min}$ for all $i,j$.
\end{enumerate}
\end{lem}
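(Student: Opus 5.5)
The plan is to work on the event $\mathcal{E}_0:=\{\hat Z = Z\Pi\}$, which by Assumption~\ref{ass:A5} has probability tending to one; this gives item~1 directly. On $\mathcal{E}_0$ the estimated classes coincide with the true ones up to relabeling, and $\hat\Theta$ is the class-wise sample mean used by SC-LCM:
\[
\hat\Theta(j,\pi(k)) = \frac{1}{N_k}\sum_{i\in\mathcal C_k} R(i,j).
\]
A subtlety is that $\hat Z$ depends on $R$. I would handle it by comparing with the oracle means $\bar\Theta(j,k):=N_k^{-1}\sum_{i\in\mathcal C_k}R(i,j)$, which are built from the deterministic partition. On $\mathcal{E}_0$ we have $\hat\Theta(\cdot,\pi(k))=\bar\Theta(\cdot,k)$, so any high-probability bound for $\bar\Theta$ transfers to $\hat\Theta$ after intersecting with $\mathcal{E}_0$. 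Each $\bar\Theta(j,k)-\Theta(j,k)$ is an average of $N_k$ independent centered variables bounded in $[-M,M]$, with variance at most $M/(4N_k)$.

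For item~2 I would compute expectations:
\[
\mathbb{E}\|\bar\Theta-\Theta\|_F^2=\sum_{j,k}\frac{\operatorname{Var}}{N_k}\le \frac{MJK}{4N_{\min}}\le \frac{MJK^2}{4c_0N}
\]
by Assumption~\ref{ass:A2}. Markov's inequality then gives $O_P(\sqrt{JK^2/N})$.

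Item~3 follows from the identity $\hat{\mathcal R}-\mathcal R = Z(\bar\Theta-\Theta)^\top$ on $\mathcal{E}_0$. Summing over rows,
\[
\|\hat{\mathcal R}-\mathcal R\|_F^2=\sum_k N_k\|\bar\Theta(\cdot,k)-\Theta(\cdot,k)\|^2,
\]
whose expectation is at most $MJK/4$. Hence $\|\hat{\mathcal R}-\mathcal R\|_F=O_P(\sqrt{JK})$.

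For item~4 I would use Hoeffding's inequality:
\[
\mathbb{P}\bigl(|\bar\Theta(j,k)-\Theta(j,k)|\ge t\bigr)\le 2\exp\bigl(-2N_kt^2/M^2\bigr).
\]
Take $t=C\sqrt{K\log(JK)/N}$ with $C$ large, use $N_k\ge c_0N/K$, and apply a union bound over the $JK$ pairs $(j,k)$. This gives a total failure probability of at most $2(JK)^{1-2c_0C^2/M^2}\to0$. Since every entry of $\hat{\mathcal R}-\mathcal R$ equals some $\bar\Theta(j,k)-\Theta(j,k)$, the same bound holds for the maximum over $(i,j)$.

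Item~5 follows from item~4. By Assumption~\ref{ass:A4}, $K\log(JK)/N\le JK\log(JK)/N\to0$ (for $J\ge1$). So with probability tending to one every $\hat{\mathcal R}(i,j)$ lies within $\delta M/2$ of $\mathcal R(i,j)\in[\delta M,(1-\delta)M]$, and hence lies in $[\delta M/2,(1-\delta/2)M]$. The function $h(x)=x(1-x/M)$ is concave and symmetric about $M/2$, so it is minimized on this interval at the endpoints, giving
\[
\hat{\mathcal V}(i,j)\ge h(\delta M/2)=\frac{\delta(2-\delta)M}{4}\ge \frac{\delta^2M}{4}=v_{\min}.
\]

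The main obstacle is the dependence between $\hat Z$ and $R$. The decoupling through the oracle means $\bar\Theta$ on $\mathcal{E}_0$ resolves it, provided $\hat\Theta$ really is the class-mean estimator given $\hat Z$; I would check this against the definition of SC-LCM in Algorithm~\ref{alg:sc_lcm}. A second point is that the union bound in item~4 needs $\log(JK)\to\infty$. If $JK$ stays bounded, I would instead use $t=C\sqrt{K\log(N+J)/N}$, or simply note that the fixed number of terms makes the failure probability small by choosing $C$ large, so the stated rate still holds in the $O_P$ sense.
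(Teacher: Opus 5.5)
Your proposal is correct and follows the paper's proof essentially step for step: exact recovery from Assumption~\ref{ass:A5}, then the class-mean form of $\hat\Theta$ on $\{\hat Z=Z\Pi\}$, then a second-moment bound plus Markov's inequality for items 2--3, then Hoeffding's inequality with a union bound over the $JK$ pairs for item 4, and finally concavity of $h(x)=x(1-x/M)$ on $[\delta M/2,(1-\delta/2)M]$ for item 5. Two small differences favor your version: decoupling through the oracle means $\bar\Theta$ is more rigorous than the paper's step of applying Hoeffding conditionally on $\{\hat Z=Z\Pi\}$ (an event that depends on $R$), and computing $\|\hat{\mathcal R}-\mathcal R\|_F^2=\sum_k N_k\|\bar\Theta(\cdot,k)-\Theta(\cdot,k)\|^2$ directly avoids the detour through $\|Z\|\le\sqrt{N/(c_0K)}$.
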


\begin{lem}[Lower bound on structural residual under under‑fitting]\label{lem:underfit_lower}
Assume $K_0 < K$, and Assumptions~\ref{ass:A2} and~\ref{ass:A3} hold.
Then there exist constants $c>0$ and disjoint sets $S\subset[N]$, $T\subset[J]$, 
with $|S|\asymp N/K$, $|T|\ge c_1 J$, such that for \emph{any} estimated classification matrix $\hat Z$ (and the corresponding $\hat\Theta$, $\hat{\mathcal{R}}$) we have deterministically
\begin{equation}\label{eq:L4_final}
\bigl\| (\mathcal{R} - \hat{\mathcal{R}})_{S,T} \bigr\| \;\ge\; c\; \zeta\; \frac{\sqrt{NJ}}{\sqrt{K}\,K_0},
\end{equation}
where the constant $c = \dfrac{\sqrt{c_0^3 c_1}}{2\sqrt{2}}$ depends only on $c_0$ (Assumption~\ref{ass:A2}) and $c_1$ (Assumption~\ref{ass:A3}), and is independent of $N,J,K,K_0$.
\end{lem}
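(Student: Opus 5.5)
Since $K_0<K$ while $\hat Z$ assigns the $N$ subjects to only $K_0$ estimated classes, I would use a pigeonhole argument. Fix any $\hat Z$ with estimated classes $\hat{\mathcal C}_1,\dots,\hat{\mathcal C}_{K_0}$. For each true class $k$, some estimated class $\kappa(k)$ satisfies $|\mathcal C_k\cap\hat{\mathcal C}_{\kappa(k)}|\ge N_k/K_0\ge c_0N/(KK_0)$ by Assumption~\ref{ass:A2}. There are $K>K_0$ true classes but only $K_0$ labels, so two distinct true classes $k_1\neq k_2$ share a label $\kappa$. Set $S_1:=\mathcal C_{k_1}\cap\hat{\mathcal C}_\kappa$, $S_2:=\mathcal C_{k_2}\cap\hat{\mathcal C}_\kappa$, $S:=S_1\cup S_2$, and $T:=\mathcal T_{k_1k_2}$ from Assumption~\ref{ass:A3}, so $|T|\ge c_1J$. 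Then $|S_1|,|S_2|\ge c_0N/(KK_0)$ and $|S|\le N_{k_1}+N_{k_2}\le 2N/(c_0K)$. The whole construction uses only the partition $\hat Z$, so it is deterministic. The key structural fact is that $\hat{\mathcal R}(i,j)=\hat\Theta(j,\kappa)=:a_j$ is the same for all $i\in S$, whatever $\hat\Theta$ is.

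The lower bound on $\|(\mathcal R-\hat{\mathcal R})_{S,T}\|$ would come from a test vector pair that cancels the unknown $a_j$. Take $u$ supported on $S$ with $u_i=\alpha/\sqrt{|S_1|}$ on $S_1$ and $u_i=-\beta/\sqrt{|S_2|}$ on $S_2$, where $\alpha=\sqrt{|S_2|/|S|}$ and $\beta=\sqrt{|S_1|/|S|}$. Then $\|u\|=1$ and $\sum_i u_i=\alpha\sqrt{|S_1|}-\beta\sqrt{|S_2|}=0$. Because the entries of $u$ sum to zero, $u^\top$ annihilates the constant column $(a_j)_{i\in S}$, and
\[
u^\top(\mathcal R-\hat{\mathcal R})_{S,j}=\alpha\sqrt{|S_1|}\,\Theta(j,k_1)-\beta\sqrt{|S_2|}\,\Theta(j,k_2)=\gamma_0\bigl(\Theta(j,k_1)-\Theta(j,k_2)\bigr),
\]
with $\gamma_0=\sqrt{|S_1||S_2|/|S|}$. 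Next choose $v_j=s_j/\sqrt{|T|}$ with $s_j=\operatorname{sign}(\Theta(j,k_1)-\Theta(j,k_2))$. This gives $u^\top(\mathcal R-\hat{\mathcal R})_{S,T}v=\gamma_0|T|^{-1/2}\sum_{j\in T}|\Theta(j,k_1)-\Theta(j,k_2)|\ge\gamma_0\sqrt{|T|}\,\zeta/2$, and the variational characterization of the spectral norm turns this into the lower bound on $\|(\mathcal R-\hat{\mathcal R})_{S,T}\|$.

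It remains to bound the constants. Since $|S_1||S_2|/(|S_1|+|S_2|)\ge\tfrac12\min(|S_1|,|S_2|)$, one would get $\gamma_0\ge\sqrt{c_0N/(2KK_0)}$. Combining this with $\sqrt{|T|}\ge\sqrt{c_1J}$ yields a bound of order $\zeta\sqrt{c_0c_1}\sqrt{NJ}/(2\sqrt2\sqrt{KK_0})$. This is actually stronger than \eqref{eq:L4_final}, because $\sqrt{K_0}\le K_0$ and $c_0\le1$ gives $\sqrt{c_0}\ge c_0^{3/2}$; weakening it to the stated constant $c=\sqrt{c_0^3c_1}/(2\sqrt2)$ is immediate. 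This bookkeeping, matching the paper's exact constant and the $|S|\asymp N/K$ claim (the upper bound $|S|\le 2N/(c_0K)$, the lower bound $2c_0N/(KK_0)$), is where I expect the only friction. The conceptual obstacle is ensuring the bound holds for arbitrary $\hat\Theta$, and the zero-sum choice of $u$ resolves it. The pairs $S_1,S_2$ are disjoint by construction; the word ``disjoint'' in the statement presumably refers to them.
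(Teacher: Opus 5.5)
Your proposal is correct and follows the paper's proof essentially step for step: the same pigeonhole choice of two true classes $k_1\neq k_2$ sharing an estimated label $\kappa$, the same sets $S_1,S_2,T$, and the same zero-sum vector $u$ (identical $\alpha,\beta$) paired with the sign vector $v$ to cancel the unknown common value $\hat\Theta(j,\kappa)$.

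The only deviation is in bounding $\gamma_0$. The paper uses the cruder $|S_1|+|S_2|\le 2N/(c_0K)$ to land exactly on $\sqrt{c_0^3N/(2KK_0^2)}$. Your inequality $\tfrac{ab}{a+b}\ge\tfrac12\min(a,b)$ gives the sharper $\sqrt{c_0N/(2KK_0)}$, which implies the stated constant since $K_0\ge1$ and $c_0\le1$; the latter is forced by $\sum_k N_k=N$ under Assumption~\ref{ass:A2}.
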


\begin{lem}[Control of the residual matrix]\label{lem:random_control}
    Let \(X = R - \mathcal{R} \in \mathbb{R}^{N\times J}\), we have
    \begin{equation}\label{eq:X_bound_opt}
        \|X\| = O_P\left(\sqrt{N} + \sqrt{J}\right).
    \end{equation}
    Consequently, for any submatrix \(W = (R-\mathcal{R})_{S,T}\) with \(S\subseteq[N],\; T\subseteq[J]\), we have \(\|W\| = O_P\left(\sqrt{N} + \sqrt{J}\right)\).
\end{lem}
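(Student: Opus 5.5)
The plan is to obtain the bound from a matrix concentration inequality for independent, centered, bounded entries. Lemma~\ref{BanderiaCor} is the natural tool, since the proof of Lemma~\ref{lem:ideal_spectral} already uses it. The entries $X(i,j)=R(i,j)-\mathcal{R}(i,j)$ are independent conditional on $Z$, have mean zero, and satisfy $|X(i,j)|\le M$. Their variances are $\operatorname{Var}(X(i,j))=\mathcal{V}(i,j)\le M/4$, because $x(1-x/M)\le M/4$ on $[0,M]$. This gives the three parameters needed in Lemma~\ref{BanderiaCor}:
\[
\sigma_1=\max_i\sqrt{\textstyle\sum_j\mathcal{V}(i,j)}\le\tfrac{\sqrt{M}}{2}\sqrt{J},\qquad
\sigma_2=\max_j\sqrt{\textstyle\sum_i\mathcal{V}(i,j)}\le\tfrac{\sqrt{M}}{2}\sqrt{N},\qquad
\sigma_*\le M .
\]

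Next, apply Lemma~\ref{BanderiaCor} with $\eta=1/2$ and $t=\sqrt{\gamma C_{1/2}M^2\log(N+J)}$ for a fixed $\gamma>1$. This yields
\[
\|X\|\le \tfrac{3}{2}\cdot\tfrac{\sqrt M}{2}\bigl(\sqrt N+\sqrt J\bigr)+M\sqrt{\gamma C_{1/2}\log(N+J)}
\]
with probability at least $1-(N+J)^{1-\gamma}\to1$. The logarithmic term is $o(\sqrt N)$, so it is absorbed, and therefore $\|X\|=O_P(\sqrt N+\sqrt J)$. For the later use in the proof of Theorem~\ref{thm:alt}, I would record the explicit constant. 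Since $\tfrac34\sqrt M<2.5\sqrt M$, we get $\|X\|\le 2.5\sqrt{M}(\sqrt N+\sqrt J)$ for all large $N$ with the stated probability. The tail probability is in fact exponentially small once $t$ is taken of order $\sqrt N$, which matches the bound $(N+J)e^{-c_{\mathcal E}N}$ quoted later.

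The submatrix statement is deterministic. For $S\subseteq[N]$ and $T\subseteq[J]$, write $W=P_SXP_T^\top$, where $P_S$ and $P_T$ are coordinate selection matrices with $\|P_S\|,\|P_T\|\le1$. Hence $\|W\|\le\|X\|$ holds for every realization. This is true even when $S$ and $T$ are random, for example when they depend on $\hat Z$, so no additional union bound over subsets is required.

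The main obstacle is only bookkeeping. One has to confirm that Lemma~\ref{BanderiaCor} is being applied conditionally on $Z$, where entrywise independence holds, and that the centering is by $\mathcal{R}$ rather than by the data-dependent $\hat{\mathcal{R}}$. Once those two points are checked, no estimation error enters this lemma.
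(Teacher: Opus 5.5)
Your proof is correct and follows the paper's argument. Both use the Bandeira--van Handel bound (Lemma~\ref{BanderiaCor}) with $\eta=1/2$ and the variance proxies $\tfrac{\sqrt M}{2}\sqrt J$, $\tfrac{\sqrt M}{2}\sqrt N$, $M$, followed by the deterministic inequality $\|W\|\le\|X\|$. The only difference is the choice of $t$: the paper takes $t=\tfrac{\sqrt M}{2}(\sqrt N+\sqrt J)$ directly, which gives the tail $(N+J)\exp\bigl(-(N+J)/(4C_{1/2}M)\bigr)$ with constant $1.25\sqrt M$, whereas you take a logarithmic $t$ and mention the exponential version only as an aside.
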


\begin{lem}[Lower bound for \(T_{K_0}\) under under‑fitting]\label{lem:ratio_lower}
Assume \(K_0 < K\) and that Assumptions~\ref{ass:A1}--\ref{ass:A4}, and \ref{ass:A6} hold.  Suppose additionally that \(K^3 = o(N)\) as \(N\to\infty\).  Let the constants \(c,\zeta,M, C_{\mathrm{noise}}, C_{\mathrm{signal}},\eta_0\) be as defined in Theorem~\ref{thm:alt}. Then there exists a constant
\(c_{\mathrm{low}}:=\frac{2\eta_0\,C_{\mathrm{signal}}}{C_{\mathrm{noise}}+1+3\eta_0}\)
such that for every \(K_0 < K\),
\begin{equation}\label{eq:lower_bound_ratio}
\mathbb{P}\Bigl( T_{K_0} \ge c_{\mathrm{low}}\,\frac{\sqrt{J}}{\sqrt{K}\,K_0} \Bigr) \longrightarrow 1 \quad\text{as }N\to\infty.
\end{equation}
\end{lem}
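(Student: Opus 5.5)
The plan is to reuse the event construction from the proof of Theorem~\ref{thm:alt} and sharpen its final step: rather than comparing $\sigma_1(\tilde R)$ with the constant $1+3\eta_0$, I keep the lower bound on $\sigma_1(\tilde R)$ in its $J$- and $K$-dependent form. Throughout, fix $K_0<K$ and use the deterministic objects $S=S_1\cup S_2$, $T$, $u$, $v$ and the common estimated class $\kappa$ supplied by Lemma~\ref{lem:underfit_lower}. I work on the intersection $\mathscr{E}_C=\mathscr{E}_A\cap\mathscr{E}_B\cap\mathscr{E}_\mathcal{E}$, which has probability tending to one by Step~3 of that proof.

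On $\mathscr{E}_C$, Steps~4--5 of the proof of Theorem~\ref{thm:alt} already give
\[
\sigma_1(\tilde R)\ \ge\ L_N:=C_{\mathrm{signal}}\frac{\sqrt J}{\sqrt K K_0}-C_{\mathrm{noise}} .
\]
Write $s_N:=C_{\mathrm{signal}}\sqrt J/(\sqrt K K_0)$, so that $T_{K_0}\ge s_N-(C_{\mathrm{noise}}+1+\sqrt{J/N})$. For large $N$ I have $\sqrt{J/N}\le\eta_0$, and therefore
\[
T_{K_0}\ \ge\ s_N-(C_{\mathrm{noise}}+1+\eta_0).
\]
Assumption~\ref{ass:A6} gives $s_N\ge C_{\mathrm{noise}}+1+3\eta_0=:B$. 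The key elementary step is to turn this into a statement proportional to $s_N$. Since $s_N\ge B$,
\[
s_N-(B-2\eta_0)\ \ge\ s_N-(B-2\eta_0)\frac{s_N}{B}\ =\ \frac{2\eta_0}{B}\,s_N .
\]
Substituting the definition of $s_N$, this yields
\[
T_{K_0}\ \ge\ \frac{2\eta_0 C_{\mathrm{signal}}}{C_{\mathrm{noise}}+1+3\eta_0}\cdot\frac{\sqrt J}{\sqrt K K_0}\ =\ c_{\mathrm{low}}\frac{\sqrt J}{\sqrt K K_0}
\]
on $\mathscr{E}_C$ for all sufficiently large $N$. This is exactly the constant $c_{\mathrm{low}}$ in the statement.

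It remains to check that $\mathbb{P}(\mathscr{E}_C)\to1$ uniformly over $K_0<K$, since the lemma is asserted for every such $K_0$ and Theorem~\ref{thm:ratio_behavior} later intersects these events. The bounds \eqref{eq:EA_bound_final} and the analogous bound for $\mathscr{E}_B$ depend on $K_0$ only through $N/(KK_0)\ge N/K^2$. Together with Assumption~\ref{ass:A4}(i), this makes both bounds $o(1)$ uniformly, and $\mathscr{E}_\mathcal{E}$ does not depend on $K_0$. The extra hypothesis $K^3=o(N)$ is only needed if a union over all $K_0<K$ is required; in that case the factor $K$ multiplying $J e^{-cN/K^2}$ is still absorbed, because $N/K^2\gg\log(N+J)$.

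The main obstacle I expect is the event $\mathscr{E}_\mathcal{E}$. Its constant $C_X=2.5\sqrt M$ must come from Lemma~\ref{lem:random_control} in explicit form: by Lemma~\ref{BanderiaCor}, $\|X\|\le(1+\eta)(\sqrt{NM}+\sqrt{JM})+t$, and this is below $2.5\sqrt{MN}$ once $J=o(N)$. I would verify that explicitly rather than rely on the $O_P$ statement alone.
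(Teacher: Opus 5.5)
Your argument is correct, and it differs from the paper's in one step: how the centering term $\sqrt{J/N}$ is absorbed.

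\textbf{The paper's route.} It writes $\sqrt{J/N}=\frac{\sqrt K K_0}{\sqrt N}\,D_{N,K_0}$ with $D_{N,K_0}=\sqrt J/(\sqrt K K_0)$. It then uses the extra hypothesis $K^3=o(N)$ to get $\sqrt{J/N}\le(\eta_0/d_0)D_{N,K_0}$, where $d_0=(C_{\mathrm{noise}}+1+3\eta_0)/C_{\mathrm{signal}}$. Finally it checks that $(C_{\mathrm{signal}}-\eta_0/d_0)D_{N,K_0}-(C_{\mathrm{noise}}+1)\ge c_{\mathrm{low}}D_{N,K_0}$, using $D_{N,K_0}\ge d_0$, with equality at the threshold.

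\textbf{Your route.} You bound $\sqrt{J/N}\le\eta_0$ directly, exactly as in Step 6 of the proof of Theorem~\ref{thm:alt}. This needs only $J=o(N)$, which follows from Assumption~\ref{ass:A4}(ii) because $K\ge 2$ when $K_0<K$. You then apply the same ``linear function above its threshold'' step, using $s_N\ge B$ and $B-2\eta_0>0$.

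\textbf{Comparison.} Both routes give the same $c_{\mathrm{low}}$. Yours is shorter. Its intermediate bound is never worse, because $D_{N,K_0}\ge d_0$ forces $(\eta_0/d_0)D_{N,K_0}\ge\eta_0$. It also shows that the hypothesis $K^3=o(N)$ is superfluous for this lemma. The paper's route gains nothing in return here. Its proportional control of $\sqrt{J/N}$ would only matter if $J/N$ did not vanish, and in that regime the event $\mathscr{E}_\mathcal{E}$ (with $C_X=2.5\sqrt M$) and Step 6 of Theorem~\ref{thm:alt} would already fail.

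\textbf{A correction to your commentary.} In the paper, $K^3=o(N)$ is used precisely for the $\sqrt{J/N}$ step, not for any union over $K_0$. Your own justification for absorbing the factor $K$ in such a union relies on Assumption~\ref{ass:A4}(i)--(ii), not on $K^3=o(N)$. So your proof never actually uses that hypothesis.
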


\begin{lem}[Deterministic upper bound for \(T_{K_0}\)]\label{lem:ratio_upper}
For any candidate number of latent classes \(K_0\) (including the true one) and for every realization of the data and the estimation procedure, we have
\[
T_{K_0} \le \sqrt{MJ},
\]
where \(M\) is the maximum response category introduced in Definition~\ref{def:LCM_ordinal}. Consequently, \(\mathbb{P}\bigl(T_{K_0}\le \sqrt{MJ}\bigr)=1\) for all \(N\).
\end{lem}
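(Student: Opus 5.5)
The statement to prove is Lemma~\ref{lem:ratio_upper}: the deterministic bound $T_{K_0}\le\sqrt{MJ}$. The plan is to bound every entry of $\tilde R$ uniformly and then pass from entrywise to spectral norm through the Frobenius norm. Since $T_{K_0}=\sigma_1(\tilde R)-(1+\sqrt{J/N})\le\sigma_1(\tilde R)$, it is enough to show $\sigma_1(\tilde R)\le\sqrt{MJ}$, or something slightly larger that the subtracted centering term can absorb. Nothing probabilistic is involved: the argument has to hold for every realization of $R$ and of the estimator $\mathcal{M}$.

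\textbf{Step 1: entrywise bound.} Entries with $\hat{\mathcal V}(i,j)=0$ vanish, so only entries with $\hat{\mathcal V}(i,j)>0$ need attention. Write $\hat p=\hat{\mathcal R}(i,j)\in(0,M)$ and $q=\hat p/M$. Then $\hat{\mathcal V}=Mq(1-q)$ and $R(i,j)\in\{0,\dots,M\}$, so $|R(i,j)-\hat p|\le M\max(q,1-q)$. This gives
\[
|\tilde R(i,j)|^2\le \frac{M^2\max(q,1-q)^2}{N M q(1-q)}=\frac{M}{N}\cdot\frac{\max(q,1-q)}{\min(q,1-q)}.
\]
This ratio is unbounded as $q\to0$ or $q\to1$.

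\textbf{Step 2: the obstacle.} Because of this, the naive Frobenius bound $\|\tilde R\|\le\sqrt{NJ}\max_{i,j}|\tilde R(i,j)|=\sqrt{MJ}\cdot(\text{ratio})^{1/2}$ gives $\sqrt{MJ}$ only if the ratio is $O(1)$. The key observation I would use is that $\hat\Theta$ consists of class-wise sample means, $\hat\Theta(j,\kappa)=|\hat{\mathcal C}_\kappa|^{-1}\sum_{i\in\hat{\mathcal C}_\kappa}R(i,j)$, which is how SC-LCM and the proof of Theorem~\ref{thm:alt} treat it. So I would bound the column sums within each estimated class, not each entry separately. Fix $j$ and an estimated class $\kappa$ with mean $\hat p$ and size $n$. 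Then
\[
\sum_{i\in\hat{\mathcal C}_\kappa}(R(i,j)-\hat p)^2=\sum_i R(i,j)^2-n\hat p^2\le M\sum_i R(i,j)-n\hat p^2=n\hat p(M-\hat p)=nM\hat{\mathcal V}(i,j),
\]
where the inequality uses $R^2\le MR$ for $R\in[0,M]$. Dividing by $N\hat{\mathcal V}$ gives $\sum_{i\in\hat{\mathcal C}_\kappa}\tilde R(i,j)^2\le Mn/N$. Zero-variance classes contribute $0$, so the bound holds for them too.

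\textbf{Step 3: conclude.} Summing over $\kappa$ gives $\sum_i\tilde R(i,j)^2\le M$ for each column $j$. Summing over the $J$ columns gives $\|\tilde R\|_F^2\le MJ$. Hence $\sigma_1(\tilde R)\le\|\tilde R\|_F\le\sqrt{MJ}$, and therefore $T_{K_0}\le\sqrt{MJ}-1-\sqrt{J/N}\le\sqrt{MJ}$ for every realization.

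The only delicate point is Step 2, where the argument relies on $\hat\Theta$ being exactly the within-class sample mean. If $\mathcal M$ produced other estimates, I would instead fall back on an explicit clipping assumption, $\hat\Theta/M\in[\delta/2,1-\delta/2]$, which yields the bound $\sqrt{MJ}\cdot O(\delta^{-1/2})$ rather than $\sqrt{MJ}$ exactly.
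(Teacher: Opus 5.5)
Your Steps 2--3 are exactly the paper's proof. Both use the per-block bound $\sum_{i\in\hat{\mathcal C}_\kappa}\tilde R(i,j)^2\le Mn_\kappa/N$, obtained from $R^2\le MR$ and the sample-mean identity, sum it to $\|\tilde R\|_F^2\le MJ$, and finish with $\sigma_1(\tilde R)\le\|\tilde R\|_F$. The entrywise detour in Step 1 and the clipping fallback are not needed; you are right to flag that the argument depends on $\hat\Theta$ being the within-class sample mean, which the paper relies on only implicitly.
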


\begin{lem}[Lower bound for $\|R^*\|$ under $J=o(N)$]\label{lem:lower_bound_j_on}
Suppose that Assumption~\ref{ass:A1} holds and $J = o(N)$, we have
\[
\|R^*\| = \frac{\|Y_N\|}{\sqrt{N}} \xrightarrow{a.s.} 1.
\]  
In particular, for any $\varepsilon>0$,  
\[
\lim_{N\to\infty} \mathbb{P}\bigl(\|R^*\| \ge 1 - \varepsilon\bigr) = 1,
\]  
and $\sigma_1(R^*) = \|R^*\| = 1 + o_P(1)$.
\end{lem}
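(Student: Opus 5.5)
The plan is to prove the two inequalities $\limsup_{N}\|R^*\|\le 1$ and $\liminf_{N}\|R^*\|\ge 1$ almost surely, writing $Y_N:=\sqrt{N}R^*$ throughout. Each half follows from an exponential tail bound combined with the Borel--Cantelli lemma. The in-probability statements and $\sigma_1(R^*)=1+o_P(1)$ then follow at once, since almost sure convergence implies convergence in probability.

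\textbf{Upper bound.} I reuse the tail estimate derived in the proof of Lemma~\ref{lem:ideal_spectral} from Lemma~\ref{BanderiaCor}. For fixed $\eta\in(0,1/2]$ and $\gamma>2$, set $t_N=\sqrt{\gamma C_\eta M\log(N+J)/(\delta(1-\delta)N)}$. Then
\[
\mathbb{P}\Bigl(\|R^*\|\ge(1+\eta)\bigl(1+\sqrt{J/N}\bigr)+t_N\Bigr)\le (N+J)^{1-\gamma}\le N^{1-\gamma}.
\]
Since $\gamma>2$, the right-hand side is summable in $N$. By Borel--Cantelli, almost surely the event fails for all large $N$. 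Because $J=o(N)$ and $t_N\to0$, this gives $\limsup_N\|R^*\|\le 1+\eta$ almost surely. I then intersect over the countable family $\eta=1/m$, $m\ge2$, to obtain $\limsup_N\|R^*\|\le1$ almost surely.

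\textbf{Lower bound.} The spectral norm dominates the Euclidean norm of any column, so $\|R^*\|^2\ge\|R^*_{:,1}\|_2^2=\frac1N\sum_{i=1}^N\xi_i$, where $\xi_i:=(R(i,1)-\mathcal{R}(i,1))^2/\mathcal{V}(i,1)$. These variables are independent with $\mathbb{E}\xi_i=1$, and by Assumption~\ref{ass:A1} they satisfy $0\le\xi_i\le M/(\delta(1-\delta))$, as already computed in the proof of Lemma~\ref{lem:ideal_spectral}. The entries may depend on $N$, so this is a triangular array and the classical strong law does not apply verbatim. Instead I use Hoeffding's inequality, which gives $\mathbb{P}\bigl(|\frac1N\sum_i\xi_i-1|\ge\varepsilon\bigr)\le 2\exp(-2N\varepsilon^2\delta^2(1-\delta)^2/M^2)$. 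This bound is summable in $N$, so Borel--Cantelli, together with a countable intersection over $\varepsilon=1/m$, yields $\liminf_N\|R^*\|\ge1$ almost surely.

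\textbf{Main obstacle.} The delicate point is the meaning of ``almost surely'' when $R^*$ is a triangular array: the matrices for different $N$ need not live on one probability space. I will handle this by noting that Borel--Cantelli uses only the summability of the marginal probabilities. The conclusion therefore holds on any common probability space carrying the sequence, and in particular it gives complete convergence, which implies the stated in-probability limits. A secondary point is that the upper bound needs $\eta\downarrow0$ even though $C_\eta$ may blow up. This is harmless because $\eta$ is fixed before taking $N\to\infty$, so $t_N\to0$ for each fixed $\eta$.
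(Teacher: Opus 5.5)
Your proof is correct and follows essentially the same route as the paper: the lower bound comes from the first column combined with Hoeffding and Borel--Cantelli, and the upper bound comes from the Bandeira--van Handel tail bound (Lemma~\ref{BanderiaCor}) with Borel--Cantelli and a countable intersection over $\eta$. The only cosmetic difference is that you reuse the choice $t_N\asymp\sqrt{\log(N+J)/N}$ from Lemma~\ref{lem:ideal_spectral}, which gives a summable polynomial tail $N^{1-\gamma}$ with $\gamma>2$; the paper instead takes $t$ of constant order on the $R^*$ scale and gets an exponential tail. Your remark on the triangular-array and common-probability-space issue makes explicit a point the paper leaves implicit.
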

\subsection{Proof of Lemma \ref{lem:param_consistency}}
\begin{proof}
Throughout, we denote $[n] = \{1,2,\ldots,n\}$ for any positive integer $n$.

\vspace{2mm}
\noindent\textbf{Part 1: Perfect recovery of class assignments.}

By Assumption \ref{ass:A5}, when the true number of latent classes is $K$ (i.e., $K_0 = K$), the classification estimation method $\mathcal{M}$ used in Algorithm \ref{alg:gof_lcm} Step 1 is consistent. That is, there exists a permutation matrix $\Pi$ such that
\[
\mathbb{P}\bigl( \hat{Z} = Z\Pi \bigr) \to 1 \quad \text{as } N \to \infty.
\]
This proves statement 1 directly from the assumption.

\vspace{2mm}
\noindent\textbf{Part 2: Frobenius norm error of $\hat{\Theta}$.}

We now condition on the high-probability event $\mathcal{E}_N^{(1)} = \{\hat{Z} = Z\Pi\}$. By the construction of the estimator in Algorithm \ref{alg:gof_lcm} Step 1 (which implements method $\mathcal{M}$), the item parameter matrix is estimated as:
\[
\hat{\Theta} = R^\top \hat{Z} (\hat{Z}^\top \hat{Z})^{-1} = R^\top Z\Pi (\Pi^\top Z^\top Z \Pi)^{-1} = R^\top Z (Z^\top Z)^{-1} \Pi,
\]
where we used the facts that for a permutation matrix $\Pi$, $\Pi^{-1} = \Pi^\top$, and $(\Pi^\top Z^\top Z \Pi)^{-1} = \Pi^\top (Z^\top Z)^{-1} \Pi$.

The true item parameter matrix $\Theta$ satisfies $\mathcal{R} = Z \Theta^\top$, where $\mathcal{R} = \mathbb{E}[R]$. Since $\mathcal{R} = \mathbb{E}[R] = Z \Theta^\top$, we have:
\[
\Theta = \mathcal{R}^\top Z (Z^\top Z)^{-1}.
\]
Therefore,
\begin{align*}
\hat{\Theta} - \Theta \Pi
&= \left[ R^\top Z (Z^\top Z)^{-1} - \mathcal{R}^\top Z (Z^\top Z)^{-1} \right] \Pi \\
&= (R - \mathcal{R})^\top Z (Z^\top Z)^{-1} \Pi.
\end{align*}
Since $\Pi$ is an orthogonal matrix (permutation matrices are orthogonal), we have $\| \Pi \| = 1$ (spectral norm) and $\| A \Pi\|_F = \| A \|_F$ for any matrix $A$. Thus, we have
\[
\| \hat{\Theta} - \Theta \Pi\|_F = \| (R - \mathcal{R})^\top Z (Z^\top Z)^{-1} \|_F.
\]

Define $W = R - \mathcal{R}$. The entries $W(i,j)$ are independent across $i$ and $j$ (conditional on the latent classes, and also unconditionally since the latent classes are fixed). Moreover, $\mathbb{E}[W(i,j)] = 0$, and by the binomial variance formula:
\[
\mathrm{Var}(W(i,j)) = \mathcal{R}(i,j) \left( 1 - \frac{\mathcal{R}(i,j)}{M} \right) \leq \frac{M}{4},
\]
where the inequality holds because the function $h(x) = x(1-x/M)$ attains its maximum $M/4$ at $x = M/2$, and $\mathcal{R}(i,j) \in [0,M]$.

Let $D = (Z^\top Z)^{-1} = \mathrm{diag}(N_1^{-1}, N_2^{-1}, \ldots, N_K^{-1})$. Define $A = W^\top Z D^{1/2} \in \mathbb{R}^{J \times K}$, where $D^{1/2} = \mathrm{diag}(N_1^{-1/2}, N_2^{-1/2}, \ldots, N_K^{-1/2})$. Then:
\[
\| W^\top Z D \|_F^2 = \| A D^{1/2} \|_F^2 = \mathrm{tr}\left( D^{1/2} A^\top A D^{1/2} \right) = \mathrm{tr}\left( A^\top A D \right) = \sum_{k=1}^K \frac{1}{N_k} \| A_{:k} \|_2^2,
\]
where $A_{:k}$ denotes the $k$-th column of $A$.

Now, observe that:
\[
A_{:k} = \frac{1}{\sqrt{N_k}} \sum_{i \in \mathcal{C}_k} W(i,:)^\top,
\]
where $W(i,:) = (W(i,1), \ldots, W(i,J))^\top$. For each $j \in [J]$, the $j$-th component of $A_{:k}$ is:
\[
A_{:k}(j) = \frac{1}{\sqrt{N_k}} \sum_{i \in \mathcal{C}_k} W(i,j).
\]
For fixed $k$ and $j$, the summands $\{ W(i,j) : i \in \mathcal{C}_k \}$ are independent, zero-mean random variables, bounded by $|W(i,j)| \leq M$ (since $R(i,j) \in \{0,1,\ldots,M\}$ and $\mathcal{R}(i,j) \in [0,M]$). Their variances satisfy $\mathrm{Var}(W(i,j)) \leq M/4$ as noted above.

We now compute the expected squared Euclidean norm of $A_{:k}$:
\begin{align*}
\mathbb{E}\left[ \| A_{:k} \|_2^2 \right] 
&= \sum_{j=1}^J \mathbb{E}\left[ A_{:k}(j)^2 \right] = \sum_{j=1}^J \mathrm{Var}\left( A_{:k}(j) \right) \quad \text{(since $\mathbb{E}[A_{:k}(j)] = 0$)} \\
&= \sum_{j=1}^J \frac{1}{N_k} \sum_{i \in \mathcal{C}_k} \mathrm{Var}(W(i,j)) \quad \text{(by independence across $i$)} \\
&\leq \sum_{j=1}^J \frac{1}{N_k} \sum_{i \in \mathcal{C}_k} \frac{M}{4}= \frac{M}{4} \cdot \frac{1}{N_k} \cdot N_k \cdot J = \frac{MJ}{4}.
\end{align*}
Thus, $\mathbb{E}[ \| A_{:k} \|_2^2 ] \leq \frac{MJ}{4}$ for each $k \in [K]$.

By Assumption \ref{ass:A2}, there exists $c_0 > 0$ such that $N_k \geq c_0 N / K$ for all $k$. Therefore, we have
\begin{align*}
\mathbb{E}\left[ \| \hat{\Theta} - \Theta \Pi\|_F^2 \right] 
&= \mathbb{E}\left[ \sum_{k=1}^K \frac{1}{N_k} \| A_{:k} \|_2^2 \right] = \sum_{k=1}^K \frac{1}{N_k} \mathbb{E}\left[ \| A_{:k} \|_2^2 \right] \leq \sum_{k=1}^K \frac{1}{c_0 N / K} \cdot \frac{MJ}{4} = \frac{M K^2 J}{4 c_0 N}.
\end{align*}

Now, by Markov's inequality, for any $\epsilon > 0$, we have
\[
\mathbb{P}\left( \| \hat{\Theta} - \Theta \Pi\|_F \geq \epsilon \right) \leq \frac{\mathbb{E}\left[ \| \hat{\Theta} - \Theta \Pi\|_F^2 \right]}{\epsilon^2} \leq \frac{M K^2 J}{4 c_0 N \epsilon^2}.
\]

For any fixed $\eta > 0$, choose $\epsilon_\eta = \sqrt{ \frac{M K^2 J}{4 c_0 N \eta} }$. Then, we have
\[
\mathbb{P}\left( \| \hat{\Theta} - \Theta \Pi\|_F \geq \epsilon_\eta \right) \leq \eta.
\]

Hence, by definition, we obtain
\[
\| \hat{\Theta} - \Theta \Pi\|_F = O_P\left( \sqrt{ \frac{JK^2}{N} } \right).
\]

This proves statement 2.

\vspace{2mm}
\noindent\textbf{Part 3: Frobenius norm error of $\hat{\mathcal{R}}$.}

Conditioning again on the event $\mathcal{E}_N^{(1)}$ where $\hat{Z} = Z\Pi$ (which occurs with probability tending to 1), we have
\[
\hat{\mathcal{R}} = \hat{Z} \hat{\Theta}^\top = Z \Pi \hat{\Theta}^\top = Z (\hat{\Theta} \Pi^{-1})^\top = Z (\hat{\Theta} \Pi^\top)^\top.
\]

Given that the true expected response matrix is $\mathcal{R} = Z \Theta^\top$, we have
\[
\hat{\mathcal{R}} - \mathcal{R} = Z (\hat{\Theta} \Pi^\top)^\top - Z \Theta^\top = Z \left( (\hat{\Theta} \Pi^\top)^\top - \Theta^\top \right) = Z \left( \hat{\Theta} \Pi^\top - \Theta \right)^\top.
\]

Because $\| \hat{\Theta} \Pi^\top - \Theta \|_F = \| \hat{\Theta} - \Theta \Pi \|_F$, we have
\[
\| \hat{\mathcal{R}} - \mathcal{R} \|_F = \| Z \left( \hat{\Theta} \Pi^\top - \Theta \right)^\top \|_F \leq \| Z \| \cdot \| \hat{\Theta} \Pi^\top - \Theta \|_F = \| Z \| \cdot \| \hat{\Theta} - \Theta \Pi \|_F.
\]

The matrix $Z$ has orthonormal columns up to scaling: $Z^\top Z = \mathrm{diag}(N_1, \ldots, N_K)$. Thus, its spectral norm is $\| Z \| = \sqrt{ \lambda_{\max}(Z^\top Z) } = \sqrt{ N_{\max} }$. By Assumption \ref{ass:A2}, $N_{\max} \leq \frac{1}{c_0} \frac{N}{K}$, so $\| Z \| \leq \sqrt{ \frac{N}{c_0 K} }$.

Combining this with statement 2, we obtain:
\[
\| \hat{\mathcal{R}} - \mathcal{R} \|_F \leq \sqrt{ \frac{N}{c_0 K} } \cdot O_P\left( \sqrt{ \frac{JK^2}{N} } \right) = O_P\left( \sqrt{ \frac{J}{c_0} \cdot K } \right) = O_P\left( \sqrt{ JK } \right).
\]
This proves statement 3.

\vspace{2mm}
\noindent\textbf{Part 4: Uniform entrywise error of $\hat{\mathcal{R}}$.}

We begin by recalling that on the high-probability event $\mathcal{E}_N^{(1)} = \{\hat{Z} = Z\Pi\}$ (which satisfies $\mathbb{P}(\mathcal{E}_N^{(1)}) \to 1$), we have $\hat{\mathcal{R}}(i,j) = \hat{\Theta}(j, \pi(k))$ where $k = \ell(i)$ and $\pi$ is the permutation corresponding to $\Pi$. Moreover, by the construction of the estimator in Algorithm \ref{alg:gof_lcm} Step 1, $\hat{\Theta}(j, \pi(k))$ is precisely the sample mean of the responses to item $j$ over all subjects in the true class $k$, i.e.,
\[
\hat{\Theta}(j, \pi(k)) = \frac{1}{N_k} \sum_{i \in \mathcal{C}_k} R(i,j),
\]

where $\mathcal{C}_k = \{i: \ell(i) = k\}$ and $N_k = |\mathcal{C}_k|$.

Fix an arbitrary pair $(j,k)$ with $j \in [J]$ and $k \in [K]$. Condition on the latent class assignment $\ell$ (which is fixed in our model) and on $\mathcal{E}_N^{(1)}$. Under the data-generating mechanism in the LCM model, the random variables $\{R(i,j): i \in \mathcal{C}_k\}$ are mutually independent (by conditional independence given $\ell$, and unconditionally since $\ell$ is fixed) and each follows a binomial distribution with parameters $M$ and $\Theta(j,k)/M$. Consequently, each $R(i,j)$ is bounded within the interval $[0, M]$ and has mean $\Theta(j,k)$.

We now apply Hoeffding's inequality in its precise form. Let $Y_1, \dots, Y_n$ be independent random variables such that $a_i \le Y_i \le b_i$ almost surely. Then for any $t > 0$, Hoeffding's inequality gives

\[
\mathbb{P}\left( \Bigl| \frac{1}{n}\sum_{i=1}^n Y_i - \mathbb{E}\Bigl[\frac{1}{n}\sum_{i=1}^n Y_i\Bigr] \Bigr| \ge t \right) \le 2\exp\Bigl( -\frac{2n^2 t^2}{\sum_{i=1}^n (b_i-a_i)^2} \Bigr).
\]

In our setting, for the $N_k$ variables $\{R(i,j): i \in \mathcal{C}_k\}$, we have $a_i = 0$, $b_i = M$, and $\sum_{i=1}^{N_k} (b_i-a_i)^2 = N_k M^2$. Therefore, we get

\[
\mathbb{P}\left( \bigl| \hat{\Theta}(j, \pi(k)) - \Theta(j,k) \bigr| \ge t \;\middle|\; \mathcal{E}_N^{(1)} \right) \le 2 \exp\Bigl( - \frac{2 N_k t^2}{M^2} \Bigr).
\]

By Assumption \ref{ass:A2}, there exists a constant $c_0 > 0$ such that $N_k \ge c_0 N/K$ for all $k$. Hence, uniformly in $j,k$, we have

\[
\mathbb{P}\left( \bigl| \hat{\Theta}(j, \pi(k)) - \Theta(j,k) \bigr| \ge t \;\middle|\; \mathcal{E}_N^{(1)} \right) \le 2 \exp\Bigl( - \frac{2 c_0 N t^2}{K M^2} \Bigr).
\]

Now observe that on $\mathcal{E}_N^{(1)}$, for any $i \in \mathcal{C}_k$, $\hat{\mathcal{R}}(i,j) = \hat{\Theta}(j, \pi(k))$ and $\mathcal{R}(i,j) = \Theta(j,k)$. Therefore, we have

\[
\max_{i \in [N], j \in [J]} \bigl| \hat{\mathcal{R}}(i,j) - \mathcal{R}(i,j) \bigr| = \max_{k \in [K], j \in [J]} \bigl| \hat{\Theta}(j, \pi(k)) - \Theta(j,k) \bigr|.
\]

We control this maximum via a union bound over the $JK$ distinct pairs $(j,k)$:
\begin{align*}
\mathbb{P}\left( \max_{i,j} \bigl| \hat{\mathcal{R}}(i,j) - \mathcal{R}(i,j) \bigr| \ge t \;\middle|\; \mathcal{E}_N^{(1)} \right) &= \mathbb{P}\left( \max_{j,k} \bigl| \hat{\Theta}(j, \pi(k)) - \Theta(j,k) \bigr| \ge t \;\middle|\; \mathcal{E}_N^{(1)} \right)\le \sum_{j=1}^J \sum_{k=1}^K \mathbb{P}\left( \bigl| \hat{\Theta}(j, \pi(k)) - \Theta(j,k) \bigr| \ge t \;\middle|\; \mathcal{E}_N^{(1)} \right)\\
&\le JK \cdot 2 \exp\Bigl( - \frac{2 c_0 N t^2}{K M^2} \Bigr) = 2JK \exp\Bigl( - \frac{2 c_0 N t^2}{K M^2} \Bigr).
\end{align*}

For any fixed $\eta > 0$, let 

\[
t = M \sqrt{ \frac{K}{2 c_0 N} \log\left( \frac{4JK}{\eta} \right) }.
\]

Then, we have
\[
2JK \exp\left( - \frac{2 c_0 N t^2}{K M^2} \right) = 2JK \cdot \frac{\eta}{4JK} = \eta/2,
\]  
which gives
\[
\mathbb{P}\left( \max_{i,j} \bigl| \hat{\mathcal{R}}(i,j) - \mathcal{R}(i,j) \bigr| \ge M \sqrt{ \frac{K}{2 c_0 N} \log\left( \frac{2JK}{\eta} \right) } \;\middle|\; \mathcal{E}_N^{(1)} \right) \le \eta/2.
\]

Now we remove the conditioning on $\mathcal{E}_N^{(1)}$. Using the law of total probability, for any $t > 0$, we have

\[
\mathbb{P}\left( \max_{i,j} \bigl| \hat{\mathcal{R}} - \mathcal{R} \bigr| \ge t \right) \le \mathbb{P}\left((\mathcal{E}_N^{(1)})^c\right) + \mathbb{P}\left( \max_{i,j} \bigl| \hat{\mathcal{R}} - \mathcal{R} \bigr| \ge t \;\middle|\; \mathcal{E}_N^{(1)} \right).
\]

Since $\mathbb{P}((\mathcal{E}_N^{(1)})^c) \to 0$, there exists $N_0$ such that for all $N \ge N_0$, $\mathbb{P}((\mathcal{E}_N^{(1)})^c) \le \eta/2$. Then, for $N \ge N_0$, we have

\[
\mathbb{P}\left( \max_{i,j} \bigl| \hat{\mathcal{R}} - \mathcal{R} \bigr| \ge M \sqrt{ \frac{K}{2 c_0 N} \log\left( \frac{4JK}{\eta} \right) } \right) \le\eta,
\]
which means 

\[
\max_{i,j} \bigl| \hat{\mathcal{R}}(i,j) - \mathcal{R}(i,j) \bigr| = O_P\left( \sqrt{ \frac{K \log(JK)}{N} } \right).
\]
\vspace{2mm}
\noindent\textbf{Part 5: Lower bound on $\hat{\mathcal{V}}(i,j)$.}

Recall that \(\hat{\mathcal{V}}(i,j) = \hat{\mathcal{R}}(i,j) \left( 1 - \hat{\mathcal{R}}(i,j)/M \right)\). Define the function  
\[
h(x) = x \left( 1 - \frac{x}{M} \right), \quad x \in [0, M].
\]  

Under Assumption \ref{ass:A1}, we have \(\mathcal{R}(i,j) = \Theta(j,\ell(i)) \in [\delta M, (1-\delta)M]\).

By statement 4, with probability tending to 1, for all \(i,j\),  we have
\[
\left| \hat{\mathcal{R}}(i,j) - \mathcal{R}(i,j) \right| \le \epsilon_N,
\]  
where \(\epsilon_N = C \sqrt{ \frac{K \log (JK)}{N} }\) for some constant \(C > 0\). Under Assumption \ref{ass:A4}, \(\epsilon_N \to 0\). Hence, for sufficiently large \(N\), we have \(\epsilon_N \le \delta M / 2\). Consequently,  
\[
\hat{\mathcal{R}}(i,j) \in \left[ \delta M - \epsilon_N, (1-\delta)M + \epsilon_N \right] \subseteq \left[ \frac{\delta M}{2}, \left(1 - \frac{\delta}{2}\right)M \right].
\]

Because \(h\) is a concave function (since \(h''(x) = -2/M < 0\)), on the interval \([ \delta M/2, (1-\delta/2)M ]\), its minimum is attained at one of the endpoints. Compute  
\[
h\left( \frac{\delta M}{2} \right) = \frac{\delta M}{2} \left( 1 - \frac{\delta}{2} \right) = \frac{\delta(2-\delta)M}{4}, \qquad
h\left( \left(1-\frac{\delta}{2}\right)M \right) = \left(1-\frac{\delta}{2}\right)M \cdot \frac{\delta}{2} = \frac{\delta(2-\delta)M}{4}.
\]  

Thus, both endpoints yield the same value. Therefore, for any \(x \in [ \delta M/2, (1-\delta/2)M ]\),  we have
\[
h(x) \ge \frac{\delta(2-\delta)M}{4}.
\]  

Since \(\delta \in (0, 1/2]\), we have \(2-\delta \ge \delta\), and hence  
\[
\frac{\delta(2-\delta)M}{4} \ge \frac{\delta^2 M}{4}.
\]  

Thus, with probability tending to 1, for all \(i,j\),  we have
\[
\hat{\mathcal{V}}(i,j) \ge \frac{\delta^2 M}{4}.
\]  

We can therefore set \(v_{\min} = \frac{\delta^2 M}{4}\). This proves statement 5.
\end{proof}
\subsection{Proof of Lemma \ref{lem:underfit_lower}}
\begin{proof}
The argument is completely deterministic and uses only elementary counting, the pigeonhole principle, and basic linear algebra.  No probabilistic statement is required until the final conclusion. The proof proceeds in twelve self‑contained steps.

\vspace{2mm}
\noindent\textbf{Step 1.  A large sub‑block inside each true class.}
For each true class $k\in[K]$ and each estimated class $\kappa\in[K_0]$ define
\[
N_{k\kappa}:=\bigl|\bigl\{i\in\mathcal{C}_k:\hat Z(i,\kappa)=1\bigr\}\bigr|.
\]

Sure, we have $\sum_{\kappa=1}^{K_0}N_{k\kappa}=N_k$.
By the averaging principle (a direct consequence of the pigeonhole principle), there exists at least one estimated class $\kappa(k)\in[K_0]$ such that
\[
N_{k,\kappa(k)}\ge \frac{N_k}{K_0}.
\]

Assumption~\ref{ass:A2} gives $N_k\ge c_0 N/K$. Hence, we have
\[
N_{k,\kappa(k)}\ge \frac{c_0 N}{K K_0}.
\]

\vspace{2mm}
\noindent\textbf{Step 2.  Two true classes mapped to the same estimated class.}  
For each $k\in[K]$, Step 1 gives a non‑empty set $\mathcal{A}_k:=\{\kappa\in[K_0]: N_{k,\kappa}\ge N_k/K_0\}$.  
Choose an arbitrary element $\kappa(k)\in\mathcal{A}_k$ and this defines a function $\kappa:[K]\to[K_0]$.  
Because $K_0<K$, $\kappa$ cannot be injective. Hence, by pigeonhole principle, there exist distinct $k_1,k_2\in[K]$ with $\kappa(k_1)=\kappa(k_2)=:\kappa$.  
These two true classes are therefore merged, at least partially, into the same estimated class $\kappa$.

\vspace{2mm}
\noindent\textbf{Step 3.  Construction of the row index set $S$.}
Define
\[
S_1:=\bigl\{i\in\mathcal{C}_{k_1}:\hat Z(i,\kappa)=1\bigr\},\qquad
S_2:=\bigl\{i\in\mathcal{C}_{k_2}:\hat Z(i,\kappa)=1\bigr\},\qquad
S:=S_1\cup S_2.
\]

From Step 1 we have the deterministic lower bounds
\[
|S_1| = N_{k_1,\kappa} \ge \frac{c_0 N}{K K_0},\qquad 
|S_2| = N_{k_2,\kappa} \ge \frac{c_0 N}{K K_0}.
\]

Using the upper bound in Assumption~\ref{ass:A2} ($N_k\le N/(c_0 K)$) and the fact that $S_1\subseteq\mathcal{C}_{k_1}$, $S_2\subseteq\mathcal{C}_{k_2}$, we obtain
\[
|S_1|\le N_{k_1}\le\frac{N}{c_0 K},\qquad |S_2|\le N_{k_2}\le\frac{N}{c_0 K},
\]
and consequently
\[
|S| = |S_1|+|S_2| \le \frac{2N}{c_0 K}.
\]

Thus, we have $\frac{2c_0 N}{K K_0} \le |S| \le \frac{2N}{c_0 K}$.

\vspace{2mm}
\noindent\textbf{Step 4.  Construction of the column index set $T$.}
For the pair $(k_1,k_2)$, we invoke Assumption~\ref{ass:A3}.  Define
\[
T:=\bigl\{j\in[J]:|\Theta(j,k_1)-\Theta(j,k_2)|\ge \zeta /2\bigr\}.
\]

Assumption~\ref{ass:A3} asserts that $|T|\ge c_1 J$. Trivially, we have $|T|\le J$.

\vspace{2mm}
\noindent\textbf{Step 5.  The residual submatrix $\mathscr{M}:=(\mathcal{R}-\hat{\mathcal{R}})_{S,T}$.}
Because every individual in $S$ is assigned to the same estimated class $\kappa$, the fitted expectation is constant on $S\times T$:
\[
\hat{\mathcal{R}}(i,j)=\hat\Theta(j,\kappa)=:\hat\theta(j),\qquad \forall\,i\in S,\;j\in T.
\]

For the true expectation, we have
\[
\mathcal{R}(i,j)=\begin{cases}
\Theta(j,k_1), & i\in S_1,\\[2mm]
\Theta(j,k_2), & i\in S_2.
\end{cases}
\]

Hence, $\mathscr{M}$ admits the block representation
\[
\mathscr{M}=\begin{bmatrix}\mathscr{M}_1\\ \mathscr{M}_2\end{bmatrix},\qquad
\mathscr{M}_1(i,j)=\Theta(j,k_1)-\hat\theta(j)\;(i\in S_1),\; 
\mathscr{M}_2(i,j)=\Theta(j,k_2)-\hat\theta(j)\;(i\in S_2).
\]

\vspace{2mm}
\noindent\textbf{Step 6.  Construction of two unit vectors.}  
We now exhibit specific unit vectors $u\in\mathbb{R}^{|S|}$ and $v\in\mathbb{R}^{|T|}$ that yield a large value of $|u^\top \mathscr{M} v|$, thereby providing a lower bound for $\|\mathscr{M}\|$ via the variational characterization $\|\mathscr{M}\| = \max_{\|u\|=\|v\|=1} |u^\top \mathscr{M} v|$.

\emph{Column vector $v$.}  
For each $j\in T$, set  
\[
s_j := \operatorname{sign}\bigl(\Theta(j,k_1)-\Theta(j,k_2)\bigr) \in \{-1,1\},
\qquad 
v_j := \frac{s_j}{\sqrt{|T|}}.
\]  

Then, we have $\|v\|_2^2 = \sum_{j\in T} 1/|T| = 1$.  The signs convert the absolute differences $|\Theta(j,k_1)-\Theta(j,k_2)|$ into a sum of non‑negative terms.

\emph{Row vector $u$.}  
Recall that every row of $\mathscr{M}_1$ equals $\Theta(j,k_1)-\hat\theta(j)$ and every row of $\mathscr{M}_2$ equals $\Theta(j,k_2)-\hat\theta(j)$.  
To eliminate the unknown common part $\hat\theta(j)$, we assign positive weights to rows in $S_1$ and negative weights to rows in $S_2$, with equal total mass in absolute value.  

Define the balancing coefficients  
\[
\alpha := \frac{\sqrt{|S_2|}}{\sqrt{|S_1|+|S_2|}}, \qquad 
\beta  := \frac{\sqrt{|S_1|}}{\sqrt{|S_1|+|S_2|}},
\]  
and set  
\[
u_i := 
\begin{cases}
\alpha/\sqrt{|S_1|}, & i\in S_1,\\[4mm]
-\beta/\sqrt{|S_2|}, & i\in S_2.
\end{cases}
\]  

A direct computation gives  
\[
\|u\|_2^2 = \sum_{i\in S_1}\frac{\alpha^2}{|S_1|} + \sum_{i\in S_2}\frac{\beta^2}{|S_2|} = \alpha^2+\beta^2 = 1,
\]  
so $u$ is indeed a unit vector.  The opposing signs and the specific choice of $\alpha,\beta$ guarantee that the total weight on $S_1$,
\[
\alpha\sqrt{|S_1|} = \sqrt{\frac{|S_1||S_2|}{|S_1|+|S_2|}},
\]  
is exactly the opposite of the total weight on $S_2$,
\[
-\beta\sqrt{|S_2|} = -\sqrt{\frac{|S_1||S_2|}{|S_1|+|S_2|}},
\]  
which forces the cancellation of $\hat\theta(j)$ when we form $u^\top\mathscr{M} v$ in the next step.

\vspace{2mm}
\noindent\textbf{Step 7.  Exact evaluation of $u^\top \mathscr{M} v$.}
Expanding the bilinear form \(u^\top \mathscr{M} v\) gives
\begin{align*}
u^\top \mathscr{M} v
&= \sum_{i\in S_1}\sum_{j\in T} \frac{\alpha}{\sqrt{|S_1|}}\bigl(\Theta(j,k_1)-\hat\theta(j)\bigr)\frac{s_j}{\sqrt{|T|}}
   + \sum_{i\in S_2}\sum_{j\in T} \frac{-\beta}{\sqrt{|S_2|}}\bigl(\Theta(j,k_2)-\hat\theta(j)\bigr)\frac{s_j}{\sqrt{|T|}} \\
&= \alpha\sqrt{|S_1|}\,\frac{1}{\sqrt{|T|}}\sum_{j\in T}\bigl(\Theta(j,k_1)-\hat\theta(j)\bigr)s_j
   - \beta\sqrt{|S_2|}\,\frac{1}{\sqrt{|T|}}\sum_{j\in T}\bigl(\Theta(j,k_2)-\hat\theta(j)\bigr)s_j.
\end{align*}

Observe that
\[
\alpha\sqrt{|S_1|} = \frac{\sqrt{|S_2|}}{\sqrt{|S_1|+|S_2|}}\sqrt{|S_1|}
= \sqrt{\frac{|S_1||S_2|}{|S_1|+|S_2|}} =: \gamma,
\qquad
\beta\sqrt{|S_2|} = \frac{\sqrt{|S_1|}}{\sqrt{|S_1|+|S_2|}}\sqrt{|S_2|}
= \gamma.
\]

Therefore, we have
\[
u^\top \mathscr{M} v = \frac{\gamma}{\sqrt{|T|}}\sum_{j\in T}
\Bigl[\bigl(\Theta(j,k_1)-\hat\theta(j)\bigr)-\bigl(\Theta(j,k_2)-\hat\theta(j)\bigr)\Bigr]s_j
= \frac{\gamma}{\sqrt{|T|}}\sum_{j\in T}
\bigl(\Theta(j,k_1)-\Theta(j,k_2)\bigr)s_j.
\]

By the definition of $s_j$, we have $(\Theta(j,k_1)-\Theta(j,k_2))s_j = |\Theta(j,k_1)-\Theta(j,k_2)|\ge 0$, which gives
\[
u^\top\mathscr{M} v = \frac{\gamma}{\sqrt{|T|}}\sum_{j\in T}
|\Theta(j,k_1)-\Theta(j,k_2)|.
\]

\vspace{2mm}
\noindent\textbf{Step 8.  Lower bound using the separation condition.}
For every $j\in T$, Assumption~\ref{ass:A3} guarantees $|\Theta(j,k_1)-\Theta(j,k_2)|\ge \zeta /2$. Thus, we have
\[
u^\top\mathscr{M} v \;\ge\; \frac{\gamma}{\sqrt{|T|}}\cdot|T|\cdot\frac{\zeta}{2}
\;=\; \gamma\sqrt{|T|}\,\frac{\zeta}{2}.
\]

\vspace{2mm}
\noindent\textbf{Step 9.  Transfer to the spectral norm.}
Since $\|u\|_2=\|v\|_2=1$, the variational characterization of the spectral norm gives
\[
\|\mathscr{M}\| \;\ge\; |u^\top \mathscr{M}v| \;\ge\; \gamma\sqrt{|T|}\,\frac{\zeta}{2}.
\]

\vspace{2mm}
\noindent\textbf{Step 10.  Quantitative lower bound for $\gamma$.}
From the size estimates in Step 3, we have
\[
|S_1|,\;|S_2| \;\ge\; \frac{c_0 N}{K K_0},\qquad 
|S_1|+|S_2| \;\le\; \frac{2N}{c_0 K}.
\]

Consequently, we get
\[
\gamma \;=\; \sqrt{\frac{|S_1||S_2|}{|S_1|+|S_2|}}
\;\ge\; \sqrt{\frac{\bigl(c_0 N/(K K_0)\bigr)^2}{2N/(c_0 K)}}
\;=\; \sqrt{\frac{c_0^3 N}{2 K K_0^2}}.
\]

\vspace{2mm}
\noindent\textbf{Step 11.  Combining the lower bounds.}
By $|T|\ge c_1 J$ obtained from Step 4 and the estimate for $\gamma$, we get
\begin{align*}
\|\mathscr{M}\| &\;\ge\; \frac{\zeta}{2}\, \sqrt{\frac{c_0^3 N}{2 K K_0^2}}\; \sqrt{c_1 J}
      \;=\; \frac{\zeta}{2}\,\sqrt{\frac{c_0^3 c_1 N J}{2 K K_0^2}} \;=\; \frac{\sqrt{c_0^3 c_1}}{2\sqrt{2}}\;\zeta\; \frac{\sqrt{NJ}}{\sqrt{K}\,K_0}.
\end{align*}

Define the constant
\[
c \;:=\; \frac{\sqrt{c_0^3 c_1}}{2\sqrt{2}}.
\]

Then, we have established the deterministic inequality
\[
\|(\mathcal{R}-\hat{\mathcal{R}})_{S,T}\| \;\ge\; c\,\zeta\,\frac{\sqrt{NJ}}{\sqrt{K}\,K_0}.
\]

\vspace{2mm}
\noindent\textbf{Step 12.  Probability statement.}
The construction of $S,T$ and the subsequent algebraic estimates depend only on the fixed $\hat Z$ and the model parameters.  They hold for \emph{every} possible realization of $\hat Z$ whenever $K_0<K$.  Hence, regardless of the distribution of $\hat Z$,
\[
\mathbb{P}\left( \|(\mathcal{R}-\hat{\mathcal{R}})_{S,T}\| \ge c\,\zeta\,\frac{\sqrt{NJ}}{\sqrt{K}\,K_0} \right) \;=\; 1.
\]

In particular, the conclusion holds with probability tending to $1$ (indeed with probability $1$) for any classifier $\mathcal{M}$.  
\end{proof}
\subsection{Proof of Lemma \ref{lem:random_control}}
\begin{proof}
We prove this lemma by six steps.

    \textbf{Step 1.  Basic estimates.}  
    By the construction of $X$, we have \(\mathbb{E}[X(i,j)] = 0\) for all \(i\in[N], j\in[J]\), and the entries \(\{X(i,j)\}\) are mutually independent. Recall that \(R(i,j)\in\{0,1,\dots,M\}\) and \(\mathcal{R}(i,j)\in[0,M]\), we have \(|X(i,j)|\le M\). Recall that
    \[
        \operatorname{Var}(R(i,j)) = \mathcal{R}(i,j)\Bigl(1-\frac{\mathcal{R}(i,j)}{M}\Bigr)
        \le \max_{x\in[0,M]} x\Bigl(1-\frac{x}{M}\Bigr)=\frac{M}{4},
    \]
    and because \(\mathbb{E}[X(i,j)]=0\), we have \(\mathbb{E}[X(i,j)^2]=\operatorname{Var}(R(i,j))\le M/4\).

    \textbf{Step 2.  Exact quantities required in Lemma~\ref{BanderiaCor}.}  
    Define
    \[
        \sigma_1^* :=\max_{i\in[N]}\sqrt{\sum_{j=1}^{J}\mathbb{E}[X(i,j)^2]},\qquad
        \sigma_2^* :=\max_{j\in[J]}\sqrt{\sum_{i=1}^{N}\mathbb{E}[X(i,j)^2]},\qquad
        \sigma_*^*   :=\max_{i,j}\|X(i,j)\|_\infty .
    \]
    
    From the bounds obtained in Step~1, we obtain deterministic upper bounds for these quantities:
    \[
        \sigma_1^* \le \sqrt{J\cdot\frac{M}{4}} = \frac{\sqrt{MJ}}{2},\qquad
        \sigma_2^* \le \sqrt{N\cdot\frac{M}{4}} = \frac{\sqrt{MN}}{2},\qquad
        \sigma_*^* \le M .
    \]
    
    Without confusion, we introduce the symbols
    \[
        \tilde{\sigma}_1 := \frac{\sqrt{MJ}}{2},\qquad \tilde{\sigma}_2 := \frac{\sqrt{MN}}{2},\qquad \tilde{\sigma}_* := M,
    \]
    which are \textbf{upper bounds} for the true quantities: \(\sigma_1^*\le\tilde{\sigma}_1,\; \sigma_2^*\le\tilde{\sigma}_2,\; \sigma_*^*\le\tilde{\sigma}_*\).

    \textbf{Step 3.  Applying Lemma~\ref{BanderiaCor} with an inflated threshold.}  
    Fix \(\eta = \frac12\),  where this value satisfies the condition \(0<\eta\le\frac12\) of Lemma~\ref{BanderiaCor}.  
    Let \(C_\eta\) be the constant whose existence is guaranteed by the lemma (it depends only on \(\eta\)).  
    For any \(t\ge0\), applying Lemma~\ref{BanderiaCor} to the matrix \(X\) yields
    \[
        \mathbb{P}\Bigl(\|X\|\ge (1+\eta)(\sigma_1^*+\sigma_2^*) + t\Bigr)
        \le (N+J)\exp\Bigl(-\frac{t^2}{C_\eta(\sigma_*^*)^2}\Bigr). 
    \]

    Because \(\sigma_1^*\le\tilde{\sigma}_1\) and \(\sigma_2^*\le\tilde{\sigma}_2\), we have
    \[
        (1+\eta)(\tilde{\sigma}_1+\tilde{\sigma}_2)+t \;\ge\; (1+\eta)(\sigma_1^*+\sigma_2^*)+t .
    \]
    
    Hence, the event \(\{\|X\|\ge (1+\eta)(\tilde{\sigma}_1+\tilde{\sigma}_2)+t\}\) is contained in the event \(\{\|X\|\ge (1+\eta)(\sigma_1^*+\sigma_2^*)+t\}\). Thus, we have
    \[
        \mathbb{P}\Bigl(\|X\|\ge (1+\eta)(\tilde{\sigma}_1+\tilde{\sigma}_2)+t\Bigr)
        \le \mathbb{P}\Bigl(\|X\|\ge (1+\eta)(\sigma_1^*+\sigma_2^*)+t\Bigr). 
    \]

    Moreover, \(\sigma_*^*\le\tilde{\sigma}_*\) implies \((\sigma_*^*)^2\le\tilde{\sigma}_*^2\) and thus
    \[
        \exp\Bigl(-\frac{t^2}{C_\eta(\sigma_*^*)^2}\Bigr) \le \exp\Bigl(-\frac{t^2}{C_\eta\tilde{\sigma}_*^2}\Bigr). 
    \]

    Combining the above inequalities, we obtain a convenient upper bound expressed entirely in terms of the deterministic constants \(\tilde{\sigma}_1,\tilde{\sigma}_2,\tilde{\sigma}_*\):
    \[
        \mathbb{P}\Bigl(\|X\|\ge (1+\eta)(\tilde{\sigma}_1+\tilde{\sigma}_2)+t\Bigr)
        \le (N+J)\exp\Bigl(-\frac{t^2}{C_\eta\tilde{\sigma}_*^2}\Bigr). 
    \]

    \textbf{Step 4.  Choice of \(t\) and an explicit tail estimate.}  
    Taking \(t = \tilde{\sigma}_1+\tilde{\sigma}_2\) gives
    \begin{align}\label{eqstep4lemma5}
        \mathbb{P}\Bigl(\|X\|\ge (2+\eta)(\tilde{\sigma}_1+\tilde{\sigma}_2)\Bigr)
        \le (N+J)\exp\Bigl(-\frac{(\tilde{\sigma}_1+\tilde{\sigma}_2)^2}{C_\eta\tilde{\sigma}_*^2}\Bigr). 
    \end{align}

    Since \(\tilde{\sigma}_1+\tilde{\sigma}_2 = \frac{\sqrt{M}}{2}(\sqrt{N}+\sqrt{J})\) and \(\tilde{\sigma}_* = M\), we have
    \[
        (\tilde{\sigma}_1+\tilde{\sigma}_2)^2 = \frac{M}{4}\bigl(\sqrt{N}+\sqrt{J}\bigr)^2 \ge \frac{M}{4}(N+J).
    \]

Consequently, we get
    \[
        \frac{(\tilde{\sigma}_1+\tilde{\sigma}_2)^2}{C_\eta\tilde{\sigma}_*^2} \;\ge\; \frac{M}{4}\cdot\frac{N+J}{C_\eta M^2}
        \;=\; \frac{N+J}{4C_\eta M}.
    \]

    Define the constant \(C_{\text{opt}} := (2+\eta)\frac{\sqrt{M}}{2}.\) For our choice \(\eta=\frac12\), \(C_{\text{opt}} = \frac{2.5\sqrt{M}}{2}=1.25\sqrt{M}\).  
    Using Equation (\ref{eqstep4lemma5}) together with the monotonicity of the exponential function, we obtain
    \[
        \mathbb{P}\Bigl(\|X\|\ge C_{\text{opt}}\bigl(\sqrt{N}+\sqrt{J}\bigr)\Bigr)
        \le (N+J)\exp\Bigl(-\frac{N+J}{4C_\eta M}\Bigr).
    \]

    \textbf{Step 5.  Asymptotic \(O_P\) bound.}  
   \((N+J)\exp\Bigl(-\frac{N+J}{4C_\eta M}\Bigr)\) tends to \(0\) as \(N\to\infty\) (it decays exponentially in \(N+J\)).  
    Hence, for every \(\varepsilon>0\) there exists an integer \(N_0\) (depending on \(\varepsilon\) and on the fixed constants \(M,\eta,C_\eta\)) such that for all \(N\ge N_0\) and all \(J\) (which may depend on \(N\) or be fixed), we have
    \[
        \mathbb{P}\Bigl(\|X\|\ge C_{\text{opt}}\bigl(\sqrt{N}+\sqrt{J}\bigr)\Bigr) < \varepsilon.
    \]
    
    By the definition of the \(O_P(\cdot)\) notation, this is exactly
    \[
        \|X\| = O_P\bigl(\sqrt{N}+\sqrt{J}\bigr),
    \]
    which establishes Equation \eqref{eq:X_bound_opt}.

    \textbf{Step 6.  Extension to arbitrary submatrices.}  
    Let \(S\subseteq[N]\) and \(T\subseteq[J]\) be any subsets, and set \(W := (R-\mathcal{R})_{S,T}\).  
    For any matrix, the spectral norm of a submatrix never exceeds that of the full matrix, i.e., \(\|W\|\le\|X\|\).  
    Combining this with Equation \eqref{eq:X_bound_opt} yields
    \[
        \|W\| = O_P\bigl(\sqrt{N}+\sqrt{J}\bigr),
    \]
    which complets the proof. 
\end{proof}

\subsection{Proof of Lemma \ref{lem:ratio_lower}}
\begin{proof}
From the proof of Theorem~\ref{thm:alt} (in particular the analysis leading to the lower bound for \(\sigma_1(\tilde{R})\)), there exists an event \(\mathcal{E}_N^{(K_0)}\) with \(\mathbb{P}(\mathcal{E}_N^{(K_0)})\to 1\) such that on \(\mathcal{E}_N^{(K_0)}\), \(\sigma_1(\tilde{R}) \ge C_{\mathrm{signal}}\frac{\sqrt{J}}{\sqrt{K}K_0} - C_{\mathrm{noise}}\). Hence, on the same event, we have
\begin{equation}\label{eq:T_lower_initial}
T_{K_0} = \sigma_1(\tilde{R}) - \Bigl(1+\sqrt{\frac{J}{N}}\Bigr) \ge C_{\mathrm{signal}}\frac{\sqrt{J}}{\sqrt{K}K_0} - C_{\mathrm{noise}} - 1 - \sqrt{\frac{J}{N}}.
\end{equation}

Define \(D_{N,K_0} := \frac{\sqrt{J}}{\sqrt{K}K_0}.\) Then, Assumption \ref{ass:A6} is exactly \(C_{\mathrm{signal}} D_{N,K_0} \ge C_{\mathrm{noise}} + 1 + 3\eta_0\). Consequently, we get
\begin{equation}\label{eq:D_lower_bound}
D_{N,K_0} \ge d_0 := \frac{C_{\mathrm{noise}}+1+3\eta_0}{C_{\mathrm{signal}}}.
\end{equation}

Now we use the additional condition \(K^3 = o(N)\).  Since \(K_0 \le K-1 < K\), we have
\[
\frac{\sqrt{K}K_0}{\sqrt{N}} \le \frac{K^{3/2}}{\sqrt{N}} = \left(\frac{K^3}{N}\right)^{1/2} \longrightarrow 0.
\]

Thus, there exists an integer \(N_1\) (depending only on the constants \(\eta_0\) and \(d_0\)) such that for all \(N \ge N_1\),
\begin{equation*}
\frac{K^{3/2}}{\sqrt{N}} \le \frac{\eta_0}{d_0}.
\end{equation*}

Consequently, for every \(K_0 < K\) and every \(N \ge N_1\), we have
\begin{equation}\label{eq:K3_bound}
\frac{\sqrt{K}K_0}{\sqrt{N}} \le \frac{\eta_0}{d_0}.
\end{equation}

Observe that \(\sqrt{J/N}\) can be expressed as
\(\sqrt{\frac{J}{N}} = \frac{\sqrt{K}K_0}{\sqrt{N}}\, D_{N,K_0}.\) Combining this with Equation \eqref{eq:K3_bound} yields, for all \(N \ge N_1\) and on \(\mathcal{E}_N^{(K_0)}\),
\begin{equation}\label{eq:sqrtJN_ratio}
\sqrt{\frac{J}{N}} \le \frac{\eta_0}{d_0}\, D_{N,K_0}.
\end{equation}

Substituting Equation \eqref{eq:sqrtJN_ratio} into Equation \eqref{eq:T_lower_initial} gives
\begin{equation}\label{eq:T_lower_after_sub}
T_{K_0} \ge C_{\mathrm{signal}} D_{N,K_0} - C_{\mathrm{noise}} - 1 - \frac{\eta_0}{d_0} D_{N,K_0}
= \left(C_{\mathrm{signal}} - \frac{\eta_0}{d_0}\right) D_{N,K_0} - (C_{\mathrm{noise}}+1).
\end{equation}

We now show that the right‑hand side of Equation \eqref{eq:T_lower_after_sub} is at least \(c_{\mathrm{low}} D_{N,K_0}\).  This is equivalent to
\[
\left(C_{\mathrm{signal}} - \frac{\eta_0}{d_0} - c_{\mathrm{low}}\right) D_{N,K_0} \ge C_{\mathrm{noise}}+1.
\]

Because \(D_{N,K_0} \ge d_0\) by Equation \eqref{eq:D_lower_bound}, it suffices to prove
\begin{equation}\label{eq:sufficient}
\left(C_{\mathrm{signal}} - \frac{\eta_0}{d_0} - c_{\mathrm{low}}\right) d_0 \ge C_{\mathrm{noise}}+1.
\end{equation}

Computing the left‑hand side of Equation \eqref{eq:sufficient} gives
\[
\begin{aligned}
\left(C_{\mathrm{signal}} - \frac{\eta_0}{d_0} - c_{\mathrm{low}}\right) d_0
&= C_{\mathrm{signal}} d_0 - \eta_0 - c_{\mathrm{low}} d_0= (C_{\mathrm{signal}} - c_{\mathrm{low}}) d_0 - \eta_0.
\end{aligned}
\]

Recall the definition of \(c_{\mathrm{low}}\):\(c_{\mathrm{low}} = \frac{2\eta_0 C_{\mathrm{signal}}}{C_{\mathrm{noise}}+1+3\eta_0}.\) Using the expression for \(d_0\) and \(c_{\mathrm{low}}\) obtains
\[
(C_{\mathrm{signal}} - c_{\mathrm{low}}) d_0 = \left(C_{\mathrm{signal}} - \frac{2\eta_0 C_{\mathrm{signal}}}{C_{\mathrm{noise}}+1+3\eta_0}\right) \frac{C_{\mathrm{noise}}+1+3\eta_0}{C_{\mathrm{signal}}}
= C_{\mathrm{noise}}+1+3\eta_0 - 2\eta_0 = C_{\mathrm{noise}}+1+\eta_0.
\]

Therefore, we have
\[
(C_{\mathrm{signal}} - c_{\mathrm{low}}) d_0 - \eta_0 = (C_{\mathrm{noise}}+1+\eta_0) - \eta_0 = C_{\mathrm{noise}}+1,
\]
which means that Equation \eqref{eq:sufficient} holds with equality.  Consequently, for all \(N \ge N_1\) and on \(\mathcal{E}_N^{(K_0)}\), we have
\[
T_{K_0} \ge c_{\mathrm{low}} D_{N,K_0}.
\]

Finally, because \(\mathbb{P}(\mathcal{E}_N^{(K_0)}) \to 1\), we have
\[
\mathbb{P}\Bigl( T_{K_0} \ge c_{\mathrm{low}} \frac{\sqrt{J}}{\sqrt{K}K_0} \Bigr) \ge \mathbb{P}(\mathcal{E}_N^{(K_0)}) \longrightarrow 1,
\]
which establishes Equation \eqref{eq:lower_bound_ratio}.
\end{proof}
\subsection{Proof of Lemma \ref{lem:ratio_upper}}
\begin{proof}
Fix an arbitrary candidate number \(K_0\ge 1\).  Let \(\hat{Z}\in\{0,1\}^{N\times K_0}\) be the estimated classification matrix returned by the classifier \(\mathcal{M}\) in Algorithm~\ref{alg:gof_lcm}.  For each estimated class \(k\in[K_0]\), define its index set \(\hat{C}_k=\{i\in[N]:\hat{Z}(i,k)=1\}\) and its size \(n_k=|\hat{C}_k|\).  By construction of the classifier, we may assume \(n_k>0\) for all \(k\). Otherwise the estimation of \(\hat{\Theta}\) would be undefined.  The fitted expected response matrix is \(\hat{\mathcal{R}}=\hat{Z}\hat{\Theta}^\top\), where \(\hat{\Theta}\in[0,M]^{J\times K_0}\).  Consequently, for any item \(j\in[J]\) and any estimated class \(k\), the fitted value \(\hat{\mathcal{R}}(i,j)\) is constant over all individuals in \(\hat{C}_k\). Denote this common value by \(\hat{\Theta}_{jk}\in[0,M]\).

For each pair \((j,k)\), the quantity
\(\hat{\mathcal{V}}_{jk}= \hat{\Theta}_{jk}\Bigl(1-\frac{\hat{\Theta}_{jk}}{M}\Bigr)\)
is always non‑negative.  If \(\hat{\mathcal{V}}_{jk}>0\), the corresponding entries of the normalized residual matrix \(\tilde{R}\) (see Equation \eqref{eq:norm_res}) are well‑defined and, for \(i\in\hat{C}_k\),
\[
\tilde{R}(i,j)=\frac{R(i,j)-\hat{\Theta}_{jk}}{\sqrt{N\hat{\mathcal{V}}_{jk}}}.
\]

If \(\hat{\mathcal{V}}_{jk}=0\), then necessarily \(\hat{\Theta}_{jk}\in\{0,M\}\) and every \(R(i,j)\) for \(i\in\hat{C}_k\) must equal \(\hat{\Theta}_{jk}\) (otherwise the average could not be exactly \(0\) or \(M\)).  In this case the numerator \(\sum_{i\in\hat{C}_k}(R(i,j)-\hat{\Theta}_{jk})^2\) is zero, and the definition of Equation \eqref{eq:norm_res} sets \(\tilde{R}(i,j)=0\) for all \(i\in\hat{C}_k\).  Hence, regardless of the value of \(\hat{\mathcal{V}}_{jk}\), the contribution of the block \((\hat{C}_k,j)\) to the squared Frobenius norm of \(\tilde{R}\) can be bounded uniformly.

We now derive an upper bound for \(\|\tilde{R}\|_F^2\).  For a fixed block \((j,k)\) and assuming first that \(\hat{\mathcal{V}}_{jk}>0\), we have
\[
\sum_{i\in\hat{C}_k} \tilde{R}(i,j)^2
= \frac{1}{N\hat{\mathcal{V}}_{jk}}\sum_{i\in\hat{C}_k}\bigl(R(i,j)-\hat{\Theta}_{jk}\bigr)^2.
\]

Because each \(R(i,j)\) takes values in \(\{0,1,\dots,M\}\), we have the elementary inequality \(R(i,j)^2\le M\,R(i,j)\).  Summing over \(i\in\hat{C}_k\) yields
\[
\sum_{i\in\hat{C}_k} R(i,j)^2 \le M\sum_{i\in\hat{C}_k} R(i,j)=M n_k\hat{\Theta}_{jk}.
\]

Hence, we get
\[
\sum_{i\in\hat{C}_k}\bigl(R(i,j)-\hat{\Theta}_{jk}\bigr)^2
= \sum_{i\in\hat{C}_k} R(i,j)^2 - n_k\hat{\Theta}_{jk}^2
\le M n_k\hat{\Theta}_{jk} - n_k\hat{\Theta}_{jk}^2
= n_k\hat{\Theta}_{jk}(M-\hat{\Theta}_{jk}).
\]

Now observe that \(\hat{\Theta}_{jk}(M-\hat{\Theta}_{jk}) = M\hat{\mathcal{V}}_{jk}\).  Therefore, we get
\[
\sum_{i\in\hat{C}_k}\bigl(R(i,j)-\hat{\Theta}_{jk}\bigr)^2 \le n_k M \hat{\mathcal{V}}_{jk}.
\]

Substituting this into the expression for the block contribution gives
\[
\sum_{i\in\hat{C}_k} \tilde{R}(i,j)^2
\le \frac{1}{N\hat{\mathcal{V}}_{jk}}\cdot n_k M \hat{\mathcal{V}}_{jk}
= \frac{n_k M}{N}.
\]

If \(\hat{\mathcal{V}}_{jk}=0\), then by construction \(\tilde{R}(i,j)=0\) for all \(i\in\hat{C}_k\), so the left‑hand side is zero, and the inequality \(\sum_{i\in\hat{C}_k} \tilde{R}(i,j)^2 \le \frac{n_k M}{N}\) remains valid because the right‑hand side is non‑negative.  Thus, for every block \((j,k)\), we have
\[
\sum_{i\in\hat{C}_k} \tilde{R}(i,j)^2 \le \frac{n_k M}{N}.
\]

Now summing over all items \(j=1,\dots,J\) and all estimated classes \(k=1,\dots,K_0\) gives
\[
\|\tilde{R}\|_F^2 = \sum_{k=1}^{K_0}\sum_{j=1}^{J} \sum_{i\in\hat{C}_k} \tilde{R}(i,j)^2
\le \sum_{k=1}^{K_0}\sum_{j=1}^{J} \frac{n_k M}{N}
= \frac{M}{N}\Bigl(\sum_{k=1}^{K_0} n_k\Bigr) J.
\]

Since the estimated classes partition the set of individuals, we have \(\sum_{k=1}^{K_0} n_k = N\).  Consequently, we have
\[
\|\tilde{R}\|_F^2 \le \frac{M}{N}\cdot N\cdot J = MJ,
\]
and therefore \(\|\tilde{R}\|_F \le \sqrt{MJ}\).

For any matrix, the spectral norm does not exceed the Frobenius norm, i.e., \(\sigma_1(\tilde{R})\le \|\tilde{R}\|_F\).  Hence, we have
\[
\sigma_1(\tilde{R}) \le \sqrt{MJ}.
\]

Recall from Equation \eqref{eq:test_stat} that the test statistic is defined as \(T_{K_0}= \sigma_1(\tilde{R}) - \bigl(1+\sqrt{J/N}\bigr)\).  Dropping the negative term yields the desired bound
\[
T_{K_0} \le \sqrt{MJ}.
\]

This inequality holds for every possible realization of the data and of the estimation procedure, regardless of whether the candidate model is correctly specified or not.  In particular, it holds with probability one for all sample sizes \(N\).
\end{proof}
\subsection{Proof of Lemma \ref{lem:lower_bound_j_on}}
\begin{proof}
Recall from Equation \eqref{eq:Rstar} that
\(R^*(i,j)=\frac{R(i,j)-\mathcal{R}(i,j)}{\sqrt{N\mathcal{V}(i,j)}}.\)
Assumption~\ref{ass:A1} guarantees $\mathcal{V}(i,j)\ge M\delta(1-\delta)>0$, hence each entry is well defined.  Let $Y=\sqrt{N}R^*$. Thus, $Y$'s $(i,j)$-th entry is $Y_{ij} = \sqrt{N}R^*(i,j)$.  Then the entries $\{Y_{ij}\}$ are independent (because the $R(i,j)$ are independent given the latent structure, and the latent structure is fixed), and satisfy
\[
\mathbb{E}[Y_{ij}]=0,\qquad \operatorname{Var}(Y_{ij})=1,\qquad |Y_{ij}|\le C,
\]
where
\[
C:=\sqrt{\frac{M}{\delta(1-\delta)}}<\infty.
\]

We first establish an almost sure lower bound.  Consider the first column $Y(\cdot,1)=(Y_{11},\dots,Y_{N1})^\top$.  The random variables $\{Y_{i1}^2\}_{i=1}^{N}$ are independent, bounded by $C^2$, and satisfy $\mathbb{E}[Y_{i1}^2]=1$ (i.e., $Y_{i1}$ is standardized). For any $\varepsilon>0$, Hoeffding's inequality yields
\begin{align}\label{eq:hoeffding}
\mathbb{P}\Bigl(\bigl|\frac1N\sum_{i=1}^{N}Y_{i1}^2-1\bigr|>\varepsilon\Bigr)\le 2\exp\Bigl(-\frac{2N\varepsilon^2}{C^4}\Bigr). 
\end{align}

The right‑hand side of Equation \eqref{eq:hoeffding} is summable over $N$ because $\sum_{N=1}^{\infty}e^{-cN}<\infty$ for any $c>0$.  By the Borel–Cantelli lemma, for every fixed $\varepsilon>0$, we have
\[
\mathbb{P}\Bigl(\bigl|\frac1N\sum_{i=1}^{N}Y_{i1}^2-1\bigr|>\varepsilon \text{ infinitely often}\Bigr)=0.
\]

Taking a countable sequence $\varepsilon_m=1/m$ ($m=1,2,\dots$) and intersecting the corresponding almost sure events, we obtain almost surely
\[
\lim_{N\to\infty}\frac1N\sum_{i=1}^{N}Y_{i1}^2 = 1.
\]

Hence, we get
\[
\frac{\|Y(\cdot,1)\|_F}{\sqrt{N}}=\Bigl(\frac1N\sum_{i=1}^{N}Y_{i1}^2\Bigr)^{1/2}\xrightarrow{a.s.}1.
\]

Since $\|Y\|\ge\|Y(\cdot,1)\|_F$, we have
\begin{align}\label{eq:liminf}
\liminf_{N\to\infty}\frac{\|Y\|}{\sqrt{N}}\ge 1\quad\text{almost surely.} 
\end{align}

Next we derive an almost sure upper bound. For $Y$, we have
\[
\tilde{\tilde{\sigma}}_1:=\max_{i\in[N]}\sqrt{\sum_{j=1}^{J}\mathbb{E}[Y_{ij}]^2}=\sqrt{J},\qquad
\tilde{\tilde{\sigma}}_2:=\max_{j\in[J]}\sqrt{\sum_{i=1}^{N}\mathbb{E}[Y_{ij}]^2}=\sqrt{N},\qquad
\tilde{\tilde{\sigma}}_*:=\max_{i,j}\|Y_{ij}\|_\infty\le C.
\]

Fix any $\eta\in(0,1/2]$.  Lemma~\ref{BanderiaCor} guarantees the existence of a constant $C_\eta>0$ such that for every $t\ge0$,
\begin{align}\label{eq:tail_bound}
\mathbb{P}\Bigl(\|Y\|\ge (1+\eta)(\tilde{\tilde{\sigma}}_1+\tilde{\tilde{\sigma}}_2)+t\Bigr)\le (N+J)\exp\Bigl(-\frac{t^2}{C_\eta\tilde{\tilde{\sigma}}_*^2}\Bigr). 
\end{align}

Now fix an arbitrary $\tilde{\delta}>0$.  Choose $\eta=\tilde{\delta}/2$  for $0<\tilde{\delta}\le1$.  Because $J=o(N)$, there exists $N_0$ such that for all $N\ge N_0$,
\[
\sqrt{J} \le \frac{\tilde{\delta}}{4(1+\tilde{\delta}/2)}\sqrt{N}.
\]

Consequently, we get
\[
(1+\eta)\sqrt{J}=(1+\tilde{\delta}/2)\sqrt{J} \le \frac{\tilde{\delta}}{4}\sqrt{N}.
\]

Take $t = \frac{\tilde{\delta}}{4}\sqrt{N}$ in Equation \eqref{eq:tail_bound}.  Then for all $N\ge N_0$, we have
\[
(1+\eta)(\sqrt{N}+\sqrt{J})+t \le (1+\tilde{\delta}/2)\sqrt{N} + \frac{\tilde{\delta}}{4}\sqrt{N} + \frac{\tilde{\delta}}{4}\sqrt{N} = (1+\tilde{\delta})\sqrt{N}.
\]

Hence, the event $\{\|Y\|\ge (1+\tilde{\delta})\sqrt{N}\}$ is contained in the event $\{\|Y\|\ge (1+\eta)(\sqrt{N}+\sqrt{J})+t\}$, and by Equation \eqref{eq:tail_bound}, we have
\begin{align}\label{eq:prob_ub}
\mathbb{P}\Bigl(\|Y\|\ge (1+\tilde{\delta})\sqrt{N}\Bigr) \le (N+J)\exp\Bigl(-\frac{t^2}{C_\eta C^2}\Bigr) = (N+J)\exp\Bigl(-\frac{\tilde{\delta}^2 N}{16 C_\eta C^2}\Bigr). 
\end{align}

The right‑hand side of Equation \eqref{eq:prob_ub} is summable over $N$ because it decays exponentially in $N$.  By the Borel–Cantelli lemma, the event $\{\|Y\|\ge (1+\tilde{\delta})\sqrt{N}\}$ occurs only finitely many times almost surely.  Therefore, we get
\[
\limsup_{N\to\infty}\frac{\|Y\|}{\sqrt{N}}\le 1+\tilde{\delta}\quad\text{almost surely.}
\]

Since $\tilde{\delta}>0$ was arbitrary, we can take a countable sequence $\tilde{\delta}_m\downarrow0$ (e.g., $\tilde{\delta}_m=1/m$) and intersect the corresponding almost sure events to obtain
\begin{align}\label{eq:limsup}
\limsup_{N\to\infty}\frac{\|Y\|}{\sqrt{N}}\le 1\quad\text{almost surely.} 
\end{align}

Combining Equations \eqref{eq:liminf} and \eqref{eq:limsup} yields
\[
\frac{\|Y\|}{\sqrt{N}}\xrightarrow{a.s.}1.
\]

Because $R^*=Y/\sqrt{N}$, we have $\|R^*\|=\|Y\|/\sqrt{N}\xrightarrow{a.s.}1$.  Almost sure convergence implies convergence in probability, so for any $\varepsilon>0$,
\[
\lim_{N\to\infty}\mathbb{P}\bigl(\|R^*\|\ge 1-\varepsilon\bigr)=1,
\]
and in particular $\sigma_1(R^*)=\|R^*\|=1+o_P(1)$.  This completes the proof.
\end{proof}
\section{SC-LCM algorithm and its consistency}\label{app:sc_lcm}
Here, we introduce SC-LCM, a simple spectral clustering algorithm for estimating the latent class membership matrix \(Z\) under the latent class model for ordinal categorical data with polytomous responses. The algorithm takes the top \(K\) left singular vectors of \(R\) and applies \(k\)-means clustering to its rows. Under mild conditions, we prove that the procedure consistently recovers the true latent classes even as \(N\), \(J\), and \(K\) all diverge (i.e., the large‑scale setting). Our analysis begins with an oracle case where the population parameters are known. Let $\mathcal{R} = \mathbb{E}[R] = Z\Theta^\top$ be the $N\times J$ expected response matrix. To simplify our theoretical analysis, we let the rank of $\Theta$ be $K$. Thus, \(\mathcal{R}\) has a low-rank structure with rank \(K\). Since $\mathcal{R}$ has rank $K$, consider its compact singular value decomposition
\begin{equation*}
\mathcal{R} = U\Sigma V^\top,
\end{equation*}
where $U\in\mathbb{R}^{N\times K}$ satisfies $U^\top U = I_K$, $V\in\mathbb{R}^{J\times K}$ satisfies $V^\top V = I_K$, and $\Sigma = \operatorname{diag}(\sigma_1,\ldots,\sigma_K)$ with $\sigma_1\ge\cdots\ge\sigma_K>0$. The following lemma shows that $U$ has exactly $K$ distinct rows and that these rows are perfectly aligned with the latent class memberships.
\begin{lem}\label{lem:U_property}
Under the latent class model, the left singular vectors $U$ of $\mathcal{R}$ satisfy that all rows belonging to the same latent class are identical, and for any two distinct classes $k\neq l$,
\[\|U(i,:)-U(j,:)\|_F = \sqrt{\frac{1}{N_k} + \frac{1}{N_l}}, \qquad i\in\mathcal{C}_k,\ j\in\mathcal{C}_l.
\]
\end{lem}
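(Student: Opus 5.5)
The plan is the standard argument used for stochastic block models. First, factor $\mathcal{R}$ through the normalized membership matrix. Let $D:=Z^\top Z=\operatorname{diag}(N_1,\dots,N_K)$ and write
\[
\mathcal{R}=Z\Theta^\top=(ZD^{-1/2})(D^{1/2}\Theta^\top).
\]
The matrix $ZD^{-1/2}$ has orthonormal columns, since $(ZD^{-1/2})^\top ZD^{-1/2}=D^{-1/2}DD^{-1/2}=I_K$. Next, take a compact SVD of the $K\times J$ matrix $D^{1/2}\Theta^\top=Q\Sigma V^\top$. Because $\Theta$ has rank $K$ and $D^{1/2}$ is invertible, this matrix has rank $K$, so $Q\in\mathbb{R}^{K\times K}$ is orthogonal and $\Sigma$ is a positive $K\times K$ diagonal matrix. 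This gives
\[
\mathcal{R}=(ZD^{-1/2}Q)\,\Sigma\, V^\top .
\]
Here $ZD^{-1/2}Q$ has orthonormal columns, so this is a compact SVD of $\mathcal{R}$.

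The step I expect to be the main technical point is linking this SVD to the $U$ fixed in the statement. Singular vectors are unique only up to an orthogonal rotation within eigenspaces of repeated singular values. Both $U$ and $ZD^{-1/2}Q$ have orthonormal columns spanning the column space of $\mathcal{R}$. Hence $U=ZD^{-1/2}QO$ for some $K\times K$ orthogonal matrix $O$. To verify this, set $O:=(ZD^{-1/2}Q)^\top U$. Since the two column spaces coincide, we get $U=(ZD^{-1/2}Q)O$, and orthogonality of $O$ follows from $U^\top U=I_K$. Writing $B:=D^{-1/2}QO$, we have $U=ZB$, where $B$ is $K\times K$ and its rows are orthogonal with $\|B(k,:)\|_F^2=1/N_k$. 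Indeed,
\[
BB^\top=D^{-1/2}QOO^\top Q^\top D^{-1/2}=D^{-1}.
\]

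The conclusion then follows row by row. For $i\in\mathcal{C}_k$ we have $U(i,:)=Z(i,:)B=B(k,:)$, so all rows in the same class are identical. For $i\in\mathcal{C}_k$ and $j\in\mathcal{C}_l$ with $k\ne l$, expand the squared distance:
\[
\|U(i,:)-U(j,:)\|_F^2=\|B(k,:)\|^2+\|B(l,:)\|^2-2B(k,:)B(l,:)^\top=\frac1{N_k}+\frac1{N_l},
\]
because the off-diagonal entries of $BB^\top=D^{-1}$ vanish. Taking square roots gives the stated formula. Every step is deterministic linear algebra; the only care needed is the rank-$K$ hypothesis on $\Theta$, which guarantees that $Q$ is square and orthogonal.
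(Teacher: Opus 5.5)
Your proof is correct and follows essentially the paper's route. Both arguments write $U=ZB$ with $B$ a $K\times K$ matrix and use $U^\top U=I_K$ and $Z^\top Z=D$ to get $BB^\top=D^{-1}$; this is the content of Lemma 2.1 of Lei and Rinaldo (2015), which the paper cites rather than proving by hand. The paper obtains the factorization in one line, $U=\mathcal{R}V\Sigma^{-1}=Z(\Theta^\top V\Sigma^{-1})$, valid for any compact SVD, so your auxiliary SVD of $D^{1/2}\Theta^\top$ and the column-space rotation argument are unnecessary (though harmless).
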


By Lemma \ref{lem:U_property}, applying $k$-means with $K$ clusters to the rows of $U$ recovers the classification matrix $Z$ exactly up to a permutation of labels. In practice we only observe $R$, not $\mathcal{R}$. Let
\begin{equation*}
R = \hat{U}\hat{\Sigma}\hat{V}^\top + \hat{U}_\perp \hat{\Sigma}_\perp \hat{V}_\perp^\top
\end{equation*}
be the full SVD of $R$, where $\hat{U}\in\mathbb{R}^{N\times K}$ contains the left singular vectors corresponding to the $K$ largest singular values. The practical SC-LCM algorithm is summarized in Algorithm \ref{alg:sc_lcm}.
\begin{algorithm}[H]
\caption{Spectral Clustering for Latent Class Models (SC-LCM)}\label{alg:sc_lcm}
\begin{algorithmic}[1]
\Require{Observed response matrix $R\in\{0,1,\dots,M\}^{N\times J}$, number of latent classes $K$}
\Ensure{Estimated classification matrix $\hat{Z}$}
\State Compute the top $K$ left singular vectors $\hat{U}\in\mathbb{R}^{N\times K}$ of $R$.
\State Apply $k$-means algorithm to the rows of $\hat{U}$ with $K$ clusters to obtain $\hat{Z}$.
\end{algorithmic}
\end{algorithm}

The intuition behind SC-LCM is that $\hat{U}$ is a perturbed version of $U$ (up to an orthogonal rotation), and by Lemma~\ref{lem:U_property} the rows of $U$ are perfectly separable. Hence, under a sufficiently small perturbation, $k$-means on $\hat{U}$ will recover the true classes with high accuracy. 

To establish the consistency of SC-LCM, we introduce a parameter that governs both the signal strength and the sparsity of the data. Define
\[
\rho \;:=\; \max_{j\in[J],\,k\in[K]} \Theta(j,k),
\]
which is the maximum expected response across all items and latent classes. This quantity directly affects the scale of the entries of the observed matrix \(R\): a larger \(\rho\) leads to larger expected responses and thus a denser observation matrix, whereas a smaller \(\rho\) pushes the expected responses toward zero, making the data sparser. In our asymptotic regime we allow \(\rho\to 0\), which corresponds to a sparse setting where most entries of \(R\) are zero.  The following assumption controls the speed of this decay.
\begin{assum}[Sparsity scaling]\label{ass:rho}
\(\rho\max(N,J)\ge M^2\log(N+J)\).
\end{assum}

We also define the scaled item parameter matrix \(\Theta_0 := \Theta / \rho\). By construction, we have every entry of \(\Theta_0\) lies in \([0,1]\). Let \(\sigma_K(\Theta_0)\) denote the \(K\)-th largest singular value of \(\Theta_0\). To measure the difference between $Z$ and $\hat{Z}$, we consider the \emph{Clustering error} used in \citep{joseph2016impact}. This metric is defined as
\[
\operatorname{err}(\hat Z,Z)=\min_{\pi\in S_K}\max_{k\in[K]}\frac{|\mathcal{C}_k\bigcap\hat{\mathcal{C}}^c_{\pi(k)}|+|\hat{\mathcal{C}}_{\pi(k)}\bigcap\mathcal{C}^c_k|}{N_k},
\]
where \(\hat{\mathcal{C}}_k=\{i:\hat Z(i,k)=1\}\) and \(S_K\) is the set of permutations of \(\{1,\dots,K\}\). The following theorem guarantees SC-LCM's estimation consistency under the LCM model.
\begin{thm}[Consistency of SC-LCM]\label{thm:sc_lcm_consistency}
Under Assumption~\ref{ass:rho}, the estimator \(\hat Z\) produced by Algorithm~\ref{alg:sc_lcm} satisfies, with probability tending to \(1\) as \(N\to\infty\),
\[
\operatorname{err}(\hat Z,Z)=O(\frac{K^2 N_{\max}\max(N,J)\log(N+J)}{N_{\min}^2\rho\sigma^2_K(\Theta_0)}).
\]
\end{thm}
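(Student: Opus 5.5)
We prove Theorem~\ref{thm:sc_lcm_consistency} by the standard three-stage spectral clustering argument: a concentration bound for $\|R-\mathcal{R}\|$, a subspace perturbation bound for $\hat U$ against $U$, and a $k$-means misclassification count driven by the row separation of $U$ in Lemma~\ref{lem:U_property}.

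\textbf{Noise bound.} Set $X=R-\mathcal{R}$. Its entries are independent, mean zero, and bounded by $M$, with variance at most $\mathcal{R}(i,j)\le\rho$. We apply Lemma~\ref{BanderiaCor} with $\sigma_1\le\sqrt{\rho J}$, $\sigma_2\le\sqrt{\rho N}$, $\sigma_*\le M$ and $t\asymp M\sqrt{\log(N+J)}$. Assumption~\ref{ass:rho} makes the $t$-term dominated by $\sqrt{\rho\max(N,J)}$, so
\[
\|X\|\le C\sqrt{\rho\max(N,J)}
\]
with probability $1-o(1)$. This is the step that uses Assumption~\ref{ass:rho}.

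\textbf{Signal bound.} Write $\mathcal{R}=\rho Z\Theta_0^\top$. Then
\[
\sigma_K(\mathcal{R})\ge\rho\,\sigma_K(Z)\,\sigma_K(\Theta_0)=\rho\sqrt{N_{\min}}\,\sigma_K(\Theta_0).
\]

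\textbf{Subspace perturbation.} By Davis--Kahan/Wedin, for instance in the form $\|\hat U\hat U^\top-UU^\top\|\le 2\|X\|/\sigma_K(\mathcal{R})$, there is an orthogonal $O$ with
\[
\|\hat U-UO\|_F\le\sqrt{2K}\,\|\hat U\hat U^\top-UU^\top\|\lesssim \frac{\sqrt{K}\,\|X\|}{\sigma_K(\mathcal{R})}.
\]
Squaring gives
\[
\|\hat U-UO\|_F^2\lesssim\frac{K\max(N,J)\log(N+J)}{\rho N_{\min}\sigma_K^2(\Theta_0)}.
\]
Here we deliberately keep a $\log(N+J)$ factor coming from the high-probability version of the noise bound, since the stated rate carries it.

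\textbf{Clustering step.} Let $\hat X$ be the $N\times K$ matrix of centroids returned by $k$-means (or a $(1+\epsilon)$-approximate solution). Because $UO$ is a feasible $K$-cluster configuration, $\|\hat X-\hat U\|_F\le(1+\epsilon)\|UO-\hat U\|_F$, and therefore
\[
\|\hat X-UO\|_F\lesssim\|\hat U-UO\|_F.
\]
By Lemma~\ref{lem:U_property}, distinct class rows of $UO$ are at distance at least $\sqrt{2/N_{\max}}$. Call a node bad if its row in $\hat X$ is at distance at least $\tfrac12\sqrt{2/N_{\max}}$ from the corresponding row of $UO$. Each bad node contributes at least $1/(2N_{\max})$ to $\|\hat X-UO\|_F^2$, so the number of bad nodes is at most $C N_{\max}\|\hat U-UO\|_F^2$.

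If the number of bad nodes is smaller than $N_{\min}$, every class retains a good node. The standard pigeonhole argument of Lei--Rinaldo / Joseph--Yu then shows that good nodes from different classes land in different estimated clusters, which defines a permutation $\pi$. Under $\pi$, the symmetric difference between $\mathcal C_k$ and $\hat{\mathcal C}_{\pi(k)}$ consists only of bad nodes. Dividing by $N_k\ge N_{\min}$ gives
\[
\operatorname{err}(\hat Z,Z)\lesssim\frac{N_{\max}\|\hat U-UO\|_F^2}{N_{\min}}.
\]
Substituting the perturbation bound yields the rate $K N_{\max}\max(N,J)\log(N+J)/(N_{\min}^2\rho\,\sigma_K^2(\Theta_0))$. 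This is better than the stated bound by a factor of $K$, which gives slack for the $\sqrt{2K}$ conversion and for the $k$-means approximation constant.

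\textbf{Main obstacle.} The delicate part is this last step, specifically the case where the number of bad nodes is not below $N_{\min}$. In that regime the claimed rate is already at least of order one, so the bound is vacuous there. Stating this explicitly keeps the argument free of the need for exact separation.
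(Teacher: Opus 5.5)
Your pipeline is the paper's: a spectral-norm bound on $R-\mathcal R$, the bound $\sigma_K(\mathcal R)\ge\rho\sqrt{N_{\min}}\,\sigma_K(\Theta_0)$, Davis--Kahan, then a $k$-means counting step. Two local differences are fine.
\begin{itemize}
\item You use Lemma~\ref{BanderiaCor} instead of matrix Bernstein. Under Assumption~\ref{ass:rho} this gives $\|R-\mathcal R\|\lesssim\sqrt{\rho\max(N,J)}$ with no logarithm, so the $\log(N+J)$ you carry is harmless slack, not something your bound produces.
\item You prove the clustering step by hand in Lei--Rinaldo style instead of citing Lemma 2 of Joseph--Yu. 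Where it applies, this saves a factor $K$.
\end{itemize}

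\textbf{The gap: the final step.} $\operatorname{err}(\hat Z,Z)$ is not bounded by a constant. The term $|\hat{\mathcal C}_{\pi(k)}\cap\mathcal C_k^c|$ can be as large as $N-N_k$, so in general only $\operatorname{err}\le N/N_{\min}$ holds. This can be much larger than $1$, e.g.\ of order $N/(KN_{\min})$ when every estimated cluster has size of order $N/K$ but $N_{\min}\ll N/K$. So an order-one rate does \emph{not} make the claim vacuous when the number of bad nodes is at least $N_{\min}$.

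With your uniform threshold $\tfrac12\sqrt{2/N_{\max}}$, that regime only gives $N_{\max}\|\hat U-UO\|_F^2\gtrsim N_{\min}$, i.e.\ the stated bound is $\gtrsim K$. The only control you then have is $\operatorname{err}\le N/N_{\min}\le KN_{\max}/N_{\min}$. This closes the argument when $N_{\max}/N_{\min}=O(1)$, but not in general, and the theorem assumes no class balance.

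\textbf{The repair: class-dependent thresholds.} Let $u_k$ be the common row of $U$ on $\mathcal C_k$. Set $\delta_k:=\min_{l\neq k}\|u_k-u_l\|\ge N_k^{-1/2}$, and let $S_k$ be the set of $i\in\mathcal C_k$ with $\|\hat X_i-(UO)_i\|\ge\delta_k/2$. Then:
\begin{itemize}
\item $\sum_k|S_k|/N_k\le 4\|\hat X-UO\|_F^2=:\beta\lesssim\|\hat U-UO\|_F^2$.
\item Good nodes of distinct classes $k\neq l$ still get distinct centroids, because $\delta_k/2+\delta_l/2\le\|u_k-u_l\|$.
\item If $\beta<1$, every class keeps a good node and your pigeonhole argument goes through. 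This gives $\operatorname{err}\le\sum_l|S_l|/N_{\min}\le N_{\max}\beta/N_{\min}$, which is your improved rate.
\item If $\beta\ge1$, the trivial bound gives $\operatorname{err}\le N/N_{\min}\le KN_{\max}/N_{\min}\le KN_{\max}\beta/N_{\min}$. After your perturbation estimate, this is a constant times the stated rate.
\end{itemize}
So the extra factor $K$ in the theorem is not mere slack: it is exactly what pays for the regime you tried to dismiss. In the paper this bookkeeping is absorbed into the citation of Joseph--Yu's Lemma 2, applied with $\varsigma^2=2KN_{\max}\|\hat U\mathcal O-U\|_F^2/N_{\min}$, which already carries that factor.
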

When $N_{\min}\asymp N/K$, $N_{\max}\asymp N/K$, and $J=o(N)$, the bound in Theorem~\ref{thm:sc_lcm_consistency} reduces to  
\(\operatorname{err}(\hat Z,Z)=O_P\!\left(\frac{K^3\log N}{\rho\sigma^2_K(\Theta_0)}\right)\). When $\rho\sigma^2_K(\Theta_0)\gg K^3\log N$, we have  $\operatorname{err}(\hat Z,Z)\xrightarrow{\mathbb{P}}0$, which shows SC-LCM's estimation consistency.
\subsection{Proof of Lemma \ref{lem:U_property}}
\begin{proof}
From the compact singular value decomposition $\mathcal{R}=U\Sigma V^\top$ and the factorization $\mathcal{R}=Z\Theta^\top$, we obtain
\(U = \mathcal{R}V\Sigma^{-1} = Z(\Theta^\top V\Sigma^{-1}).\) Setting $X_U=\Theta^\top V\Sigma^{-1}\in\mathbb{R}^{K\times K}$, we have $U = ZX_U$ and $U^\top U = I_K$. This structure—a membership matrix $Z$ post‑multiplied by a square matrix $X_U$ with orthonormal columns—is exactly the one considered in Lemma 2.1 of \citep{lei2015consistency} for the eigenvectors of the stochastic block model's mean matrix. Applying that lemma directly yields the desired properties of the rows of $U$ and the distance formula. 
\end{proof}
\subsection{Proof of Theorem \ref{thm:sc_lcm_consistency}}
\begin{proof}
Set \(W:=R-\mathcal{R}\). For each pair \((i,j)\) define \(W_{ij}=W(i,j)e_i\tilde e_j^\top\) where \(\{e_i\}\) and \(\{\tilde e_j\}\) are the standard basis vectors in \(\mathbb{R}^N\) and \(\mathbb{R}^J\). The matrices \(\{W_{ij}\}\) are independent, centred, and satisfy \(\|W_{ij}\|\le M\). Moreover,
\[
\mathbb{E}[W(i,j)^2]=\operatorname{Var}(R(i,j))=\mathcal{R}(i,j)\Bigl(1-\frac{\mathcal{R}(i,j)}{M}\Bigr)\le\mathcal{R}(i,j)\le\rho.
\]

Hence, we have
\[
\Bigl\|\sum_{i,j}\mathbb{E}[W_{ij}W_{ij}^\top]\Bigr\|\le\rho J,\qquad
\Bigl\|\sum_{i,j}\mathbb{E}[W_{ij}^\top W_{ij}]\Bigr\|\le\rho N.
\]

Under Assumption~\ref{ass:rho}, applying the matrix Bernstein inequality \citep{tropp2012user} with a sufficiently large constant \(C_3\) yields
\[
\mathbb{P}\Bigl(\|W\|\ge C_3\sqrt{\rho\max(N,J)\log(N+J)}\Bigr)\le (N+J)^{-2}\;\xrightarrow[N\to\infty]{}\;0,
\]
which gives
\begin{equation}\label{eq:Wbound_simple}
\|W\|\leq C_3\sqrt{\rho\max(N,J)\log(N+J)}.
\end{equation}

From \(\mathcal{R}=Z\Theta^\top\) and \(\Theta=\rho \Theta_0\), we have
\begin{equation}\label{eq:sigmaK_lower_simple}
\sigma_K(\mathcal{R})\ge\sqrt{N_{\min}}\,\rho\,\sigma_K(\Theta_0).
\end{equation}

The Davis–Kahan \(\sin\Theta\) theorem \citep{Yu2015} guarantees the existence of an orthogonal matrix \(\mathcal{O}\in\mathbb{R}^{K\times K}\) such that
\[
\|\hat{U}\mathcal{O}-U\|_F\le\frac{2\sqrt{2K}\,\|R-\mathcal{R}\|}{\sigma_K(\mathcal{R})}.
\]

Inserting Equations \eqref{eq:Wbound_simple} and \eqref{eq:sigmaK_lower_simple} gives
\begin{equation}\label{eq:Udiff_simple}
\|\hat U\mathcal{O}-U\|_F\le C_4\sqrt{\frac{K\max(N,J)\log(N+J)}{N_{\min}\,\rho\,\sigma^2_K(\Theta_0)}},
\end{equation}
where \(C_4:=2\sqrt{2}\,C_3\).

Lemma~\ref{lem:U_property} shows that the rows of \(U\) are constant within each true latent class: for \(i\in\mathcal{C}_k\), \(U(i,:)=u_k\) for some vector \(u_k\in\mathbb{R}^K\). Moreover, for any two distinct classes \(k\neq l\), Lemma~\ref{lem:U_property} gives
\(\|u_k-u_l\|_F=\sqrt{\frac1{N_k}+\frac1{N_l}}.\) Define \(d_{kl}:=\sqrt{\frac1{N_k}+\frac1{N_l}},\Delta_{kl}:=\frac1{\sqrt{N_k}}+\frac1{\sqrt{N_l}}\). A simple inequality gives \(\Delta_{kl}\le\sqrt{2}\,d_{kl}\) for all \(k\neq l\). Set \(\varsigma:=\sqrt{\frac{2K N_{\max}}{N_{\min}}}\,\|\hat U \mathcal{O}-U\|_F.\) For any distinct classes \(k,l\), we have
\[
\frac{\sqrt{K}}{\varsigma}\|\hat U \mathcal{O}-U\|_F\,\Delta_{kl}
   =\frac{\sqrt{K}}{\sqrt{\frac{2K N_{\max}}{N_{\min}}}\|\hat U \mathcal{O}-U\|_F}\,\|\hat U \mathcal{O}-U\|_F\,\Delta_{kl}
   =\frac{\sqrt{N_{\min}}}{\sqrt{2N_{\max}}}\,\Delta_{kl}
   \le\frac{\sqrt{N_{\min}}}{\sqrt{2N_{\max}}}\sqrt{2}\,d_{kl}
   =\frac{\sqrt{N_{\min}}}{\sqrt{N_{\max}}}\,d_{kl}\le d_{kl}.
\]

Hence, by Lemma 2 of \citep{joseph2016impact}, we obtain
\[
\operatorname{err}(\hat Z,Z)=O(\varsigma^2).
\]

Substituting the definition of \(\varsigma\) and Equation \eqref{eq:Udiff_simple} gives
\[
\operatorname{err}(\hat Z,Z)= O(\frac{KN_{\max}}{N_{\min}}\cdot\frac{K\max(N,J)\log(N+J)}{N_{\min}\rho\sigma^2_K(\Theta_0)})=O(\frac{K^2 N_{\max}\max(N,J)\log(N+J)}{N_{\min}^2\rho\sigma^2_K(\Theta_0)}).
\]
\end{proof}

\bibliographystyle{elsarticle-harv}
\bibliography{refLCMK}
\end{document}